\documentclass[12pt,final]{msml2020} 

\usepackage{bbm}

\usepackage{graphicx}
\usepackage{hyperref}
\usepackage{comment} 
\usepackage{color}
\usepackage{mathtools}
\usepackage{multicol}
\usepackage{footmisc}

\title[Gated RNNs]{Gating creates slow modes and controls phase-space complexity in GRUs and LSTMs}
\usepackage{times}

\newcommand*\samethanks[1][\value{footnote}]{\footnotemark[#1]}

\msmlauthor{
\Name{Tankut Can} \thanks{ TC \& KK contributed equally and listed
 alphabetically}\thanks{Corresponding authors: tankut.can@gmail.com and kameshk@princeton.edu } \\
 \addr Initiative for the Theoretical Sciences, CUNY Graduate Center
 \AND
 \Name{Kamesh {Krishnamurthy}} \samethanks[1]\samethanks[2]\\
 \addr Joseph Henry Laboratories of Physics and PNI, Princeton University
  \AND
 \Name{David J. {Schwab}} \\
 \addr Initiative for the Theoretical Sciences, CUNY Graduate Center
}

\begin{document}

\maketitle

\begin{abstract}%
Recurrent neural networks (RNNs) are powerful dynamical models for  data with complex temporal structure. 
However, training RNNs has traditionally proved challenging due to exploding or vanishing of gradients. RNN models such as LSTMs and GRUs (and their variants) significantly mitigate these 
issues associated with training by introducing various types of {\it gating} units into the architecture.
While these gates empirically improve performance, how the addition of gates influences the dynamics and trainability of GRUs and LSTMs is not well understood. Here, we take the perspective of studying randomly initialized LSTMs and GRUs as dynamical systems, and ask how the salient dynamical properties are shaped by the gates. 
We leverage tools from random matrix theory and mean-field theory to study the state-to-state Jacobians of GRUs and LSTMs. We show that the update gate in the GRU and the forget gate in the LSTM can lead to an accumulation of slow modes in the dynamics. Moreover, the GRU update gate  can poise the system at a marginally stable point. The reset gate in the GRU and the output and input gates in the LSTM control the spectral radius of the Jacobian, and the GRU reset gate also modulates the complexity of the landscape of fixed-points. Furthermore, for 
the GRU we obtain a phase diagram describing the statistical properties of fixed-points. We also provide a preliminary comparison of training performance to the various dynamical regimes realized by varying hyperparameters. Looking to the future, we have introduced a powerful set of techniques which can be adapted to a broad class of RNNs, to study the influence of various architectural choices on dynamics, and potentially motivate the principled discovery of novel architectures.

\end{abstract}

\begin{keywords}%
  RNN, GRU, LSTM, RMT, MFT
\end{keywords}

\section{Introduction}

Recurrent neural networks (RNNs) are a powerful class of dynamical systems that can implement a complex array of transformations between time-varying inputs and target outputs. These networks 
have proved to be highly effective tools in learning tasks involving data with complex temporal structure. However, RNNs in the form they were initially proposed 
are challenging to train due to the so-called exploding and vanishing gradients problem \cite{Bengio1994,hochreiter2001field,Pascanu2013}. 

More sophisticated RNN models such as long short-term memory networks (LSTMs)  \cite{Hochreiter1997} and gated recurrent units (GRUs) \cite{Cho2014} that feature
some form of gating exhibit significantly improved trainability. Empirically, these network models also achieve 
significant improvements over traditional RNNs in areas such
as language modeling \cite{Kiros2015}, speech recognition \cite{graves2013}, and
neural machine translation \cite{Cho2014,Sutskever2014,Bahdanau2014}. Thus gating seems
to robustly improve the performance of RNNs.

In addition to improved trainability, gating likely has a significant influence on the {\it dynamics}
of LSTMs and GRUs. However, the various architectural choices for the gates are 
somewhat {\it ad hoc}, and the precise nature of how the gates influence the dynamics 
and trainability of the network is not well understood. In this work, we take the approach of theoretically studying  LSTMs and GRUs 
 as dynamical systems and characterize how the gates shape the most salient aspects of the dynamics. 
 We use techniques from random matrix theory and mean-field theory to analytically characterize 
 the spectrum of the state-to-state Jacobian, and we study how each gate shapes the
features of the spectra. Our formalism properly accounts for the highly recurrent nature of 
 these systems which can lead to very different behavior from the feed-forward case.  In the GRU, the two gate types produce very distinct effects:
 the update gate can lead to a clumping of Jacobian eigenvalues near unity, facilitating a proliferation of long timescales and  {\it marginal stability}, whereas the reset gate influences the spectral radius and controls the topological complexity of phase space. We use the theory to develop a phase diagram for the GRU
 that characterizes the transition between different dynamical behaviors and the statistical properties of fixed points. For the LSTM, our spectral analysis of the Jacobian reveals the distinct roles of each gate: the forget gate primarily influences the accumulation of eigenvalues near unity, whereas all gates influence the spectral radius, though in differing degree. Finally, we provide a preliminary comparison of 
 training on a sequential task in the various dynamical regimes.


\section{Prior work}

Recent work \cite{Chen2018mft} has focused on characterizing the role of gating in conditioning
the Jacobian of a minimally gated model under the assumption that the weights are independent
from one time step to the other; this untied weight assumption is akin to studying a feed-forward gated network. The focus in 
that work was more on selecting good parameter initialization values for constraining the second 
moment of the Jacobian singular values, which improves trainability. 
This approach of using independent weights at each time step was later extended to LSTMs and GRUs \cite{Gilboa2019}. More recently \cite{Park2019arxiv} have looked at characterising GRUs as dynamical systems, but the focus is on small (two-dimensional) systems. \cite{tallec2018chrono} have argued that the update gates in GRUs can help with dealing with time warping. 

\cite{Lee2018,schoenholz2016deep,pennington2017resurrecting} have used random matrix theory (RMT) to characterize signal 
propagation in deep feedforward networks at initialization with independent weights in each layer. 
Seminal work by \cite{derrida1986random} and \cite{Sompolinsky1988a} used mean-field theory to study dynamics of randomly
connected recurrent networks without gating -- i.e. purely additive interactions. Subsequent work
\cite{Stern2014,Aljadeff2015chaos,Aljadeff2015evals,Marti2017, Ahmadian2015}  has extended this analysis using RMT to the case when the weight matrix has a  correlational structure. 

In the remainder of the paper, we first describe the gated networks we consider. Next we formulate
the problem of studying the spectrum of the state-to-state Jacobian as a random matrix problem. 
We then describe how the gates shape the short-term dynamics for GRUs and LSTMs. In Appendix \ref{app:train_seq_MNIST}, we provide preliminary results on training for a sequential 
task in the various dynamical regimes.

\section{Problem setup}

The `vanilla' RNN has no form of gating, and is described by a discrete time dynamical equation
\begin{align}
\mathbf{h}_{t}=\phi\left(U_{h} \mathbf{h}_{t-1}+ W_{x} \mathbf{x}_{t}+\mathbf{b}\right),
\end{align}
where $U_{h} \in \mathbb{R}^{N \times N}, W_{x} \in \mathbb{R}^{N \times N_{in}}, \mathbf{b} \in \mathbb{R}^{N}$,
$\mathbf{x}_t\in \mathbbm{R}^{N_{in}}$ is the input and $\mathbf{h}_t\in \mathbbm{R}^{N}$ is the internal (hidden) state of the RNN at the (integer) time step $t$. The nonlinear activation function $\phi$ is commonly taken to be $\tanh$, which will be our choice unless otherwise stated. 
These RNNs have been successful at learning sequential tasks but are notoriously hard to train due 
to the problem of exploding/vanishing gradients. Recently, LSTMs and GRUs were introduced to mitigate 
this issue by augmenting the vanilla RNN with gates that control information flow.  

For the GRU, in addition to the hidden state variables ${\bf h}_{t}$, there 
are two additional dynamical gating variables: an update gate ${\bf z}_{t}$ and a reset gate ${\bf r}_{t}$ which both take values ${\bf z}_{t}, {\bf r}_{t} \in (0,1)^{N}$. The dynamics 
of the GRU is given by 
\begin{align}
&{\bf z}_{t}   = \sigma \left(U_{z} {\bf h}_{t-1} + W_{z} {\bf x}_{t}+ {\bf b}_z\right), \quad {\rm update} \label{eq:gru_update}\\
&{\bf r}_{t}  = \sigma \left( U_{r} {\bf h}_{t-1}  + W_{r} {\bf x}_{t}+  {\bf b}_r\right), \quad {\rm reset}\label{eq:gru_reset}\\
&{\bf y}_{t}  =   U_h ( {\bf r}_{t} \odot {\bf h}_{t-1}) + W_{h} {\bf x}_{t}+ {\bf b}_{h},\\
&{\bf h}_{t} = {\bf z}_{t} \odot {\bf h}_{t-1} + (1 - {\bf z}_{t}) \odot \phi({\bf y}_{t}).	\label{eq:gru_eom}
\end{align}
where $\odot$ denotes element-wise product. 

Next, we consider LSTM networks introduced by  \cite{Hochreiter1997,Gers1999}. LSTMs have three 
gates (input ${\bf i}_{t}$, forget ${\bf f}_{t}$ and output ${\bf o}_{t}$), and their state is characterized by a cell state 
${\bf c}_{t}\in \mathbbm{R}^{N}$ and the hidden state ${\bf h}_{t}\in \mathbbm{R}^{N}$.  The LSTM dynamics is given by

\begin{multicols}{2}
\noindent
\begin{align*}
   \quad \quad  {\bf f}_t &= \sigma(  U_{f} {\bf h}_{t-1}+ W_{f} {\bf x}_{t} + {\bf b}_{f}) \quad {\rm forget}\\
   \quad \quad  {\bf o}_t & = \sigma(  U_{o} {\bf h}_{t-1} +W_{o} {\bf x}_{t}+ {\bf b}_{o}) \quad {\rm output}\\
{\bf c}_{t}  & = {\bf f}_{t} \odot {\bf c}_{t-1} + {\bf i}_{t} \odot \phi ( {\bf y}_{t}) 
\end{align*}
\begin{align}
    {\bf i}_t & = \sigma(  U_{i} {\bf h}_{t-1}+W_{i} {\bf x}_{t} + {\bf b}_{i}) \quad {\rm input} \label{eq:lstm_line1}\\
  {\bf y}_{t}  &= U_{h} {\bf h}_{t-1} +W_{h} {\bf x}_{t}+ {\bf b}_h \label{eq:lstm_line2}\\
   {\bf h}_{t} & = {\bf o}_{t} \odot \phi({\bf c}_{t}). 
\label{eq:lstm_eom}
\end{align}
\end{multicols}
We use the common choice of the sigmoid $\sigma(x) = (1 + e^{-x})^{-1}$ for the gating nonlinearity in both cases. 
We will study the dynamics of these models upon random parameter initialization -- i.e. the elements of the connectivity matrices and the bias vectors are assumed to be gaussian random variables, appropriately scaled by system size $(U_{k})_{ij}\sim \mathcal{N}(0, a_{k}^{2}/N)$, $({\bf b}_{k})_{i} \sim \mathcal{N}(b_{k}, v_{k})$, for $k \in \{ z, r, h\}$ (GRU) and $k \in \{ f, o, i,h\}$ (LSTM). In most of what follows, we assume the bias variance $v_{k} = 0$, unless otherwise explicitly stated. 

Before we proceed with the analysis, we note that just from these dynamical equations  one can understand simple intuitive features of each gate's effect.
For example, in the GRU the update gate (\ref{eq:gru_update}) controls the amount of {\it leak} and can thus potentially slow down the mixing of the inputs (as ${\bf z}_{t} \to 1$) due to self-coupling, thereby modulating memory in the network. The forget and input gate (\ref{eq:lstm_line1}) would appear to have analogous roles for the LSTM. The GRU reset gate (\ref{eq:gru_reset}) modulates (column-wise) the strength of the connectivity matrix $U_{h}$. One important consequence of this is that the reset gate has the power to change the landscape of dynamical fixed points, which is determined by $U_{h}$. The output gate (\ref{eq:lstm_line2}) appears to have an analogous role in modulating $U_{h}$ in the LSTM architecture. We will elaborate on these points below and provide quantitative understanding to these qualitative features. 

 Finally, we do not directly consider the effects of the input. However, nonzero bias is equivalent to the case in which the input is {\it constant} in time. We comment on the effects of the bias throughout the text.

\paragraph{Notation} 
We use ${\bf \hat{v}}$ to denote the diagonal matrix whose entries are given by the elements of the vector ${\bf v}$. The gate vectors are denoted ${\bf k}_{t} = \sigma( {\bf x}_{k})$ for ${\rm k} \in \{ z, r\}$ (GRU) and ${\rm k} \in \{ f, o, i\}$ (LSTM), and ${\bf x}_{k}$ is the appropriate argument of the sigmoid defined for GRU in (\ref{eq:gru_update}- \ref{eq:gru_reset}) and for LSTM in (\ref{eq:lstm_line1}- \ref{eq:lstm_line2}).
We use prime to denote differentiation, with ${\bf k}_{t}' = \sigma'({\bf x}_{k})$, where $\sigma'(x) = \sigma(x)(1 - \sigma(x))$, and similarly for $\phi'({\bf x})$. We denote the autocorrelation functions as $C_{x}(t, t') = \mathbbm{E}\left[ x_{t} x_{t'}\right]$, with the subscript indicating the variable and arguments $(t,t')$ indicating time. For quenched $U_k$, the expectation is understood as an average over neurons. In the mean-field limit, this is equivalent to an average over the effective stochastic variables. Finally, the asymptotic notation $O(\cdot)$ and $\Theta(\cdot)$ have the standard definitions.

\section{Spectral theory for the GRU}

In this section, we focus on the GRU. First, we develop a mean-field theory (MFT) for the GRU and calculate the state-to-state Jacobian with parameters drawn 
at initialization. The Jacobians are an architecture-dependent combination 
of structured and random matrices, and thus we can use tools from random matrix theory, combined with the MFT, to 
elucidate how the spectrum of the Jacobian is shaped by the gates.
The eigenvalues of the Jacobian will in principle depend on the particular realization of the connectivity matrices. However, for large random networks, due to self-averaging, the spectrum in fact only depends on the statistics of the state variables, and we  develop a self-consistent mean-field theory taking into account recurrent dynamics to calculate these statistics.

\subsection{Mean-field theory for the GRU}
We first develop a mean-field theory (MFT) for the GRU 
which gives valuable insight into the structure of the dynamical phase space as the model parameters are varied. First, the theory provides insight into the structure of correlation functions, which is required in the calculation of the instantaneous Jacobian spectrum. Second, it is useful in analyzing the structure of fixed-points (FPs) of the dynamics. 
The MFT replaces the $N$ dimensional update equation for the GRU  with a single stochastic difference equation. Specifically, we can approximate the following terms as Gaussian processes: 
\begin{align}
\left(U_{z} {\bf h}_{t} + {\bf b}_{z}\right)_{i} \sim \zeta_{t}, \quad
\left(U_{r} {\bf h}_{t} + {\bf b}_{r}\right)_{i} \sim \xi_{t}, \quad 
\left(U_{h} ( {\bf r}_{t}\odot {\bf h}_{t-1} ) + {\bf b}_{h}\right)_{i} \sim y_{t},
\end{align}
where $\zeta_{t}$, $\xi_{t}$ and $y_{t}$  are independent Gaussian processes specified
by their correlation functions which must be solved for self-consistently. 
From this, one can obtain a deterministic evolution equation for the correlation functions
of various dynamical variables, e.g. $C_{h}(t,t'):= \mathbbm{E}[ h_{t} h_{t'}]$. The details of the derivation
and the equations for the correlation functions in the general case are 
provided in Appendix \ref{app:gru-dmft}, and here, as an example, we  provide a summary 
of the MFT results for FPs (i.e. time-independent solution of the dynamics).

 At a FP, the MFT solution to the correlation functions are given by the following implicit equations: 
\begin{align}
C_{h} &=\int Dx \left(\phi(x\sqrt{a_{h}^{2}C_{y}+ v_{h}} + b_{h})\right)^{2},\quad C_{y} = C_{h} \int Dx \left(\sigma(x\sqrt{a_{r}^{2}C_{h} + v_{r}}  + b_{r})\right)^{2},\label{eq:FP1}
\end{align}
where $Dx = dx \exp(- x^{2}/2)/\sqrt{2\pi}$ is the Gaussian measure. We use perturbation theory to find solutions in Sec.(\ref{sec:fp-phase}), and map out a phase diagram in Fig.(\ref{fig:fp-phase}) indicating topologicaly distinct regions of phase space. 

Note that the update gate does not influence the fixed point solutions. This fact is apparent even at the level of  Eq. (\ref{eq:gru_eom}). However, as we shall see, the update gate can strongly 
affect the response to perturbations around the fixed points. 
Unlike the fixed point solutions above, the MFT solution for the correlation function in a general time-dependent state are obtained by solving a difference equation (see Appendix \ref{app:gru-dmft}). We will use the 
solutions for the correlation functions from the MFT in our RMT analysis of the state-to-state Jacobian.

\subsection{Spectral support for the GRU Jacobian}
The spectrum of the Jacobian provides valuable insight into
the instantaneous dynamics of the network. We wish to delineate the role of gates by determining how the choice of gates and parameters shapes the instantaneous Jacobian spectrum. To this end, we study the eigenvalues of the Jacobian of randomly initialized GRUs.
Our first result is a formula expressing the boundary of the spectral support, which we refer to as the {\it spectral curve}, in terms of expected values of functions of the dynamical state variables. 
From a theoretical perspective, this allows us to ascertain precisely how various architectural choices affect the spectrum. We use the method of hermitian reduction \cite{feinberg1997non} combined with the linearization trick from free probability theory \cite{Belinschi2018}, which are necessary ingredients to study the highly structured random non-Hermitian Jacobian. Our implementation of these techniques to the case of GRU and LSTM Jacobian spectra is described in detail in Appendix (\ref{app:gru-rmt}).

The state-to-state Jacobian for the GRU is given by
\begin{align}
{\bf J}_{t} = \frac{\partial {\bf h}_{t}}{\partial {\bf h}_{t-1}}	 = {\bf \hat{z}}_{t} + (\mathbbm{1}_{N} - {\bf \hat{z}}_{t}) \hat{\phi}'({\bf y}_{t}) U_{h}\left(  {\bf \hat{r}}_{t} +  {\bf \hat{h}}_{t-1} {\bf \hat{r}'}_{t} U_{r}\right) + \left({\bf \hat{h}}_{t-1} - \hat{\phi}({\bf y}_{t})\right) {\bf \hat{z}'}_{t} U_{z}.\label{eq:jac}
\end{align}

To study the spectrum of the Jacobian using random matrix techniques, the starting point 
is the resolvent $(\lambda \mathbbm{1}_{N} - {\bf J}_t)^{-1}$. From this, the 
spectral density is given by 
\begin{align}
\mu(\lambda) =\frac{1}{\pi} \frac{\partial}{\partial \bar{\lambda}}
\mathbbm{E}\left[  \frac{1}{N}\operatorname{tr}\left[(\lambda \mathbbm{1}_{N}-{\bf J}_t)^{-1}\right]  \right], \label{eq:density_resolvent}
\end{align}
where the expectation is over the random weight matrices. The analysis proceeds by considering the Jacobian for an enlarged space $({\bf h}_{t}, {\bf r}_{t}, {\bf z}_{t})$, 
\begin{align}
{\bf M}_{t} = \left( \begin{array}{ccc}
 {\bf \hat{z}}_t + (\mathbbm{1}_{N} - {\bf \hat z}_{t} ) \hat\phi'({\bf y}_{t}) U_{h} {\bf \hat r}_{t} & (\mathbbm{1} - {\bf \hat z}_{t} ) \hat\phi'({\bf y}_{t}) U_{h} {\bf \hat h}_{t-1} & {\bf \hat h}_{t-1} - \hat\phi({\bf y}_{t})\\
 {\bf \hat{r}}_{t}' U_{r} & 0 & 0\\
 {\bf \hat z}_{t}' U_{z} & 0 & 0
 \end{array}\right).	\label{eq:jac-M}
\end{align}

More precisely, ${\bf M}_{t}$ describes the linear dynamics:  $(\delta {\bf h}_{t+1}, \delta {\bf r}_{t}, \delta {\bf z}_{t})^{T} = {\bf M}_{t} (\delta {\bf h}_{t}, \delta {\bf r}_{t}, \delta {\bf z}_{t})^{T}$. The benefit of working with this representation is that each block is now linear in the Gaussian random weight matrices.\footnote{One might worry that, given the recurrent nature of the dynamics of the system of equations \ref{eq:gru_eom}, all the state variables will depend in some nonlinear fashion on the connectivity matrices, thus making ${\bf M}_{t}$ a complicated nonlinear function of the random variables. Fortunately, we can assume that  ${\bf h}_{t}$ and $U_{k}$ are independent. This assumption was initially made by \cite{Amari1972} and has been referred to as the ``local chaos hypothesis" \cite{Cessac1995} (see also \cite{Geman1982,Geman1982a}), where it was shown to hold for vanilla RNNs with asymmetric connectivity matrices.  Formally, this behavior may be established by the central limit theorem \cite{Geman1982,Geman1982a}. Our numerical experiments confirm that this assumption is borne out in the systems we study. 
} The only price we pay is that the eigenvalues of ${\bf J}_{t}$ must be found from a {\it generalized} eigenvalue problem ${\bf M}_{t} {\bf v} = {\bf I}_{\lambda} {\bf v}$ with the block diagonal matrix ${\bf I}_{\lambda} = {\rm bdiag} ( \lambda \mathbbm{1}_{N}, \mathbbm{1}_{N}, \mathbbm{1}_{N})$. The resolvent for this expanded Jacobian is given by 
\begin{align}
{\bf G}(\lambda) = \left( {\bf I}_{\lambda} - {\bf M}_{t}\right)^{-1},
\end{align}
which will be a $3N \times 3N$ matrix, whose first $N\times N$ block $(\lambda \mathbbm{1}_{N} - {\bf J}_{t})^{-1} = {\bf G}_{11}$ is the resolvent of the Jacobian, the object of interest. Since ${\bf J}_{t}$ is non-hermitian, we use the method of hermitian reduction to study the spectrum (details are in Appendix (\ref{app:gru-rmt}) ).

A major result of this analysis is  an expression for the spectral support of ${\bf J}_{t}$   in terms of 
the parameters $a_z,a_r,a_h$ and correlation functions of the dynamical variables in the mean field theory $(z_t=\sigma(\zeta_{t-1}),r_t = \sigma(\xi_{t-1}), h_{t-1},y_t)$, which we state below:

\begin{theorem}[ Spectral Support for GRU]\label{thm:GRU-spec-curve}
The support of the eigenvalue distribution of ${\bf J}_{t}$ in the limit of large $N$ is given by
$ \Sigma({\bf J}_{t}) :=  \left\{ \lambda \in \mathbbm{C}:  \quad \mathcal{S}(\lambda) \ge 0\right\}$
where 
\begin{align}
\mathcal{S}(\lambda)  =  	\rho_{t}^{2}\, \mathbbm{E}\left[ \frac{(1 - z_{t})^{2}}{|\lambda - z_{t}|^{2}} \right] 
& + a_{z}^{2}\,\mathbbm{E}\left[ \frac{(z_{t}')^{2}(h_{t-1} - \phi(y_t))^{2}}{|\lambda - z_{t}|^{2}} 
\right]-1 , \label{eq:spec-curve} \\ 
{\rm and  } \quad \rho_{t}^{2} = a_{h}^{2} C_{\phi'}(t, t) & \left[ C_{r}(t, t) + a_{r}^{2} C_{r'}(t,t) C_{h}(t-1,t-1)\right], 
\label{eq:spec-rad-shape-param}
\end{align}
and the boundary of the support defines the  spectral curve \: \:
$ \partial \Sigma({\bf J}_{t}):= \left\{ \lambda \in \mathbbm{C}: \quad \mathcal{S}(\lambda) = 0\right\}$.
\end{theorem}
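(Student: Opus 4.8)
The plan is to obtain the limiting spectral density \eqref{eq:density_resolvent} of the non-Hermitian $\mathbf{J}_t$ by hermitian reduction (Girko's trick): the density is recovered from the $\eta\to0^{+}$ limit of the averaged resolvent of the $2N\times2N$ Hermitian matrix built from $\lambda\mathbbm{1}_N-\mathbf{J}_t$ and its adjoint, regulated by $i\eta$. The obstruction is that $\mathbf{J}_t$ is a \emph{nonlinear} function of the Gaussian matrices $U_h,U_r,U_z$ (it contains products such as $U_h\mathbf{\hat h}_{t-1}\mathbf{\hat r}'_tU_r$), so one invokes the linearization trick: replace $\mathbf{J}_t$ by the enlarged $\mathbf{M}_t$ of \eqref{eq:jac-M}, each block of which is linear in the $U_k$, at the price of the generalized eigenvalue problem with $\mathbf{I}_\lambda$. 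Since the lower-right $2N\times2N$ sub-block of $\mathbf{I}_\lambda-\mathbf{M}_t$ is $\mathbbm{1}_{2N}$, a Schur complement recovers $(\lambda\mathbbm{1}_N-\mathbf{J}_t)^{-1}=\mathbf{G}_{11}(\lambda)$, so it suffices to compute the averaged resolvent of the hermitized, linearized pencil and read off the $(1,1)$ block.

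First I would set up the matrix Dyson equation for the averaged Hermitized resolvent. Because $U_h,U_r,U_z$ have i.i.d.\ Gaussian entries and enter $\mathbf{M}_t$ only through left/right multiplication by \emph{deterministic diagonal} gate matrices — here one invokes the local-chaos hypothesis of the footnote to treat $\mathbf{\hat z}_t,\mathbf{\hat r}_t,\mathbf{\hat z}'_t,\mathbf{\hat r}'_t,\hat\phi'(\mathbf{y}_t),\mathbf{\hat h}_{t-1}$ as independent of the $U_k$ — the self-energy is a linear covariance map of the resolvent that respects the neuron index. Consequently, to leading order in $N$, the self-consistent resolvent is block-diagonal in the neuron index with a small fiber depending only on the local gate values $(z_{t,i},r_{t,i},h_{t-1,i},y_{t,i})$, and closing the equation requires only a handful of scalar order parameters obtained by averaging the fibers over neurons; in the mean-field limit this is the expectation over the effective stochastic variables $z_t=\sigma(\zeta_{t-1}),\,r_t=\sigma(\xi_{t-1}),\,h_{t-1},\,y_t$ with the correlation functions $C_h,C_y$ (and $C_{\phi'},C_r,C_{r'}$) supplied by the MFT developed above.

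Next I would extract the support boundary. In the Hermitization, the edge of $\Sigma(\mathbf{J}_t)$ is exactly the curve where the off-diagonal (``anomalous'') order parameter coupling the two Hermitized copies bifurcates away from zero; linearizing the scalar self-consistency as that order parameter $\to0^{+}$ (equivalently, letting $\eta\to0$) collapses the system to the single equation $\mathcal{S}(\lambda)=0$. The two terms of $\mathcal{S}$ then have a transparent origin: the diagonal leak $\mathbf{\hat z}_t$ in $\mathbf{J}_t$ shifts $\lambda\mapsto\lambda-z_{t,i}$ neuron-wise, producing the $|\lambda-z_t|^{2}$ denominators; the $U_z$ block contributes variance $a_z^{2}(z'_{t,i})^{2}(h_{t-1,i}-\phi(y_{t,i}))^{2}$ to neuron $i$'s row, giving the second term; and the $U_h$--$U_r$ chain $(\mathbbm{1}-\mathbf{\hat z}_t)\hat\phi'(\mathbf{y}_t)U_h(\mathbf{\hat r}_t+\mathbf{\hat h}_{t-1}\mathbf{\hat r}'_tU_r)$ contributes effective variance $(1-z_{t,i})^{2}\rho_t^{2}$, where $\rho_t^{2}=a_h^{2}C_{\phi'}(t,t)\big[C_r(t,t)+a_r^{2}C_{r'}(t,t)C_h(t-1,t-1)\big]$ is precisely $a_h^{2}$ times the second moment of the $\phi'$ factor times the second moment of the reset-modified column vector $\mathbf{r}_t+\mathbf{\hat r}'_tU_r\mathbf{h}_{t-1}$. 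Since $U_h,U_r,U_z$ are independent these variances add inside the single expectation, yielding \eqref{eq:spec-curve}; that $\mathcal{S}(\lambda)\ge0$ is the filled support and $\mathcal{S}=0$ its boundary then follows from \eqref{eq:density_resolvent}, $\mathcal{S}\ge0$ being exactly the region where the regularized equation retains a nonvanishing anomalous solution as $\eta\to0$, i.e.\ where $\mu(\lambda)>0$.

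I expect the main obstacle to be bookkeeping the covariance structure of the self-energy correctly: $U_h$ appears in two different blocks of $\mathbf{M}_t$ — the random part of the $(1,1)$ block, as $U_h\mathbf{\hat r}_t$, and the $(1,2)$ block, as $U_h\mathbf{\hat h}_{t-1}$ — so those entries are correlated and the operator-valued Dyson equation does not factorize naively; one must carry the full $3\times3$ block covariance when forming the self-energy. A secondary technical point is the analytic justification that the support boundary coincides with the vanishing locus of the anomalous order parameter, which requires uniqueness/stability of the Dyson-equation solution and control of the $\eta\to0$ limit, together with the a priori justification (local chaos / CLT, as in the cited work) that the gate variables may be treated as independent of the connectivity at leading order in $N$ — the assumption underlying the entire reduction.
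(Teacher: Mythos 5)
Your proposal matches the paper's proof in both strategy and technical ingredients: linearization via the enlarged pencil $\mathbf{M}_t$ with decomposition $\mathbf{M}=\hat A+\hat B\hat U\hat C$, hermitian reduction, a self-consistent Born/Dyson equation keeping only the second cumulant, identification of the spectral boundary with the vanishing of the anomalous order parameters, and a final mean-field factorization to express $\rho_t^2$ through $C_{\phi'},C_r,C_{r'},C_h$. You also correctly flag the key bookkeeping subtlety — $U_h$ appears in both the $(1,1)$ and $(1,2)$ blocks of $\mathbf{M}_t$, so the self-energy must carry the full block covariance — which is precisely what the paper's $\mathcal{Q}$ superoperator does; the only detail your outline glosses over is that the paper tracks three scalar order parameters $(Q,X,Y)$ of which only $X,Y$ vanish at the boundary, while $Q$ stays finite and must be eliminated using the first Dyson equation before the boundary curve $\mathcal{S}(\lambda)=0$ emerges.
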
 \label{def:rho}

The equation for the spectral curve Eq.( \ref{eq:spec-curve}) involves equal-time correlation functions 
of various dynamical variables, and to obtain these values accurately we resort to our 
self-consistent MFT for the correlation functions. Note that assuming independent
weights at each time step  will give erroneous results. 
Fig. (\ref{fig:gru-spectrum} a - d) shows the empirical spectrum along
with the spectral curve (in red) for various values
of $a_z$ and $a_r$.

One simple case that is nonetheless insightful is the zero fixed-point. The Jacobian around the zero fixed-point can
be analysed to yield the density of eigenvalues:
\begin{proposition}[Eigenvalue density for zero fixed point]\label{prop:gru-zero-FP}
Around the zero fixed point ($h_{t} = 0$), $z_{t} = z = \sigma(b_{z})$, $r_{t} = r = \sigma(b_{r})$, and the eigenvalue density is
\begin{align}
\mu(\lambda) = \frac{1}{\pi (1 - z)^{2} r^{2} a_{h}^{2}}, \quad {\rm for}\,\,\, |\lambda - z|\le (1 - z) r a_{h},
\end{align}
and zero otherwise. 	
\end{proposition}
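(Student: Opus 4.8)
The plan is to evaluate the general Jacobian formula Eq.~(\ref{eq:jac}) at the zero fixed point, observe that it collapses to an affinely transformed Ginibre matrix, and then read off the density from the circular law.

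First I would substitute the fixed-point values. At $h_t = 0$ every state variable becomes deterministic: $\hat z_t = z\,\mathbbm 1_N$ and $\hat r_t = r\,\mathbbm 1_N$ with $z = \sigma(b_z)$, $r = \sigma(b_r)$; moreover at a fixed point $h = \phi(y)$ with $y = U_h({\bf r}\odot{\bf h}) + b_h = b_h$, so $h_t = 0$ is consistent precisely when $b_h = 0$, giving $y_t = 0$, hence $\hat\phi'(y_t) = \phi'(0)\,\mathbbm 1_N = \mathbbm 1_N$ and $\hat\phi(y_t) = 0$; finally $\hat h_{t-1} = 0$. Plugging these into Eq.~(\ref{eq:jac}): the last term $(\hat h_{t-1} - \hat\phi(y_t))\,\hat z_t'\,U_z$ vanishes, the subterm $\hat h_{t-1}\,\hat r_t'\,U_r$ inside the middle term vanishes, and what survives is simply $\mathbf J_t = z\,\mathbbm 1_N + (1-z)\,r\,U_h$. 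Note that because the state is identically zero here, $\mathbf J_t$ is a deterministic affine function of the single Gaussian matrix $U_h$, so the local-chaos/independence hypothesis invoked for the general case is not even needed.

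Second I would invoke the circular law: since $(U_h)_{ij}\sim\mathcal N(0, a_h^2/N)$ i.i.d., the empirical spectral distribution of $U_h$ converges as $N\to\infty$ to the uniform measure on the disk $\{|\mu|\le a_h\}$ with constant density $1/(\pi a_h^2)$. The map $\mu\mapsto\lambda = z + (1-z)r\,\mu$ is an affine bijection of $\mathbbm C$ whose Jacobian as a map of $\mathbbm R^2$ has determinant $((1-z)r)^2$; pushing the uniform measure forward therefore gives eigenvalue density $\mu(\lambda) = 1/\big(\pi a_h^2 (1-z)^2 r^2\big)$ supported on $\{|\lambda - z|\le (1-z)r a_h\}$ and zero otherwise, which is the claim. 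As a sanity check one verifies that this support coincides with the locus $\mathcal S(\lambda) = 0$ of Theorem~\ref{thm:GRU-spec-curve}: at the zero fixed point $C_{\phi'}(t,t) = 1$, $C_r(t,t) = r^2$, $C_h(t-1,t-1) = 0$, so $\rho_t^2 = a_h^2 r^2$, while the $a_z^2$ term drops because $h_{t-1} - \phi(y_t) = 0$, leaving $\mathcal S(\lambda) = a_h^2 r^2 (1-z)^2/|\lambda - z|^2 - 1$.

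There is no genuine obstacle here; the one point requiring care is that when passing from the resolvent formula Eq.~(\ref{eq:density_resolvent}) to an actual density, the change-of-variables factor is $((1-z)r)^2$ rather than $(1-z)r$, since $\lambda$ ranges over $\mathbbm C\cong\mathbbm R^2$. If one instead wants a self-contained derivation via Eq.~(\ref{eq:density_resolvent}) rather than quoting the circular law, the mildly technical step is computing $\mathbbm E\big[\tfrac1N\operatorname{tr}(\lambda\mathbbm 1_N - z\mathbbm 1_N - (1-z)r U_h)^{-1}\big]$ through the usual hermitization of the Ginibre resolvent and taking $\partial/\partial\bar\lambda$, but this is standard and reduces to the same flat density on the shifted, rescaled disk.
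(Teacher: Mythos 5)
Your proposal is correct, and it takes a more elementary route than the paper. You observe that at the zero fixed point the Jacobian collapses to the affine form ${\bf J}_t = z\,\mathbbm{1}_N + (1-z)\,r\,U_h$ (because $\hat h_{t-1}=0$ kills both the $U_r$ and $U_z$ contributions, and the bias-variance assumption $v_k=0$ makes the gates scalar multiples of the identity), so the density follows directly from the circular law and the $((1-z)r)^2$ change-of-variables Jacobian. The paper instead obtains the same result as a special case of the general hermitization/linearization machinery: one specializes the trace-of-resolvent formula, Prop.~\ref{prop:gru-resolvent}, at the zero fixed point where $s_t$ becomes the constant $r^2 a_h^2(1-z)^2$ and $z_t$ becomes the constant $z$; solving Eq.~(\ref{eq:zgateF}) for $F$ and substituting into Eq.~(\ref{eq:zgateG}) gives $G(\lambda) = (\bar\lambda - z)/\big(r^2 a_h^2(1-z)^2\big)$ inside the disk, and then Eq.~(\ref{eq:density_resolvent}) reproduces the flat density. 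Your route buys simplicity and makes transparent that at this fixed point no mean-field or local-chaos assumption is needed since ${\bf J}_t$ is manifestly linear in $U_h$; the paper's route buys uniformity, since the zero fixed point is handled by exactly the same formulas used for nontrivial steady states. Your sanity check against the boundary curve of Theorem~\ref{thm:GRU-spec-curve} is also right: $\rho_t^2 = a_h^2 r^2$, the $a_z^2$ term vanishes, and $\mathcal{S}(\lambda)=0$ recovers the stated disk.
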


Therefore, for zero activity networks, the Jacobian spectrum is uniform and occupies a circular region in the complex plane. In this setting, the consequences of gating are minimal. To observe nontrivial shaping of the spectrum by the gates, one must tune parameters outside the region in which the zero fixed point is stable. In fact, the zero fixed point becomes unstable precisely when the spectral radius exceeds unity. This condition follows directly from Prop. (\ref{prop:gru-zero-FP})

\begin{corollary}\label{corr:fp-gru}
The zero fixed point is {\bf stable} for
	\begin{align}
		\sigma(b_{z}) + (1 - \sigma(b_{z})) \sigma(b_{r}) a_{h}<1.
	\end{align}
\end{corollary}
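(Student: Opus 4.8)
The plan is to read the corollary directly off Proposition~\ref{prop:gru-zero-FP}. That proposition already establishes that, linearized about the zero fixed point, the state-to-state Jacobian collapses to ${\bf J}_{t} = z\,\mathbbm{1}_{N} + (1-z)\,r\,U_{h}$ with $z=\sigma(b_{z})$, $r=\sigma(b_{r})$, and that its eigenvalue distribution converges as $N\to\infty$ to the uniform law on the disk $D=\{\lambda\in\mathbbm{C}: |\lambda - z|\le (1-z) r a_{h}\}$. So the first step is simply to extract the spectral radius of this disk. Since $z\in(0,1)$, $r\in(0,1)$ and $a_{h}\ge 0$, both the center $z$ and the radius $(1-z) r a_{h}$ are nonnegative, so the element of $D$ of largest modulus sits on the positive real axis at $\lambda^{\star} = z + (1-z) r a_{h}$; hence the limiting spectral radius is $\rho({\bf J}_{t}) = z + (1-z) r a_{h} = \sigma(b_{z}) + (1-\sigma(b_{z}))\sigma(b_{r}) a_{h}$. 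One can cross-check this by feeding the zero-fixed-point values $h_{t-1}=0$, $\phi(y_{t})=0$, $z_{t}=z$, $r_{t}=r$ into the spectral-curve function $\mathcal{S}$ of Theorem~\ref{thm:GRU-spec-curve}: the second expectation vanishes, $\rho_{t}^{2}$ reduces to $a_{h}^{2} r^{2}$, and $\mathcal{S}(\lambda)\ge 0$ becomes $|\lambda-z|\le (1-z) r a_{h}$, recovering the same support.

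The second step is to observe that this asymptotic statement actually controls the finite-$N$ spectral radius, since ${\bf J}_{t}$ is a scaled Ginibre matrix $(1-z) r U_{h}$ shifted by the scalar multiple $z\,\mathbbm{1}_{N}$ of the identity, and a scalar-identity shift produces no outlier eigenvalues; so $\max_{i}|\lambda_{i}({\bf J}_{t})|\to z + (1-z) r a_{h}$. The final step is the standard linear-stability criterion for discrete-time maps: a fixed point is linearly (asymptotically) stable iff every eigenvalue of the state-to-state Jacobian has modulus strictly less than one, i.e. $\rho({\bf J}_{t})<1$. Combining the two, the zero fixed point is stable precisely when $\sigma(b_{z}) + (1-\sigma(b_{z}))\sigma(b_{r}) a_{h}<1$, which is the claim.

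There is no genuinely hard step here; the only point requiring care is the boundary case. When equality holds the spectral radius equals one and the linearization is inconclusive, so the statement concerns strict stability and the inequality must be strict; characterizing behaviour on the critical manifold would require going beyond the linearization, which is in fact consistent with the paper's later observation that the update gate can poise the GRU at a marginally stable point.
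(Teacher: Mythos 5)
Your proof is correct and takes essentially the same route as the paper: Corollary~\ref{corr:fp-gru} is stated as an immediate consequence of Proposition~\ref{prop:gru-zero-FP} via the condition that the spectral radius $z+(1-z)ra_h$ of the limiting uniform disk be less than one. The cross-check against $\mathcal{S}$ and the remark about the shifted-Ginibre matrix having no outliers are sensible supplementary observations but not part of the paper's (one-line) argument.
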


With zero bias, this condition reduces to that found in \cite{Kanai2017} for stability of the zero fixed point.


\begin{figure}[h]
\begin{centering}
\includegraphics[scale=.8]{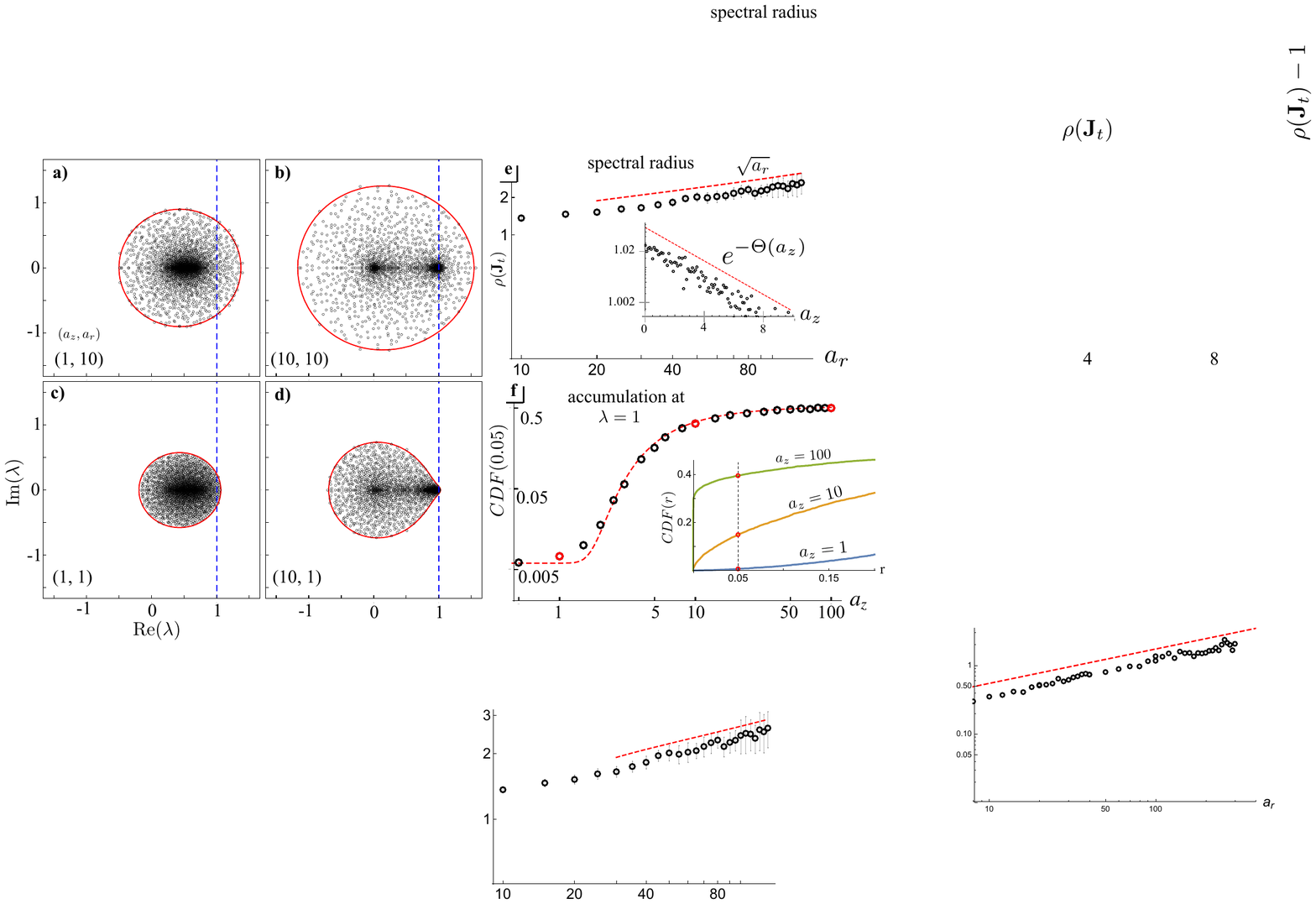}
\par \end{centering}
\caption{Empirical GRU Jacobian spectrum (black circles) with the spectral curve (\ref{eq:spec-curve}) predicted by RMT (red) in the steady state for $a_{h} = 3$ at various combinations of $(a_{z}, a_{r}) $ given by {\bf a)} $(1,10)$; {\bf b)} $(10,10)$; {\bf c)} $(1,1)$; {\bf d)}  $(10,1)$. In {\bf e)} we show the $\sqrt{a_{r}}$ growth of the spectral radius with increasing $a_{r}$ (for $a_{z} = 0$), as well as the exponential decrease (to unity) with increasing $a_{z}$ (for $a_{r} = 0$) (inset). In {\bf f)}, we show the cumulative distribution function ${\rm CDF}(r) = P(|\lambda - 1|<r)$ (inset) for various $a_{z}$. Increasing $a_{z}$ causes more eigenvalues to accumulate at $\lambda = 1$ as illustrated in the main figure which shows the CDF at $r = 0.05$. The red dashed curve is the scaling function (\ref{eq:cdf_scaling}).} \label{fig:gru-spectrum}
\end{figure}

\subsection{Shaping of dynamics by the {\it update}
and {\it reset} gates}
Having provided a method to calculate the spectral 
curve, we now look at how the two 
gates in the GRU -- the update and reset gates --
each shape the spectrum. For simplifying the discourse,
we consider either gate by itself to isolate its 
contribution, but this is not requisite. 

\subsubsection{The {\it update} gate facilitates creation of slow modes}
 Here we consider a GRU with only the update gate, and the reset gate variance set to $a_{r} = 0$. Eigenvalues 
 of the Jacobian close to $1.0$ correspond to 
 modes which will evolve slowly, and consequently
 a clumping of eigenvalues near $1.0$ will give
 rise to a broad spectrum of timescales. 
 In the limit $a_{z} = 0$, the eigenvalue density becomes circularly symmetric and centered at $\sigma(b_{z})$, similar to that of
 a vanilla RNN. In the opposite limit of $a_{z} = \infty$, the eigenvalues accumulate near unity and a characterisation of the full density  
 is easily illustrated by considering the nonlinear
 activation function $\phi$ to be piecewise-linear
 (i.e. the ``hard-tanh" defined in Eq.(\ref{eq:piecewise})). The behavior does not qualitatively change with other saturating non-linearities but the expressions can be complicated. Under this approximation, we can evaluate the density of eigenvalues $\mu(\lambda)$ for $a_z \to \infty$ and $b_{z}/a_{z} = \beta$ constant
\begin{align}
\mu(\lambda)	=  (1-\alpha)\delta(\lambda - 1) + \alpha(1 - \eta) \delta(\lambda) + \frac{1}{\pi a_{h}^{2}\sigma(b_{r})^{2}} {\bf 1}_{|\lambda|\le R},\label{eq:den-update-only} 
\end{align}
where $R = \sqrt{\alpha \eta}a_{h} \sigma(b_{r})$, $\eta$ is the fraction of unsaturated activations, and $\alpha$ is the fraction of update gates which are zero (for a  derivation of Eq. (\ref{eq:den-update-only}), see Appendix \ref{app:only-update}).  This fraction will change depending on $b_{z} = a_{z} \beta$, and in general will lead to an extensive number of eigenvalues at $\lambda = 1$. We show this accumulation at $\lambda = 1$ for the Jacobian of a generic time-dependent steady state in Fig.(\ref{fig:gru-spectrum} f), which shows good agreement with a scaling function : $c_{1} {\rm erfc}(c_{2}/a_{z})$ (dashed red); the motivation for this scaling function appears in Appendix \ref{app:only-update}. 
Interestingly, large $a_z$ also leads to 
{\it pinching} of the spectral curve (for $a_h$ in a certain range), thus further accentuating the accumulation of eigenvalues
near 1 (Fig. \ref{fig:gru-spectrum} f) and \ref{fig:fp-phase}b.
We will discuss pinching in greater detail in the context of marginal stability (Sec. \ref{sec:marginal-stability}).  The emergence of slow modes with a spectrum of timescales
is likely useful for processing inputs which have dependencies over a wide range of timescales. 
The accumulation of these slow modes is a generic feature of the gates that control the rate of integration, and we will show later that the forget gate in the LSTM has 
similar characteristics.

\subsubsection{The {\it reset} gate controls the spectral
radius}
The {\it reset} gate in the GRU has the effect of controlling the spectral radius $\rho({\bf J}_{t})$ of the Jacobian, defined $\rho({\bf J}_{t}) = {\rm max}\{ |\lambda_{i}|\}$.  The spectral radius among other things informs us about stability of fixed points. Furthermore, we argue that the reset gate  controls a transition from a topologically trivial dynamical phase space with only a single zero fixed point, to a ``complex" landscape with a large number of non-trivial fixed points \cite{wainrib2013}. To isolate the effect of the reset gate, we set the variance
of the update gate $a_z=0$. A consequence of Theorem \ref{thm:GRU-spec-curve} is the following corollary regarding the spectral radius:

\begin{corollary} For $a_{z} =0$, the spectral radius of the instantaneous Jacobian  $\rho({\bf J}_{t})$ is given by \begin{align}
\rho({\bf J}_{t}) = \sigma(b_{z}) + (1 - \sigma(b_{z})) \rho_{t},\end{align} 	
with $\rho_{t}$ defined in (\ref{eq:spec-rad-shape-param}). 
\end{corollary}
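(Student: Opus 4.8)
The plan is to specialize the spectral–support formula of Theorem~\ref{thm:GRU-spec-curve} to $a_z=0$ and then extract the outermost point of the resulting region. The key preliminary observation is that setting $a_z=0$ forces $U_z=0$, so (under the standing assumption $v_z=0$, and with no input injected) the pre-activation of the update gate equals the deterministic constant $b_z$ at every neuron and every time step. Hence $z_t=\sigma(b_z)=:z$ is a non-random constant, and likewise $z_t'=\sigma'(b_z)$. Equivalently, one sees directly from (\ref{eq:jac}) that in this regime $\mathbf{J}_t = z\,\mathbbm{1}_{N} + (1-z)\hat{\phi}'(\mathbf{y}_t)U_h\big(\hat{r}_t+\hat{h}_{t-1}\hat{r}'_t U_r\big)$, i.e. a structured random matrix plus a scalar shift.

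Substituting $z_t\equiv z$ into the support function (\ref{eq:spec-curve}): the second expectation is multiplied by $a_z^2=0$ and drops out, while in the first term the now-constant factors $(1-z)^2$ and $|\lambda-z|^2$ come out of the expectation, leaving
\begin{align}
\mathcal{S}(\lambda) = \rho_t^{2}\,\frac{(1-z)^{2}}{|\lambda-z|^{2}}-1 ,
\end{align}
with $\rho_t$ still given by (\ref{eq:spec-rad-shape-param}), evaluated using the correlation functions of the $a_z=0$ dynamics. Since $z\in(0,1)$ implies $1-z>0$ and $\rho_t\ge 0$, the condition $\mathcal{S}(\lambda)\ge 0$ is equivalent to $|\lambda-z|\le \rho_t(1-z)$, so by Theorem~\ref{thm:GRU-spec-curve} the support $\Sigma(\mathbf{J}_t)$ is exactly the closed disk of radius $\rho_t(1-z)$ centered at $z$ on the positive real axis. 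Maximizing $|\lambda|$ over this disk is elementary: writing $\lambda=z+\rho_t(1-z)e^{i\theta}$ gives $|\lambda|^{2}=z^{2}+2z\rho_t(1-z)\cos\theta+\rho_t^{2}(1-z)^{2}$, which (the coefficient of $\cos\theta$ being nonnegative) is maximized at $\theta=0$, yielding $z+\rho_t(1-z)=\sigma(b_z)+(1-\sigma(b_z))\rho_t$, the claimed identity.

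The only step that is not an immediate consequence of Theorem~\ref{thm:GRU-spec-curve} — and hence the one I would flag as the main obstacle — is the identification of the spectral radius $\rho(\mathbf{J}_t)=\max\{|\lambda_i|\}$ with the outer edge $\max_{\lambda\in\Sigma(\mathbf{J}_t)}|\lambda|$ of the support. This requires ruling out isolated outlier eigenvalues outside $\Sigma(\mathbf{J}_t)$. In the $a_z=0$ case this is clean: the deviation of $\mathbf{J}_t$ from a structured-random bulk is the pure scalar shift $z\,\mathbbm{1}_{N}$, which merely translates the spectrum and introduces no rank-finite deterministic perturbation capable of ejecting an outlier, and the remaining structured-random factor has only bounded, generic diagonal matrices and i.i.d.\ Gaussian $U_h,U_r$, so the large-$N$ empirical spectrum fills $\Sigma(\mathbf{J}_t)$ up to its boundary. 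This ``no-outliers'' statement can be justified exactly as in the non-Hermitian random-matrix references invoked for the main theorem, or simply read off from the numerics in Fig.~(\ref{fig:gru-spectrum}); everything else in the corollary is the specialization and elementary geometry above.
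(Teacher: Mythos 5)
Your proof is correct and follows the same route the paper intends for this corollary: set $a_z=0$ in the support function $\mathcal{S}(\lambda)$ of Theorem~\ref{thm:GRU-spec-curve}, observe that $z_t\equiv\sigma(b_z)$ becomes deterministic so the support collapses to the disk $|\lambda-\sigma(b_z)|\le\rho_t(1-\sigma(b_z))$, and read off the outermost point. Your flag about identifying the spectral radius with the edge of the support (no outliers) is a fair caveat that the paper leaves implicit; the paper additionally supplies, in Appendix~\ref{app:gru-reset-spectral-radius}, an independent verification via Gelfand's formula and moment asymptotics, which sidesteps that concern by computing $\lim_n\|\mathbf{J}_t^n\|^{1/n}$ directly.
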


In the Appendix \ref{app:gru-reset-spectral-radius}, we give another proof of this using the Gelfand formula \cite{gelfand}, which indeed aligns with our RMT prediction. We clearly observe that higher values of $a_r$ lead to larger spectral radii, e.g. going from Figs. (\ref{fig:gru-spectrum}${\rm c} \to  {\rm a})$ or $(\ref{fig:gru-spectrum}{\rm d} \to {\rm b})$. Indeed, we can make this more precise with the following proposition:

\begin{proposition}\label{prop:reset-spectral-radius}
The spectral radius of the Jacobian ${\bf J}_{t}$ for $a_{z} = 0$ grows with $a_{r}$ asymptotically as $\rho({\bf J}_{t}) = \Theta ( \sqrt{a_{r}})$. 
\end{proposition}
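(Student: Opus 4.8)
The plan is to push the mean-field correlators through the closed-form spectral-radius expression derived just above, and to track their $a_r$-dependence. Since $a_z=0$ gives the identity $\rho({\bf J}_t)=\sigma(b_z)+(1-\sigma(b_z))\,\rho_t$, with $\sigma(b_z)\in(0,1)$ and $1-\sigma(b_z)>0$ both independent of $a_r$, it suffices to prove $\rho_t^2=\Theta(a_r)$. Reading off Eq.~(\ref{eq:spec-rad-shape-param}),
\begin{align}
\rho_t^2 = a_h^2\,C_{\phi'}(t,t)\Big[\,C_r(t,t) + a_r^2\,C_{r'}(t,t)\,C_h(t-1,t-1)\,\Big],
\end{align}
so I would show that the prefactor $a_h^2 C_{\phi'}(t,t)$ and the correlators $C_r(t,t)$ and $C_h(t-1,t-1)$ each stay between two positive $a_r$-independent constants, whereas $a_r^2\,C_{r'}(t,t)=\Theta(a_r)$. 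The bracket is then $\Theta(1)+\Theta(a_r)=\Theta(a_r)$, which gives $\rho_t=\Theta(\sqrt{a_r})$ and hence the proposition.

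The boundedness inputs are elementary. Because $h_t$ is always a convex combination of entries of $h_{t-1}$ and of $\phi({\bf y}_t)\in(-1,1)^N$, we have $C_h\le1$; similarly $r_t=\sigma(\xi_{t-1})\in(0,1)$ gives $C_r\le1$ and $|\phi'|\le1$ gives $C_{\phi'}\le1$. In the MFT $y_t$ is Gaussian with mean $b_h$ and variance $a_h^2\,C_h(t-1,t-1)\,C_r(t,t)+v_h\le a_h^2+v_h$ (cf.\ Eq.~(\ref{eq:FP1}) and Appendix~\ref{app:gru-dmft}), which is independent of $a_r$; a Gaussian of fixed mean and bounded variance keeps an $a_r$-independent fraction of its mass in any fixed bounded interval, on which $\phi'^2$ is bounded below, so $C_{\phi'}(t,t)\ge c_{\phi'}>0$, and likewise $C_r(t,t)$ is bounded away from $0$. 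For the growing factor, write $\xi_{t-1}\sim\mathcal{N}(b_r,s_r^2)$ with $s_r^2=a_r^2\,C_h(t-1,t-1)+v_r$, so that
\begin{align}
C_{r'}(t,t)=\mathbbm{E}\big[\sigma'(\xi_{t-1})^2\big]=\int_{\mathbbm{R}}\frac{e^{-(x-b_r)^2/(2 s_r^2)}}{\sqrt{2\pi}\,s_r}\,\sigma'(x)^2\,dx .
\end{align}
The identity $\int_{\mathbbm{R}}\sigma'(x)^2\,dx=\tfrac16$ (substitute $u=\sigma(x)$) gives the upper bound $C_{r'}(t,t)\le(6\sqrt{2\pi}\,s_r)^{-1}$, and restricting the integral to $|x|\le1$ gives a matching lower bound $C_{r'}(t,t)\ge c/s_r$ for $s_r$ large. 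Granting $s_r=\Theta(a_r)$, this reads $C_{r'}(t,t)=\Theta(1/a_r)$, hence $a_r^2 C_{r'}(t,t)=\Theta(a_r)$, and $\rho_t^2=\Theta(a_r)$ follows.

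The hard part is the last ingredient, $C_h(t-1,t-1)=\Theta(1)$ uniformly in $a_r$, which is also what certifies $s_r=\Theta(a_r)$ above. The upper bound is immediate; for the lower bound one must analyse the $a_r\to\infty$ limit of the self-consistent MFT. As $a_r\to\infty$ with $C_h$ bounded away from $0$, the reset pre-activation variance $s_r^2$ diverges, so $C_r(t,t)=\int Dx\,\sigma(s_r x+b_r)^2\to\int_0^\infty Dx=\tfrac12$ by dominated convergence; in particular $C_r$ has a strictly positive limit. The $C_h$ equation then closes on itself --- at a fixed point, $C_h=\int Dx\,\phi\big(x\sqrt{a_h^2 C_h/2+v_h}+b_h\big)^2$ --- and its nonzero solution survives precisely when the network sits in a non-trivial steady state rather than being attracted to the zero fixed point, i.e.\ under the same kind of condition ($a_h$, or $b_h/v_h$, large enough) that destabilises the zero fixed point in Cor.~\ref{corr:fp-gru}. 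Monotonicity of the mean-field map in its variance argument should then promote this limiting statement to the uniform bound $C_h(t-1,t-1)\ge c_h>0$ for all large $a_r$, and the same reasoning carries over to a time-dependent chaotic steady state via the correlation recursions of Appendix~\ref{app:gru-dmft}. I expect this self-consistency step, and not the random-matrix input (already packaged into the spectral-radius corollary), to be the only genuine obstacle; everything else is bookkeeping with elementary Gaussian integrals.
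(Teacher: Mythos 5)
Your proposal follows essentially the same route as the paper's Appendix \ref{app:gru-reset-spectral-radius}: start from the closed form $\rho({\bf J}_t) = \sigma(b_z) + (1-\sigma(b_z))\rho_t$, bound $C_{\phi'}$, $C_r$, $C_h$ by $a_r$-independent constants, and extract $C_{r'}(t,t) = \Theta(1/a_r)$ from the asymptotics of the squared-sigmoid-derivative Gaussian integral (the paper likewise obtains the leading coefficient $1/(6\sqrt{2\pi C_h})$ from $\int \sigma'(x)^2\,dx = 1/6$). The paper asserts the uniform-in-$a_r$ boundedness of $C_h$ somewhat tersely from the MFT structure, and you are right that this self-consistency step is the least airtight part of both arguments; you flag it rather than resolve it, but the overall decomposition and key estimate are the same.
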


The red dashed line in Fig (\ref{fig:gru-spectrum}e) demonstrates that this scaling takes over relatively soon.

\subsection{Phase diagram for the GRU}\label{sec:fp-phase}

The solutions of the MFT  allow us to map out dynamical regimes of qualitatively different behavior. This allows us to construct a phase diagram 
which, among other things, describes the statistical structure of fixed-points for the GRU. 
To illustrate our main point, we assume for simplicity $v_{h} = v_{r} = b_{h} = b_{r} = 0 $.

The zero solution $C_{h} = 0$ always exists, and is only stable for $a_{h}<2$ per Corollary \ref{corr:fp-gru}. A single (unstable) non-zero FP appears continuously from zero for $a_{h}>2$ (Fig. \ref{fig:fp-phase}). This motivates seeking a perturbative solution to Eq. (\ref{eq:FP1}) (details are in Appendix \ref{app:gru-fixed-point-mft}  ). We study this solution assuming small $C_h$, and $a_{h} = 2 + \epsilon$ for $\epsilon >0$ The perturbative FP solution can exhibit different behavior depending on the magnitude of $a_{r}$,
\begin{align}
C_{h} \sim  \begin{dcases}
 \frac{4 \epsilon}{8 - a_{r}^{2}} , \quad &a_{r}< \sqrt{2} a_{h},\\
 \frac{\sqrt{\epsilon}}{4}, \quad &a_{r} = \sqrt{2}a_{h}\\
 \frac{6 (a_{r}^{2} - 8)}{3 a_{r}^{4} + 24 a_{r}^{2} - 136} + \frac{f(a_{r}) \epsilon}{a_{r}^{2} - 8}, \quad &a_{r} > \sqrt{2} a_{h}
 \end{dcases}, \quad a_{h} = 2 + \epsilon.
\end{align}
where $f(a_{r})$ (given in (\ref{eq:fpoly})) is a rational polynomial which is positive and nonsingular for $a_{r} > \sqrt{8}$. For $a_{h} = 2 - \epsilon$ for positive $\epsilon$, there is no nonzero perturbative solution for $a_{r}\le \sqrt{2} a_{h}$, whereas for $a_{r}> \sqrt{2} a_{h}$ {\it two} nonzero fixed points arise (Fig. \ref{fig:fp-phase})
\begin{align}
C_{h} \sim \begin{dcases}
 	\frac{4 \epsilon}{a_{r}^{2} - 8}, \\
 	 \frac{6 (a_{r}^{2} - 8)}{3 a_{r}^{4} + 24 a_{r}^{2} - 136}- \frac{f(a_{r}) \epsilon}{a_{r}^{2} - 8}
 \end{dcases} \quad a_{r} > \sqrt{2} a_{h}, \quad a_{h} = 2 - \epsilon
\end{align}

Evidently, the first solution disappears at the critical value $a_{h} = 2$, while the second solution is continuous and nonzero across this threshold. 

Close to $a_{h} = 2^{-}$, we see that above an $a_{r}$-threshold, nonzero fixed points appear. They do not  exist for all $a_{h}<2$; in fact from Eq. \ref{eq:FP1} (for large $a_r$) we see that for $a_{h} < \sqrt{2}$, there are no non-zero fixed points, and for $a_{h} = \sqrt{2} + \epsilon$, a non-zero fixed point appears perturbatively, scaling like $C_{h} \sim \epsilon/\sqrt{2}$. Therefore, we see that the reset gate modulates the topology of the dynamical phase space in the interval $a_{h} \in (\sqrt{2}, 2)$, and produces nonzero fixed points in this regime where the zero FP is stable. In this range for $a_h$, increasing $a_{r}$ leads to a pitchfork-like bifurcation of the FP equations, from one (zero) FP to three FPs.  At a fixed $a_{h}$, this bifurcation occurs at a critical $a_{r}^{*}(a_{h})$ (with a correspoding variance $C_{h}^{*}$), which we refer to as the bifurcation curve, shown in Fig. (\ref{fig:fp-phase}, Left)  . 

Thus, the picture that emerges is the following (see Fig.\ref{fig:fp-phase} Left): when $a_h < \sqrt{2}$, the zero fixed-point is the only stable solution (blue region). For $\sqrt{2} < a_h < 2$, the zero FP is stable, but for $a_r$ above a critical value $a_r^*$, we also get
time-dependent chaotic solutions; this coincides with the abrupt appearance of the non-zero unstable FP solutions (in the pitchfork bifurcation) for $C_h$ in the MFT which indicate the proliferation of unstable fixed-points in the phase space (green region). When $a_h > 2$ the zero FP is unstable,  and the only solutions which are possible are time-dependent chaotic solutions (orange region). We emphasize that even though the appearance of chaotic dynamics coincides with the proliferation of unstable 
fixed-points, in this study we have only dealt with the analysis of fixed points. A detailed study of the chaotic solutions
will be published elsewhere.

\begin{figure}[h]
\begin{centering}
\includegraphics[scale=.75]{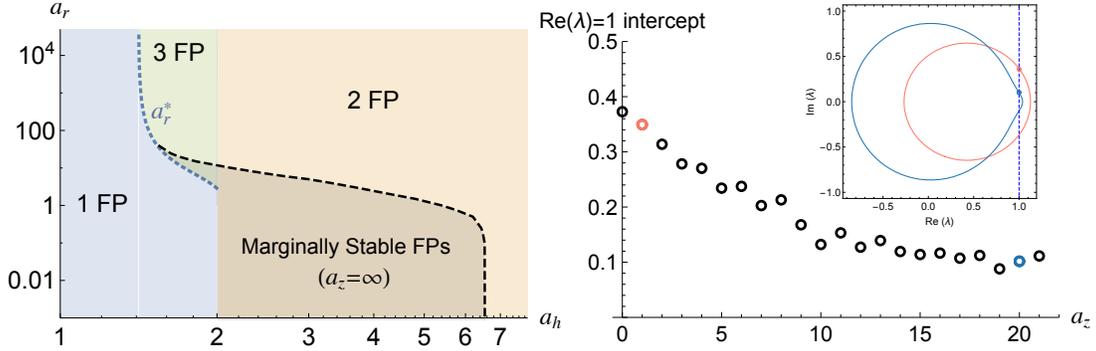}
\par\end{centering}
\caption{{\bf (Left)} Phase diagram in the $(a_{h},a_{r})$-plane for $b_{r} = 0$ indicating regions with distinct topological structure: (Blue) a single trivial (zero) fixed point only; (Green) three fixed points, two of which are non-zero; (Orange) two fixed points with a single non-zero FP. The bifurcation curve $a_{r}^{*}$ (dashed blue) separates the blue from the green region, and is computed numerically. Note that the zero FP always exists. ({\bf Right}) The pinching of the spectral support at $\lambda = 1$ is illustrated by tracking the intercept of the spectral boundary with ${\rm Re}(\lambda) = 1$. As this intercept decreases, the spectrum is seen (inset) to become pinched around $\lambda = 1$. Parameters used for numerical simulations: $N = 1000$, $a_{r} = 2$, $a_{h} = 3$.    } \label{fig:fp-phase}
\end{figure}

\subsection{Stability of non-zero fixed points and emergence of marginal stability} \label{sec:marginal-stability}

Having looked at the structure of non-zero fixed points in the phase diagram above, we now
comment on their stability and the ability of the update gate to make some of the fixed-points
marginally stable. The stability of fixed points can be determined from the spectral shaping parameter, and 
from Thm. (\ref{thm:GRU-spec-curve}) we can get a condition for when a fixed-point will be stable. Denoting the spectral shaping parameter (\ref{eq:spec-rad-shape-param}) at the fixed point $\rho_{0}$, we find the stability condition:

\begin{corollary}[Fixed Point Stability]\label{cor:fp-stability} A fixed point is stable iff the spectral shaping parameter (\ref{eq:spec-rad-shape-param}) satisfies $\rho_{0} <1$. 	
\end{corollary}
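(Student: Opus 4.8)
The plan is to specialise the spectral curve of Theorem~\ref{thm:GRU-spec-curve} to a fixed point and read off the spectral radius, using the standard fact that linear stability of a fixed point of a smooth map is governed by the condition $\rho(\mathbf{J}_t) < 1$ on the spectral radius of the state-to-state Jacobian evaluated there. The first observation is that the $a_z^2$ term in $\mathcal{S}(\lambda)$ drops out at any fixed point: imposing $\mathbf{h}_t = \mathbf{h}_{t-1}$ in the update rule~(\ref{eq:gru_eom}) gives, componentwise, $(1 - z_i)h_i = (1 - z_i)\phi(y_i)$, and since every gate component satisfies $z_i = \sigma(\cdot) \in (0,1)$ strictly, this forces $h_i = \phi(y_i)$, so $h_{t-1} - \phi(y_t)$ vanishes identically at the fixed point. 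Hence there $\mathcal{S}(\lambda) = \rho_0^2\,\mathbbm{E}\!\left[(1-z)^2/|\lambda - z|^2\right] - 1$, where $\rho_0 \ge 0$ is the shaping parameter~(\ref{eq:spec-rad-shape-param}) evaluated on the (time-independent) fixed-point correlators.

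The key step is then an elementary estimate showing that, among all $\lambda$ outside the open unit disk, $\mathcal{S}$ is maximised at $\lambda = 1$. For any $z \in (0,1)$ and any $\lambda$ with $|\lambda| \ge 1$ the reverse triangle inequality gives $|\lambda - z| \ge |\lambda| - z \ge 1 - z > 0$, with equality throughout iff $\lambda = 1$; on the unit circle this is just $|e^{i\theta} - z|^2 = 1 - 2z\cos\theta + z^2 \ge (1-z)^2$. Therefore $(1-z)^2/|\lambda - z|^2 \le 1$ for all $|\lambda| \ge 1$, so $\mathcal{S}(\lambda) \le \rho_0^2 - 1$ on the complement of the open unit disk, with $\mathcal{S}(1) = \rho_0^2 - 1$ exactly. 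Consequently: if $\rho_0 < 1$ then $\mathcal{S}(\lambda) < 0$ for every $|\lambda| \ge 1$, so (the RMT support being compact) $\rho(\mathbf{J}_t) < 1$ and the fixed point is stable; if $\rho_0 \ge 1$ then $\lambda = 1 \in \Sigma(\mathbf{J}_t)$, so $\rho(\mathbf{J}_t) \ge 1$ and the fixed point is not (asymptotically) stable. This yields the claimed equivalence.

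The step I expect to require the most care is not any calculation but the justification of ``$\rho(\mathbf{J}_t) < 1 \iff$ fixed point stable'': this relies on the usual linearisation theorem for smooth maps together with the assumption, inherited from the RMT analysis used throughout the paper, that $\Sigma(\mathbf{J}_t)$ captures the entire spectrum of $\mathbf{J}_t$ (no outlier eigenvalues beyond the predicted support) — a point worth noting explicitly. The borderline case $\rho_0 = 1$ gives $\rho(\mathbf{J}_t) = 1$ with the eigenvalue density touching $\lambda = 1$; this is precisely the marginal-stability scenario of Sec.~\ref{sec:marginal-stability} and is (correctly) excluded from strict stability.
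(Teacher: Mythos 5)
Your proof is correct and takes essentially the same route as the paper's: both drop the $a_z^2$ term at the fixed point, evaluate $\mathcal{S}(1)=\rho_0^2-1$, and argue that $\lambda=1$ maximizes $\mathcal{S}$ over the complement of the open unit disk. Your concrete pointwise estimate $(1-z)^2/|\lambda-z|^2 \le 1$ for $|\lambda|\ge 1$ (via the reverse triangle inequality) is a cleaner, self-contained justification of the two qualitative observations the paper merely states (monotone decay in $|\lambda|$ and angular maximum at $\theta=0$), and your explicit derivation of $h_i=\phi(y_i)$ from $z_i<1$ makes the vanishing of the second term more airtight than the paper's bare assertion.
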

\begin{proof}
For a fixed point, the second term in (\ref{eq:spec-curve}) vanishes. We make use of two observations: that $\mathcal{S}(\lambda)$ is a monotonically decreasing function of $|\lambda|$, and that $\mathcal{S}(|\lambda|e^{i\theta}) \le \mathcal{S}(|\lambda|)$ for any nonzero $\theta$. Together these imply that the spectral radius is determined by the largest positive real $x$ such that $\mathcal{S}(x) = 0$. Thus, if $\mathcal{S}(1) <0$, then this implies $x<1$, and hence stability. Conversely, if the FP is stable, $x<1$ which implies $\mathcal{S}(1) < 0$ since $\lambda = 1$ is outside the support. Finally, evaluating $\mathcal{S}(1) = \rho_{0}^{2} - 1$, concludes the proof.
\end{proof}

It turns out that for all non-zero fixed points in the green and orange region in Fig. (\ref{fig:fp-phase}), which satisfy the implicit mean-field equations (\ref{eq:FP1}), we find $\rho_{t} >1$. Thus, all of the nonzero fixed points implied by the mean-field theory are {\it unstable}. 
However, the locally unstable manifold of these fixed-points can be reduced significantly by an appropriate scaling of the update gate variance $a_{z}$, and in the asymptotic limit, some
of these fixed-points become marginally stable. We state this result below:

\begin{theorem}[Emergence of Local Marginal Stability]\label{thm:pinching}
	For large $a_{z}$, and $b_{z} = a_{z} \beta$, the asymptotic scaling of the spectral radius of the fixed point Jacobian ${\bf J}_{0}$ is given by
	
	\begin{align}
	 \rho({\bf J}_{0}) - 1 = \Theta( e^{ - c \sqrt{C_{h}} a_{z}})	,
	\end{align}
where $c \in \mathbbm{R}^{+}$ is determined implicitly by 
\begin{align}
{\rm erf} \left( \frac{c - \beta/\sqrt{C_{h}}}{\sqrt{2}}\right) = \frac{2}{\rho_{0}^{2}} - 1, \label{eq:c}
\end{align}
and $\rho_{0}$ is given by Eq.(\ref{eq:spec-rad-shape-param}) evaluated at the fixed point.
\end{theorem}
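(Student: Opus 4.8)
The plan is to reduce the statement to the asymptotics of a single one‑dimensional Gaussian integral. First I would specialize the spectral function $\mathcal{S}(\lambda)$ of Theorem~\ref{thm:GRU-spec-curve} to a fixed point: there $h_{t-1}=\phi(y_{t})$ by (\ref{eq:gru_eom}), so its second term vanishes and, writing $z=\sigma(\zeta)$ with $\zeta\sim\mathcal{N}(b_{z},a_{z}^{2}C_{h})$, one is left with $\mathcal{S}(\lambda)=\rho_{0}^{2}\,\mathbbm{E}\left[(1-z)^{2}/|\lambda-z|^{2}\right]-1$. By the monotonicity of $\mathcal{S}$ used in the proof of Corollary~\ref{cor:fp-stability}, the spectral radius $\rho({\bf J}_{0})$ is the largest positive real root $u$ of $\mathcal{S}(u)=0$; since every nonzero fixed point has $\rho_{0}>1$ (so $\mathcal{S}(1)=\rho_{0}^{2}-1>0$), this root obeys $u>1$, and I set $u=1+\delta$ with $\delta>0$ the quantity to estimate. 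Using $b_{z}=a_{z}\beta$, I parametrize $\zeta=a_{z}(\beta+\sqrt{C_{h}}\,x)$ with $x\sim\mathcal{N}(0,1)$ and put $w:=e^{-\zeta}$, so that $1-z=w/(1+w)$, $u-z=\delta+w/(1+w)$, and the fixed‑point equation becomes $\rho_{0}^{2}\,\mathbbm{E}\left[g_{\delta}(x)\right]=1$ with $g_{\delta}(x)=w(x)^{2}/(w(x)(1+\delta)+\delta)^{2}$.

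The core of the argument is a ``sharp‑interface'' evaluation of $I(\delta):=\mathbbm{E}\left[g_{\delta}(x)\right]$ as $a_{z}\to\infty$. Because $w(x)$ decreases monotonically from $+\infty$ to $0$, $g_{\delta}$ is a smoothed step: $g_{\delta}\approx 1$ where $w\gg\delta$ and $g_{\delta}\approx 0$ where $w\ll\delta$, with crossover at $x_{c}$ defined by $w(x_{c})=\delta$, i.e. $-\ln\delta=a_{z}(\beta+\sqrt{C_{h}}x_{c})$. Rescaling $x=x_{c}+t/(a_{z}\sqrt{C_{h}})$ gives $w=\delta e^{-t}$ and $g_{\delta}=(1+e^{t})^{-2}+O(\delta)$, an $O(1)$‑width profile in $t$; hence the transition region has width $O(1/a_{z})$ in $x$, while the standard Gaussian density varies on an $O(1)$ scale. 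Comparing $g_{\delta}$ with $\mathbbm{1}_{\{x<x_{c}\}}$ and bounding the transition layer together with the negligible $O(\delta)$ deficit of $g_{\delta}$ from $1$ at large $w$, I expect $I(\delta)=\Phi(x_{c})+O(1/a_{z})$, where $\Phi$ is the standard normal CDF.

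To close the loop I substitute into $\rho_{0}^{2}I(\delta)=1$, obtaining $\Phi(x_{c})=\rho_{0}^{-2}+O(1/a_{z})$, hence $x_{c}=\Phi^{-1}(\rho_{0}^{-2})+O(1/a_{z})$. Writing $\Phi(y)=\tfrac12\!\left(1+{\rm erf}(y/\sqrt{2})\right)$ and defining $c$ by $\Phi^{-1}(\rho_{0}^{-2})=c-\beta/\sqrt{C_{h}}$ reproduces exactly the implicit equation (\ref{eq:c}) for $c$; in the regime of interest $\rho_{0}^{-2}$ is close enough to $1$ that $c>0$. Feeding $x_{c}$ back into its defining relation then yields $-\ln\delta=a_{z}(\beta+\sqrt{C_{h}}x_{c})=c\sqrt{C_{h}}\,a_{z}+O(1)$, i.e. $\rho({\bf J}_{0})-1=\delta=\Theta\!\left(e^{-c\sqrt{C_{h}}\,a_{z}}\right)$, which is the claim; moreover $\delta>0$ for every finite $a_{z}$ since $\rho_{0}>1$ keeps $\Phi^{-1}(\rho_{0}^{-2})$ finite, so the fixed point is only marginally stabilized in the limit.

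I expect the main obstacle to be making the ``smoothed step'' estimate uniform, given that $\delta$ is itself $a_{z}$‑dependent and exponentially small, so $x_{c}$ drifts with $a_{z}$: one must show the $O(1/a_{z})$‑wide transition layer around $x_{c}$ contributes only an $O(1/a_{z})$ error to $I(\delta)$, hence an $O(1/a_{z})$ error to $x_{c}$, hence merely an $O(1)$ additive error to $-\ln\delta$ --- which is precisely why the statement carries $\Theta(\cdot)$ rather than an exact exponential rate. A secondary point is verifying that $\delta$ stays below $1$ (equivalently $c>0$) throughout the regime considered, so that ``$\rho({\bf J}_{0})\to1$'' genuinely represents emergent marginal stability; this follows from $\rho_{0}>1$ together with the sign of $\beta$ in that regime.
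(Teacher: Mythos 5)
Your proposal is correct and follows essentially the same route as the paper's Appendix~\ref{app:pinching}: both reduce the boundary-curve condition at a fixed point to the one-dimensional Gaussian integral $\rho_0^2\,\mathbbm{E}\left[(1-z)^2/|\lambda-z|^2\right]=1$, evaluate it by the sharp-interface/step approximation as $a_z\to\infty$, and match $\rho_0^2\,\Phi\left(c-\beta/\sqrt{C_h}\right)=1$ to obtain (\ref{eq:c}). The only cosmetic difference is that the paper proposes the complex ansatz $\lambda=1+\lambda_0 e^{-c\tilde a_z}$ (which simultaneously yields the pinched shape of the whole boundary curve near $\lambda=1$), whereas you specialize to the real positive root $u=1+\delta$, which is cleaner and entirely sufficient for the spectral-radius statement of Theorem~\ref{thm:pinching}; your change of variables $w=e^{-\zeta}$ and the paper's $|1+\lambda_0 e^{-c\tilde a_z}(1+e^{\tilde a_z(x+\tilde\beta)})|^{-2}$ are algebraically identical once $\lambda_0 e^{-c\tilde a_z}$ is identified with $\delta$.
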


Thus for fixed-points which admit a positive solution for $c$, we see that increasing $a_{z}$ makes the spectral radius approach unity, and the fixed-points become marginally stable. Moreover, 
on increasing $a_z$ not only does the spectral radius reduce to unity for these fixed-points, but
the spectral density also gets {\it pinched} near unity leading to a concentration of eigenvalues 
near unity (Fig. \ref{fig:fp-phase} right). In appendix (\ref{app:pinching}), we show that the leading edge of the spectral boundary curve is contained in a wedge near $\lambda = 1$ which scales like $\lambda_{0} \exp \left( - c \sqrt{C_{h}} a_{z}\right)$, for some order one constant $\lambda_{0} \in \mathbbm{C}$. 

We now comment on the condition for a fixed-point to become marginally stable: a positive solution for $c$  in (\ref{eq:c}) only exists for 

\begin{align}
\rho_{0}^{2} < \frac{1}{\alpha}, \quad \alpha = \frac{1}{2} {\rm erfc} (\beta/\sqrt{2 C_{h}}).\label{eq:alpha}
\end{align}

In the $a_{z} \rightarrow \infty$ limit, $\alpha = 1-\mathbbm{E}\left[ z_{t}\right]$ is simply the fraction of gates which are completely closed ($z_{t} = 0$). When the bias is zero, $\alpha = 1/2$, and the condition for the spectral shaping parameter becomes $\rho_{0} < \sqrt{2}$. 

Thus, while $\rho_{0}>1$ for all nonzero fixed points, there is a range of parameters for which $\rho_{0} < \sqrt{2}$ and marginal stability and spectral pinching can occur for nonzero fixed points. This region in parameter space is shaded black in Fig. (\ref{fig:fp-phase} Left). This pinching effect can be observed even for finite $a_{z}$ (see Fig. (\ref{fig:gru-spectrum} d) for $(a_{z}, a_{r}) = (10,1)$, which falls inside the marginal stability region). A practical consequence of the pinching is that it will reduce the number of unstable modes, leading to a non-negligible chance of observing nonzero fixed points in finite random networks.


\section{Spectral theory of LSTM}

The analysis of the Jacobian for the LSTM proceeds in a similar way as the GRU. For an LSTM with $N$ hidden and cell state variables, the Jacobian is the  $2N\times 2N$ matrix

\begin{align}
{\bf J}_{t} &= \left( \begin{array}{cc}
 	{\bf \hat f}_{t} & {\bf \hat g}_t\\
 	 {\bf  \hat m}_{t} \, {\bf  \hat f}_{t}   & {\bf \hat o}_{t}'\, \hat\phi({\bf c}_{t}) U_{o} + {\bf \hat m}_{t} {\bf \hat g}_{t}
 \end{array}\right)	, \quad {\bf g}_{t} = {\bf \hat f}_{t}'\,{\bf \hat c}_{t-1} \, U_{f}+{\bf \hat i}_{t} \,\hat\phi'({\bf y}_{t})\, U_{h} + {\bf \hat i}_{t}'\,\hat\phi({\bf y}_{t})  U_{i}, \label{eq:lstm-jacobian}
\end{align}
where ${\bf m}_{t} = {\bf o}_{t} \odot \phi'({\bf c}_{t})	$.
As with the GRU, we wish to evaluate the LSTM resolvent, and thus characterize the spectrum, in the limit of large $N$. We accomplish this by again combining linearization with hermitian reduction, then performing the ensemble average over $U_{k}$ (for details see Appendix (\ref{app:lstm-rmt})). Anything we do not average over will enter into the final expression. However, as we demonstrate in the appendix, these expressions involve averages of state variables over the network. To state our results below, we assume the mean-field limit to simplify the resulting correlation functions. In this limit, $\left(U_{k} {\bf h}_{t-1} + {\bf b}_{k}\right)_{i} \sim \eta_{t-1}^{k}$ and $(U_{h} {\bf h}_{t-1} + {\bf b}_{h})_{i} \sim y_{t}$ become independent Gaussian processes, and $c_{t},h_{t}$ are random variables (for details see App. (\ref{app:gru-dmft})). For the gating variables we use the notation $k_{t} = \sigma(\eta_{t-1}^{k})$, and $k_{t}' = \sigma'(\eta_{t-1}^{k})$. Our first result concerns the trace of the resolvent of ${\bf J}_{t}$ in the large $N$ limit in terms of the parameters $a_{f},a_i,a_o,a_h$, as well as the spectral support. Note that we normalize the resolvent by $N$ as in (\ref{eq:density_resolvent}) instead of $2N$.

\begin{theorem}[LSTM Resolvent]\label{thm:LSTM-spec-curve}
The trace of the resolvent for the LSTM Jacobian in the large $N$ limit in the mean-field theory is 
\begin{align}
G(\lambda) &= \begin{dcases}
 	\mathbbm{E}\left[ \frac{|\lambda|^{2}(\bar{\lambda} - f_{t}) + \bar{\lambda}|\lambda - f_{t}|^{2} + F(\lambda) \left( (\bar{\lambda} - f_{t}) q_{t} + \bar{\lambda} p_{t}\right)}{|\lambda|^{2} |\lambda - f_{t}|^{2} + F(\lambda) \left( |\lambda - f_{t}|^{2} q_{t} + |\lambda|^{2} p_{t}\right)} \right]	, \quad &\lambda \in \Sigma,\\
\frac{1}{\lambda}+ \mathbbm{E}\left[  \frac{1}{\lambda - f_{t}} \right] 	, \quad &\lambda \in \Sigma^{c},\\
 \end{dcases} \label{eq:lstm-G}
\end{align}
where 
\begin{align}
  q_{t} = a_{o}^{2} o_{t}'^{2} \phi(c_{t})^{2}, \qquad  p_{t}  &=  o_{t}^{2} \phi'( c_{t})^{2} \left( a_{f}^{2} c_{t-1}^{2}  f_{t}'^{2} + a_{i}^{2} i_{t}'^{2} \phi(y_{t})^{2} + a_{h}^{2} i_{t}^{2} \phi'( y_{t})^{2}\right), \label{eq:lstm-shaping-var}
\end{align}
and $F(\lambda)\ge 0$ for $\lambda \in \Sigma$ solves the implicit equation

\begin{align}
	1 &= \mathbbm{E} \left[ \frac{|\lambda - f_{t}|^{2} q_{t} + |\lambda|^{2} p_{t}}{|\lambda|^{2}|\lambda - f_{t}|^{2} + F(\lambda) \left(|\lambda - f_{t}|^{2} q_{t} + |\lambda|^{2} p_{t}\right)}\right],	\label{eq:lstm-F}
\end{align}
whereas $F = 0$ outside the support $\lambda \in \Sigma^{c}$.

\end{theorem}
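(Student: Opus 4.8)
The plan is to reproduce, for the $2N\times 2N$ block matrix (\ref{eq:lstm-jacobian}), the same pipeline that produced Theorem~\ref{thm:GRU-spec-curve}: linearize, hermitize, average over the Gaussian weights with operator-valued free probability, then invoke the local chaos hypothesis to pass to the mean-field correlators. Concretely: (i) put ${\bf J}_t$ in a form where each of $U_f,U_i,U_o,U_h$ enters affinely and the fact that $U_f,U_i,U_h$ are \emph{shared} between the $(1,2)$ block ${\bf g}_t$ and the $(2,2)$ block $\hat{\bf m}_t{\bf g}_t$ is made explicit (for the GRU this was done by enlarging to $({\bf h}_t,{\bf r}_t,{\bf z}_t)$ and writing ${\bf M}_t$; here one enlarges analogously to $(\delta{\bf h}_t,\delta{\bf c}_t)$ together with the linearized preactivations $\delta{\bf a}_k=U_k\,\delta{\bf h}_{t-1}$ for $k\in\{f,i,o,h\}$, so that each block of the enlarged dynamical matrix is linear in a single $U_k$ and the original resolvent is a corner of the enlarged one); (ii) since the resulting pencil ${\bf I}_\lambda-{\bf M}_t$ is non-Hermitian, form the Feinberg--Zee hermitized matrix with off-diagonal blocks ${\bf I}_\lambda-{\bf M}_t$ and $({\bf I}_\lambda-{\bf M}_t)^\dagger$, regularized by $i\epsilon$, so that $\partial_{\bar\lambda}$ of the trace of the relevant corner reproduces (\ref{eq:density_resolvent}).

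The ensemble average over the independent Gaussian $U_k$ is then a matrix Dyson equation. Each $U_k$ is asymptotically free from the deterministic diagonal matrices built from the gate, cell and ${\bf y}_t$ variables (this is where the local chaos hypothesis enters, exactly as in the GRU: we take ${\bf h}_t,{\bf c}_t$ independent of the $U_k$), so its contribution to the self-energy superoperator is a single term proportional to $a_k^2$ weighted by the appropriate squared diagonal prefactors; summing over $k$ produces precisely the combinations $q_t$ and $p_t$ of (\ref{eq:lstm-shaping-var}) — $q_t$ collects the $U_o$ channel that feeds the hidden block through the output gate, while $p_t$ collects the $U_f,U_i,U_h$ channels that feed the cell block, each carried into the hidden block by $\hat{\bf m}_t$. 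Solving the Dyson equation ${\bf G}=({\bf Z}-\Sigma({\bf G}))^{-1}$ in the mean-field limit — where neuron averages of functions of state variables become expectations $\mathbbm{E}[\cdot]$ over the effective stochastic variables — the solution is block-diagonal in the hidden/cell indices with scalar entries depending on $\lambda,\bar\lambda$ and a single nontrivial order parameter, which we identify with $F(\lambda)$ (morally the off-diagonal self-energy of the hermitization); its self-consistency relation is (\ref{eq:lstm-F}), and it admits a nonnegative solution exactly on $\Sigma$.

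Finally, letting $\epsilon\to 0$: inside $\Sigma$ one has $F>0$, and assembling the hidden/cell corner of ${\bf G}$ by a Schur-complement reduction of the enlarged block structure gives the closed form (\ref{eq:lstm-G}); outside $\Sigma$, $F=0$ annihilates every random contribution, so ${\bf G}$ collapses to the resolvent of the deterministic skeleton of ${\bf J}_t$, namely ${\rm bdiag}(\hat{\bf f}_t,{\bf 0})$ (set all $U_k\to 0$ in (\ref{eq:lstm-jacobian}): the Jacobian becomes block lower-triangular with diagonal blocks $\hat{\bf f}_t$ and $0$), whose $N$-normalized trace of the resolvent is $\frac{1}{\lambda}+\mathbbm{E}\!\left[(\lambda-f_t)^{-1}\right]$; the spectral boundary $\partial\Sigma$ is then the locus where $F\to 0^{+}$.

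I expect the main obstacle to be the middle step: correctly bookkeeping the enlarged (roughly $6N$, doubled again by hermitization) block structure of the Dyson equation — in particular tracking which blocks each shared $U_k$ correlates, since all the linearized preactivations $\delta{\bf a}_k$ are driven by the same $\delta{\bf h}_{t-1}$ — and then algebraically collapsing the resulting coupled system down to the single scalar equation (\ref{eq:lstm-F}) and the rational expression (\ref{eq:lstm-G}). A secondary subtlety, as for the GRU, is justifying that the correlators entering $q_t,p_t$ are the self-consistent \emph{equal-time} mean-field correlators of Appendix~\ref{app:gru-dmft} rather than the naive untied-weights values; once the Dyson equation is set up correctly, extracting $G(\lambda)$ is essentially Schur-complement bookkeeping.
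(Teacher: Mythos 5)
Your proposal reproduces the paper's own proof essentially step for step: the same $6N$-dimensional linearization ${\bf M}_t=\hat A+\hat B\hat U\hat C$ with blocks for $(\delta{\bf c},\delta{\bf h})$ and the four preactivations, the same Feinberg--Zee hermitization and self-consistent Born (rank-one $R$-transform) closure, the same identification of the scalar order parameter $F$ from the off-diagonal blocks of the hermitized resolvent, the same collapse of the block Dyson system to (\ref{eq:lstm-G}) and (\ref{eq:lstm-F}), and the same observation that $F\to 0$ outside $\Sigma$ reduces $G$ to the resolvent of the block-lower-triangular deterministic skeleton ${\rm bdiag}(\hat{\bf f}_t,{\bf 0})$, giving $\frac1\lambda+\mathbbm{E}[(\lambda-f_t)^{-1}]$. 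Your grouping of the $U_o$ channel into $q_t$ and the $\hat{\bf m}_t$-carried $U_f,U_i,U_h$ channels into $p_t$ matches the paper's bookkeeping, and invoking the Appendix~\ref{app:gru-dmft} mean-field correlators at the final step is exactly what the paper does; the only cosmetic difference is your ``Schur-complement'' framing for what the paper writes out as an explicit $F=f_{99}(f_{33}+f_{44}+f_{55}+f_{66})$ ansatz in the Dyson system.
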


Outside the support, the resolvent becomes a holomorphic function of $\lambda$, whereas inside it depends on both $\lambda$ and $\bar{\lambda}$. The trace of the resolvent $G(\lambda)$ is continuous on the complex plane, which means the holomorphic solution must match the non-analytic solution precisely at the boundary of the support. This allows us to deduce the spectral boundary curve by setting $F(\lambda) = 0$ in (\ref{eq:lstm-F}).

\begin{corollary}[LSTM Boundary Curve]\label{cor:lstm-spec-curve}
The boundary of the spectral support is given by the curve
\begin{align}
\partial \Sigma({\bf J}_{t}) = \{ \lambda \in \mathbbm{C}: \mathcal{S}(\lambda) = 0 \}	\label{eq:lstm_spec_curve}
\end{align}
	where
	\begin{align}
	\mathcal{S}(\lambda) = 	\frac{\mathbbm{E}\left[ q_{t}\right]}{|\lambda|^{2}} + \mathbbm{E}\left[ \frac{p_{t}}{|\lambda - f_{t}|^{2}}\right]	-1.\label{eq:lstm_S}
	\end{align}

\end{corollary}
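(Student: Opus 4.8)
The plan is to extract the boundary curve from Theorem \ref{thm:LSTM-spec-curve} by using the continuity of the trace of the resolvent $G(\lambda)$ across $\partial\Sigma$, which forces the non-analytic (inside-support) expression and the holomorphic (outside-support) expression to agree there. Since $F(\lambda)=0$ identically on $\Sigma^c$ and $F(\lambda)\ge 0$ on $\Sigma$, the natural candidate for the boundary is the locus where the inside-support solution of the implicit equation (\ref{eq:lstm-F}) first admits $F(\lambda)=0$. So the first step is simply to substitute $F(\lambda)=0$ into (\ref{eq:lstm-F}):
\begin{align}
1 = \mathbbm{E}\left[\frac{|\lambda - f_{t}|^{2} q_{t} + |\lambda|^{2} p_{t}}{|\lambda|^{2}|\lambda - f_{t}|^{2}}\right].\label{eq:proof-boundary-sub}
\end{align}
Splitting the fraction termwise gives $1 = \mathbbm{E}[q_t]/|\lambda|^2 + \mathbbm{E}[p_t/|\lambda - f_t|^2]$, which is exactly $\mathcal{S}(\lambda)=0$ with $\mathcal{S}$ as in (\ref{eq:lstm_S}). (Here I use that $q_t$ is independent of $\lambda$, so $\mathbbm{E}[q_t |\lambda - f_t|^2/(|\lambda|^2|\lambda - f_t|^2)] = \mathbbm{E}[q_t]/|\lambda|^2$.)

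The second step is to verify that this algebraic locus genuinely is the \emph{boundary} of the support and not, say, an interior curve or a spurious branch — i.e. that $\mathcal{S}(\lambda) = 0$ separates the region where (\ref{eq:lstm-F}) has a positive solution $F(\lambda)>0$ from the region where it does not. The key observation is that the right-hand side of (\ref{eq:lstm-F}), viewed as a function of $F\ge 0$ at fixed $\lambda$, is strictly decreasing in $F$ (each summand has $F$ in the denominator with a nonnegative coefficient $|\lambda - f_t|^2 q_t + |\lambda|^2 p_t \ge 0$), tends to $\mathcal{S}(\lambda)+1$ as $F\to 0^+$, and tends to $0$ as $F\to\infty$. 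Hence a solution $F(\lambda)>0$ exists precisely when $\mathcal{S}(\lambda)+1 > 1$, i.e. $\mathcal{S}(\lambda) > 0$; the solution degenerates to $F=0$ exactly on $\mathcal{S}(\lambda)=0$; and for $\mathcal{S}(\lambda)<0$ the only nonnegative solution is $F=0$, consistent with $\lambda\in\Sigma^c$. This monotonicity argument simultaneously shows $\Sigma = \{\mathcal{S}(\lambda)\ge 0\}$ and that its boundary is $\{\mathcal{S}(\lambda)=0\}$, mirroring the structure already used for the GRU in Theorem \ref{thm:GRU-spec-curve} and its Corollary \ref{cor:fp-stability}.

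The third step is a consistency check at the matching: one should confirm that as $F(\lambda)\to 0^+$ the inside-support branch of $G(\lambda)$ in (\ref{eq:lstm-G}) reduces continuously to the holomorphic branch $1/\lambda + \mathbbm{E}[1/(\lambda - f_t)]$. Setting $F=0$ in the first line of (\ref{eq:lstm-G}) collapses the integrand to $\big(|\lambda|^2(\bar\lambda - f_t) + \bar\lambda|\lambda - f_t|^2\big)/\big(|\lambda|^2|\lambda - f_t|^2\big) = (\bar\lambda - f_t)/|\lambda - f_t|^2 + \bar\lambda/|\lambda|^2 = 1/(\lambda - f_t) + 1/\lambda$ (using $\bar z/|z|^2 = 1/z$), so continuity holds automatically; this both validates the theorem's internal consistency and confirms that $F=0$ is the right condition to impose. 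I expect the main obstacle to be not any of these computations — which are routine once the monotonicity structure is spotted — but rather the need to be careful about \emph{which} root of the implicit equation one is tracking when $\mathcal{S}(\lambda)$ is large (uniqueness of the nonnegative root, handled by strict monotonicity) and about the measure-zero exceptional sets where $\lambda = 0$ or $\lambda = f_t$ for some realization (handled by the mean-field smoothing of $f_t$, which makes these events negligible). With those caveats dispatched, the corollary follows directly.
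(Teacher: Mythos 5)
Your proposal is correct and takes essentially the same route as the paper: the paper arrives at the boundary by dividing the Dyson equation for $F$ through by $F$ (yielding exactly (\ref{eq:lstm-F})) and then setting $F\to 0^+$, which is precisely your Step 1, and your Step 3 matches the paper's remark that the holomorphic and non-analytic branches of $G$ glue continuously at $F=0$. Your Step 2 (strict monotonicity of the right-hand side of (\ref{eq:lstm-F}) in $F$, forcing a unique nonnegative root that vanishes exactly on $\mathcal{S}=0$) is a small but worthwhile addition that the paper leaves implicit; it cleanly establishes that $\{\mathcal{S}=0\}$ is genuinely the topological boundary of $\Sigma=\{\mathcal{S}\ge 0\}$ rather than an interior or spurious level set.
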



\begin{figure}[h]
\begin{centering}
\includegraphics[scale=0.9]{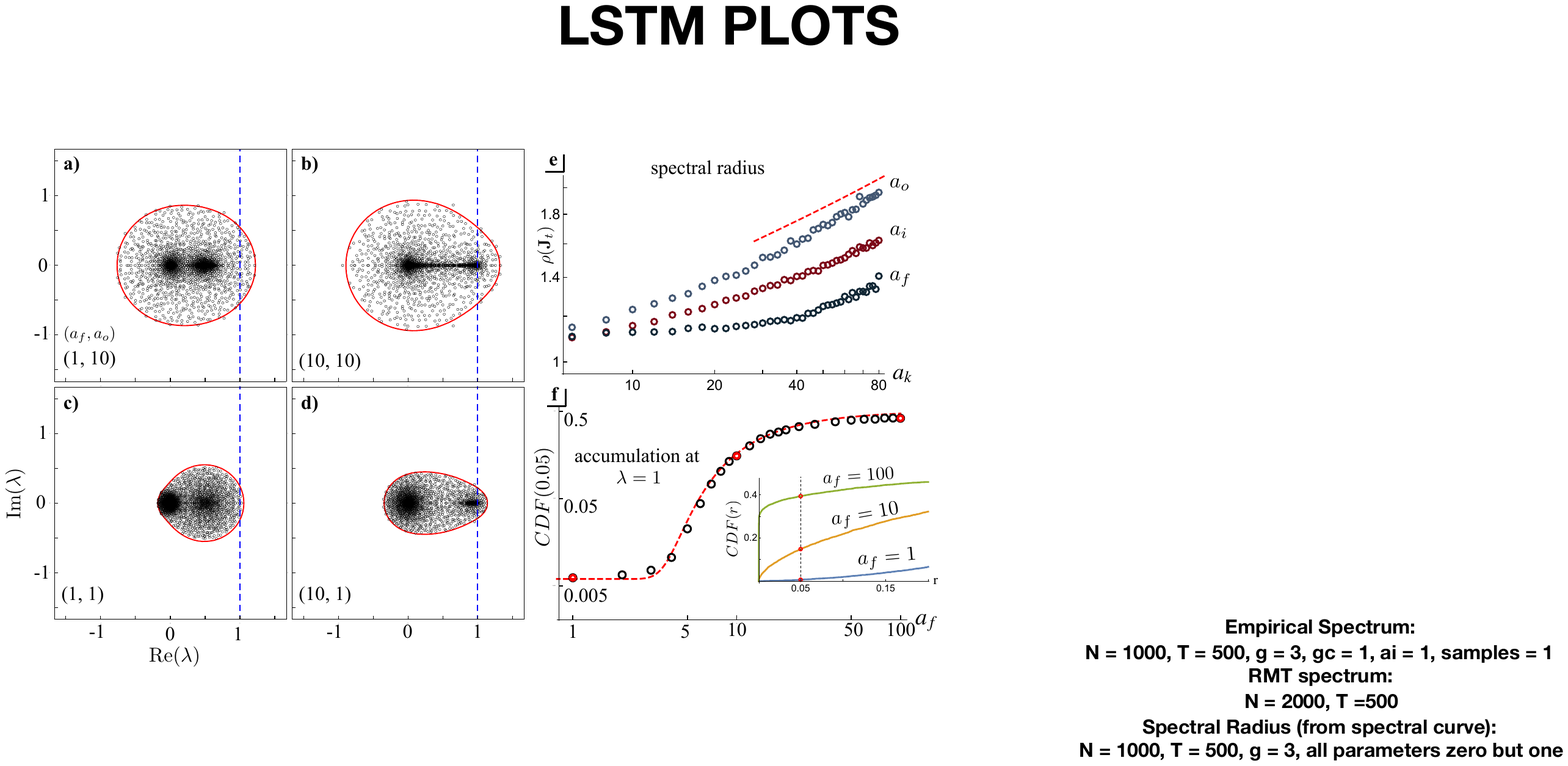}
\par\end{centering}
\caption{({\bf a - d}) Empirical spectrum of LSTM Jacobian in steady state (black circles) compared to RMT spectral boundary prediction (\ref{eq:lstm_spec_curve}) (red) using steady state correlation functions: $N = 1000$, $a_{h} = 3$, $a_{i} = 1$, and the values of $(a_{f},a_{o})$ are indicated for each plot. ({\bf e}) Numerically computed spectral radius as a function of $a_{k}$ when all other parameters ($a_{k'}$, $k' \ne k$) are set to zero, with $\sqrt{a_{k}}$ scaling (dashed red) to guide the eye. ({\bf f}) Cumulative distribution of eigenvalues around a small ball of radius $r = 0.05$ as a function of $a_{f}$, with $a_{i} = a_{o} = 0$ and zero biases, compared to scaling function (\ref{eq:cdf_scaling}) (dashed red). Inset shows CDF as a function for $r$ for different $a_{f}$ for $a_{i} = a_{o} = 0$} \label{fig:lstm-plots}
\end{figure}
 
We present the general result for the density of eigenvalues in the appendix, which is not very illuminating. However, around the zero fixed point, the density of eigenvalues simplifies considerably:

\begin{corollary}[Zero Fixed Point]\label{cor:lstm-zero-fp}
	Around the zero fixed point $c_{t} =  h_{t} = 0$, $ p_{t} = p =\sigma(b_{o})^{2} \sigma(b_{i})^{2} a_{h}^{2}$, $q_{t} = 0$, and the density of eigenvalues becomes
	
	\begin{align}
	\mu(\lambda) = \delta(\lambda) + \frac{1}{\pi p}, \quad {\rm for}\,\, \,|\lambda - f|\le \sqrt{p}	
	\end{align}
and zero otherwise.
	
\end{corollary}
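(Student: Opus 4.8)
\noindent\emph{Proof proposal.} The cleanest route is to evaluate the Jacobian (\ref{eq:lstm-jacobian}) at the zero fixed point and reduce it to a single Ginibre matrix. First I would substitute $c_t = h_t = 0$. Since $\phi=\tanh$ has $\phi(0)=0$ and $\phi'(0)=1$, and since $h_{t-1}=0$ pins every gate to its bias value, we get $f_t=\sigma(b_f)$, $i_t=\sigma(b_i)$, $o_t=\sigma(b_o)$ and $m_t=o_t\phi'(0)=\sigma(b_o)$, all independent of the neuron index; moreover $c_{t-1}=0$, and --- since a zero fixed point exists (with no input) only when $b_h=0$, the regime of the corollary --- $y_t=0$, so $\hat\phi(\mathbf{y}_t)=0$ and $\hat\phi'(\mathbf{y}_t)=\mathbbm{1}_N$. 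Plugging these into (\ref{eq:lstm-jacobian}) kills the terms carrying $\hat\phi(\mathbf{c}_t)$, $\hat{\mathbf{c}}_{t-1}$ and $\hat\phi(\mathbf{y}_t)$, collapsing $\mathbf{g}_t$ to $G:=\sigma(b_i)\,U_h$ and leaving
\[ \mathbf{J}_0 = \begin{pmatrix} \sigma(b_f)\,\mathbbm{1}_{N} & G \\[2pt] \sigma(b_o)\sigma(b_f)\,\mathbbm{1}_{N} & \sigma(b_o)\,G \end{pmatrix}. \]

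The key structural fact is that the two block-rows are proportional, so $\mathbf{J}_0$ has rank at most $N$ and factors as $\mathbf{J}_0=AB$ with $A\in\mathbb{R}^{2N\times N}$ stacking $\mathbbm{1}_N$ above $\sigma(b_o)\mathbbm{1}_N$, and $B=(\sigma(b_f)\mathbbm{1}_N\ \ G)\in\mathbb{R}^{N\times 2N}$. Since $AB$ and $BA$ share the same nonzero spectrum with multiplicities, the $2N$ eigenvalues of $\mathbf{J}_0$ split into $N$ zeros together with the $N$ eigenvalues of the $N\times N$ matrix $BA=\sigma(b_f)\mathbbm{1}_N+\sigma(b_o)\,G=\sigma(b_f)\mathbbm{1}_N+\sigma(b_o)\sigma(b_i)\,U_h$. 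At the zero fixed point $U_h$ is the only source of randomness and enters linearly with no dynamically induced correlations (so the independence issue discussed above does not arise), hence the circular law applies: the empirical spectral distribution of $U_h$ converges to the uniform law on the disk of radius $a_h$, and therefore that of $BA$ to the uniform law of density $1/(\pi p)$ on $\{\lambda:|\lambda-\sigma(b_f)|\le\sqrt p\}$ with $p=\sigma(b_o)^2\sigma(b_i)^2a_h^2$. Normalizing the eigenvalue measure by $N$ as in (\ref{eq:density_resolvent}), the $N$ exact zeros contribute $\delta(\lambda)$ and the bulk contributes $\tfrac{1}{\pi p}\mathbf{1}_{|\lambda-\sigma(b_f)|\le\sqrt p}$, which is the stated density (with $f\equiv\sigma(b_f)$).

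As an independent check, the same result drops out of Theorem \ref{thm:LSTM-spec-curve}: setting $q_t\equiv0$, $p_t\equiv p$ and $f_t\equiv f=\sigma(b_f)$ makes all expectations trivial, Eq.~(\ref{eq:lstm-F}) solves to $F(\lambda)=1-|\lambda-f|^2/p$ (nonnegative precisely on the disk $|\lambda-f|\le\sqrt p$), and (\ref{eq:lstm-G}) reduces to $G(\lambda)=\tfrac1\lambda+\tfrac{\bar\lambda-f}{p}$ inside the support and $G(\lambda)=\tfrac1\lambda+\tfrac1{\lambda-f}$ outside; applying $\mu(\lambda)=\tfrac1\pi\partial_{\bar\lambda}G(\lambda)$ with the identity $\partial_{\bar\lambda}(1/\lambda)=\pi\delta(\lambda)$ recovers $\mu(\lambda)=\delta(\lambda)+\tfrac{1}{\pi p}$ on the disk.

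I do not anticipate a real obstacle here; the one point needing care is the bookkeeping of the atom at the origin --- checking that the $AB$--$BA$ reduction genuinely produces $N$ (and not more) zero eigenvalues, which holds because $BA$ is almost surely nonsingular for random $U_h$ --- and, along the resolvent route, tracking whether the $1/\lambda$ pole falls inside or outside the support so that $\delta(\lambda)$ is not double-counted.
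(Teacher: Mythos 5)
Your proposal is correct. Your second argument---substituting $q_t=0$, $p_t=p$, $f_t=\sigma(b_f)$ into Theorem \ref{thm:LSTM-spec-curve}, solving $F(\lambda)=1-|\lambda-f|^2/p$, obtaining $G(\lambda)=\tfrac{1}{\lambda}+\tfrac{\bar\lambda - f}{p}$ inside the support, and applying $\mu=\tfrac{1}{\pi}\partial_{\bar\lambda}G$---is exactly the route taken in the paper's appendix, which simply specializes the general Dyson-equation machinery to the zero fixed point and biases-only setting.

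Your primary argument, however, is genuinely different and in some ways preferable for this special case. You observe that at $c_t=h_t=0$ (hence $b_h=0$ and $y_t=0$) the $2N\times 2N$ Jacobian collapses to the rank-$N$ product $\mathbf{J}_0=AB$ with $A=\binom{\mathbbm{1}}{\sigma(b_o)\mathbbm{1}}$ and $B=(\sigma(b_f)\mathbbm{1}\ \ \sigma(b_i)U_h)$, invoke the classical identity that $AB$ and $BA$ share their nonzero spectrum (with $2N-N=N$ extra zeros on the $AB$ side), reduce to the $N\times N$ matrix $BA=\sigma(b_f)\mathbbm{1}+\sigma(b_o)\sigma(b_i)U_h$, and read off the density from the circular law for the Ginibre matrix $U_h$. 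This bypasses the hermitian reduction, linearization, and self-consistent Born approximation entirely and makes the atom $\delta(\lambda)$ of weight one (in the paper's normalization by $N$) structurally transparent: it is the kernel of the block $A$. The tradeoff is that this factorization exploits the exact proportionality of the two block rows at the zero FP and does not generalize to time-dependent states, where the paper's resolvent machinery is still needed; but as a proof of this particular corollary it is shorter, more elementary, and gives an independent sanity check on the heavier formalism. Your handling of the bookkeeping points---that $b_h=0$ is forced by the existence of the zero FP, that $BA$ is a.s.\ nonsingular so no additional zeros appear, and that the $1/\lambda$ pole in the holomorphic branch must be tracked consistently so $\delta(\lambda)$ is not double-counted---is also correct.
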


Note with our normalization, $\int d \mu(\lambda) = 2$. From this, we can directly infer the stability condition for the zero fixed point

\begin{corollary}\label{cor:lstm-zero-fp-stability}
The zero fixed point is {\bf stable} for $$\sigma(b_{f}) + \sigma(b_{o})\sigma(b_{i}) a_{h} <1$$
\end{corollary}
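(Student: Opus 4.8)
The plan is to read the stability condition straight off the eigenvalue density supplied by Corollary~(\ref{cor:lstm-zero-fp}). Recall that for the discrete-time map (\ref{eq:lstm_eom}) a fixed point is linearly (asymptotically) stable exactly when the spectral radius $\rho({\bf J}_{t})$ of the state-to-state Jacobian evaluated there is strictly less than unity, and marginally stable when it equals unity. So the entire task reduces to computing $\rho({\bf J}_{t})$ at the zero fixed point from the spectral data we already have.

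First I would invoke Corollary~(\ref{cor:lstm-zero-fp}): at $c_{t}=h_{t}=0$ one has $q_{t}=0$, $p_{t}=p=\sigma(b_{o})^{2}\sigma(b_{i})^{2}a_{h}^{2}$, and $f_{t}=f=\sigma(b_{f})$, so the spectrum of ${\bf J}_{t}$ is supported on the union of the atom $\{0\}$ (the $\delta(\lambda)$ term) and the closed disk $\{\lambda\in\mathbbm{C}:|\lambda-f|\le\sqrt{p}\}$. (Equivalently, one can get the boundary from Corollary~(\ref{cor:lstm-spec-curve}): substituting $q_{t}=0$, $p_{t}=p$, $f_{t}=f$ into (\ref{eq:lstm_S}) gives $\mathcal{S}(\lambda)=p/|\lambda-f|^{2}-1$, so $\partial\Sigma$ is the circle $|\lambda-f|=\sqrt{p}$ and $\Sigma$ its interior.) Next I would compute the spectral radius as the supremum of $|\lambda|$ over this support: the atom at the origin contributes $0$, and since $f=\sigma(b_{f})>0$ is real and positive, the point of the disk farthest from the origin is $\lambda=f+\sqrt{p}$ on the positive real axis, so
\begin{align}
\rho({\bf J}_{t}) = f+\sqrt{p} = \sigma(b_{f}) + \sigma(b_{o})\sigma(b_{i})\,a_{h}.
\end{align}
Imposing $\rho({\bf J}_{t})<1$ then yields the claimed inequality, with equality giving the marginal case.

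There is essentially no obstacle here beyond bookkeeping already carried out in Corollary~(\ref{cor:lstm-zero-fp}); the only point worth stating carefully is that the maximum modulus over a disk whose centre lies on the positive real axis is attained at its rightmost real point, which uses $f\ge 0$ (automatic since $\sigma>0$), together with $\sqrt{p}=\sigma(b_{o})\sigma(b_{i})a_{h}\ge 0$ being the positive branch. One should also keep in mind that this is a statement about the \emph{mean-field} Jacobian spectrum in the large-$N$ limit: finite-size corrections can push an $O(1/\sqrt{N})$ tail of eigenvalues slightly outside the predicted support, so the condition is sharp only asymptotically, exactly as for the GRU analogue in Corollary~(\ref{corr:fp-gru}).
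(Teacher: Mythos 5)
Your proposal is correct and follows essentially the same route as the paper: read off the spectral support from Corollary~(\ref{cor:lstm-zero-fp}) (a disk of radius $\sqrt{p}=\sigma(b_o)\sigma(b_i)a_h$ centered at $f=\sigma(b_f)$ plus an atom at the origin), note the spectral radius is $f+\sqrt{p}$, and demand it be less than one. The paper states this as ``straightforward substitution'' in Appendix~\ref{app:lstm-rmt}; your write-up simply spells out the bookkeeping (in particular the observation that the farthest point of a disk centered on the positive real axis is its rightmost real point) and correctly flags the large-$N$ caveat.
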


It is worth emphasizing that the density of eigenvalues depends on most state variables (e.g. input and output gates, activation functions) only through the combinations which appear in (\ref{eq:lstm-shaping-var}). An exception is the forget gate, which enters separately and in a rather influential way. Consequently, we find that very naturally the effects of the input and output gates in shaping the spectrum are roughly comparable, while the forget gate plays a qualitatively distinct role.

\subsection{The {\it forget} gate facilitates slow modes}
The {\it forget} gate in the LSTM has a role very similar to that of the 
GRU {\it update} gate. In particular, large values of $a_f$
facilitate the accumulation of eigenvalues near unity thus leading to slow
modes (Fig. (\ref{fig:lstm-plots}f)).
 In the  limit $a_{f} = \infty$, using again a piecewise linear approximation of the activation functions $\phi$, we find the density of the eigenvalues 
\begin{align}
\mu(\lambda) = \left(1 + \frac{(1 - \eta)}{2}\right) \delta(\lambda) +	\frac{1}{2} \delta(\lambda - 1) + \frac{1}{\pi R^{2}} {\bf 1}_{\lambda \in \Sigma}, \quad \Sigma: = \{ \lambda \in \mathbbm{C}: |\lambda| \le \frac{\sqrt{\eta}}{\sqrt{2}} R \},\label{eq:lstm-density-binary-forget}
\end{align}
where $R = \sigma(b_{o}) \sigma(b_{i}) a_{h}$, and $\eta$ is again the fraction of activations which are unsaturated (see App. (\ref{app:lstm-only-forget})). As before, we see that ramping up $a_{f}$ leads to an accumulation of eigenvalues at $\lambda = 1$, becoming a delta function in the extreme limit. 

When $a_{o} = 0$, the first $N$ rows of ${\bf J}_{t}$ (\ref{eq:lstm-jacobian}) become linearly dependent with the second $N$ (due to $U_{0}$ vanishing), guaranteeing at least $N$ exact zero eigenvalues. The accumulation at zero will generally grow in proportion to the fraction of saturated activations. Finally, in the limit $a_{f} \rightarrow \infty$, the mean density away from these accumulation points is flat, controlled by the input and output biases as well as the variance $a_{h}$. 

An important assumption in arriving at this expression is that the forget gate is completely saturated, essentially becoming a binary variable. Practically speaking, this allows us to neglect contributions from $f'_{t}$. As a result, we find a spectral radius which is independent of $a_{f}$, fixed by other parameters in the problem. However, we observe numerically in Fig.(\ref{fig:lstm-plots}) that the spectral radius in fact grows with $a_{f}$, albeit slower than with $a_{i}$ and $a_{o}$. The scaling of the spectral radius in this regime is shown to be no greater than $\sqrt{a_{f}}$ in Appendix (\ref{app:lstm-only-forget}). 

\subsection{The {\it input} and {\it output} gates modulate the spectral radius}

The {\it output} and {\it input} gates in the LSTM have similar effects on the shape of the Jacobian and are comparable in this regard to the {\it reset} gate in the GRU; the effect of the {\it forget} gate on the spectral radius is modest (see Fig.(\ref{fig:lstm-plots}e). Setting $a_{f} = 0$, the spectral boundary curve simplifies considerably to read
\begin{align}
1 = \frac{\mathbbm{E}[q_{t}]}{|\lambda|^{2}} + \frac{\mathbbm{E}[ p_{t}]}{|\lambda - f|^{2}}.	
\end{align}
The spectral radius will consequently be controlled by the asymptotic behavior of the coefficients $\mathbbm{E}[q_{t}]$ and $\mathbbm{E}[p_{t}]$. First, we consider $a_{o} = 0$. In this case, it immediately follows that $q_{t} = 0$, and we are left with only $p_{t}$. The spectral support then becomes a circle centered on $\sigma(b_{f})$ with radius $\sqrt{ \mathbbm{E}[p_{t}]}$. In appendix (\ref{app:lstm-only-input}), we show that when $a_{o} = 0$, $\mathbbm{E}[p_{t}] = O(a_{i})$. 

Isolating the effects of the output gate by setting $a_{i} = 0$, we find that $\mathbbm{E}[p_{t}] = O(1)$, whereas $\mathbbm{E}[q_{t}] =  O(a_{o})$.

Gathering these results, we find a universal scaling of the spectral radius with the gate variance:

\begin{proposition}
The spectral radius of the LSTM Jacobian scales asymptotically as
\begin{align}
    \rho({\bf J}_{t}) = O(\sqrt{a_{k}}), \quad a_{k}\to \infty, \quad a_{k' \ne k} = {\rm fixed} 
\end{align}
\end{proposition}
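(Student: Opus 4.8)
The plan is to read the spectral radius off the boundary curve of Corollary~\ref{cor:lstm-spec-curve} and to show that in each limit $a_{k}\to\infty$ the quantity $\mathbbm{E}[q_{t}]+\mathbbm{E}[p_{t}]$ grows only linearly in $a_{k}$; since $\rho({\bf J}_{t})^{2}$ is controlled by that quantity, the $\sqrt{a_{k}}$ scaling follows. First I would pin down the support: substituting $F=0$ into (\ref{eq:lstm-F}) recovers $\mathcal{S}(\lambda)=0$, and since the right-hand side of (\ref{eq:lstm-F}) is monotonically decreasing in $F\ge0$ with value $\mathcal{S}(\lambda)+1$ at $F=0$, a nonnegative solution $F(\lambda)$ exists iff $\mathcal{S}(\lambda)\ge0$; hence $\Sigma({\bf J}_{t})=\{\lambda:\mathcal{S}(\lambda)\ge0\}$. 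Now for $\lambda\in\Sigma$ with $|\lambda|>2$ one has $|\lambda-f_{t}|\ge|\lambda|-1\ge|\lambda|/2$ because $f_{t}\in(0,1)$, so $\mathcal{S}(\lambda)\ge0$ forces $|\lambda|^{2}\le\mathbbm{E}[q_{t}]+4\,\mathbbm{E}[p_{t}]$, and therefore $\rho({\bf J}_{t})\le\max\!\big(2,\sqrt{\mathbbm{E}[q_{t}]+4\,\mathbbm{E}[p_{t}]}\big)$. (A sharper constant, and a matching $\Theta(\sqrt{a_{k}})$ lower bound, would follow from the monotonicity/angular argument of the proof of Corollary~\ref{cor:fp-stability}, which places the largest-modulus boundary point on the positive real axis; for the $O(\cdot)$ claim this crude bound is enough.) It thus remains to prove $\mathbbm{E}[q_{t}]+\mathbbm{E}[p_{t}]=O(a_{k})$ for each $k\in\{f,i,o,h\}$.

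The mechanism is the same in the three easy cases $k\in\{i,o,h\}$. In (\ref{eq:lstm-shaping-var}) each occurrence of a variance $a_{k}^{2}$ sits in front of the square of a gate derivative $k_{t}'=\sigma'(\eta_{t-1}^{k})$ or an activation derivative $\phi'(\cdot)$ whose argument is, in the MFT, a centered Gaussian with standard deviation $s\propto a_{k}$. The elementary scaling $\int Dx\,g(sx)=\Theta(1/s)$ for a fixed integrable $g$ (rescale the integration variable) then gives $\mathbbm{E}[(k_{t}')^{2}]=\Theta(1/a_{k})$, and similarly $\mathbbm{E}[\phi'(y_{t})^{2}]=\Theta(1/a_{h})$ for $k=h$, \emph{provided} the relevant equal-time correlation functions stay bounded and bounded away from zero; I would check the latter from the self-consistent equations of Appendix~\ref{app:gru-dmft} (for instance $C_{h}=\mathbbm{E}[o_{t}^{2}\phi(c_{t})^{2}]\le1$, and it cannot collapse to zero because the output gate and $\phi(c_{t})$ are not perfectly anticorrelated). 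Every remaining factor in $p_{t},q_{t}$ is bounded --- gates lie in $(0,1)$, $\phi,\phi'$ are bounded, and for $k\in\{i,o,h\}$ the cell state is driven by the bounded term $\phi({\bf y}_{t})$ through a contraction $f_{t}<1$, so its second moment $C_{c}$ stays bounded uniformly in the growing parameter (with $a_{f}$ fixed), as I would read off the steady-state MFT. Multiplying, the term carrying $a_{k}^{2}$ contributes $a_{k}^{2}\cdot\Theta(1/a_{k})\cdot O(1)=O(a_{k})$ --- via $\mathbbm{E}[q_{t}]$ for $k=o$, via the $a_{i}^{2}i_{t}'^{2}\phi(y_{t})^{2}$ term of $p_{t}$ for $k=i$, and via the $a_{h}^{2}i_{t}^{2}\phi'(y_{t})^{2}$ term for $k=h$ --- while all other terms carry only a fixed variance and are $O(1)$. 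This recovers and unifies the per-gate estimates already recorded in the appendices.

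The case $k=f$ is the one genuine difficulty, and I would handle it as in Appendix~\ref{app:lstm-only-forget}. The obstruction is that the coefficient $a_{f}^{2}c_{t-1}^{2}f_{t}'^{2}$ of $p_{t}$ contains the cell state itself, and $\mathbbm{E}[c_{t-1}^{2}]$ need not remain bounded as $a_{f}\to\infty$ (forget gates cluster near $1$, memory lengthens, and the steady-state cell variance can be large or even diverge), so a naive product bound or Cauchy--Schwarz is too lossy and would give a worse exponent. The resolution rests on two facts: $f_{t}'^{2}=\sigma'(\eta_{t-1}^{f})^{2}$ is concentrated on $|\eta_{t-1}^{f}|=O(1)$, i.e.\ on $f_{t}\approx\frac{1}{2}$ rather than near $f_{t}=1$; and the MFT makes the preactivation process $\{\eta_{s}^{f}\}$ positively correlated in time with equal-time variance maximal, so conditioning on $\eta_{t-1}^{f}=u$ with $|u|=O(1)$ pins the recent-past forget gates away from $1$ and keeps $\mathbbm{E}[c_{t-1}^{2}\mid\eta_{t-1}^{f}=u]$ of order $1$ uniformly in $a_{f}$ on the bulk of the support of $\sigma'(u)^{2}$. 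Writing $\mathbbm{E}[c_{t-1}^{2}f_{t}'^{2}]=\int Dx\,\sigma'(sx)^{2}\,\mathbbm{E}[c_{t-1}^{2}\mid\eta_{t-1}^{f}=sx]$ with $s=\sqrt{a_{f}^{2}C_{h}+v_{f}}$, and balancing the at-most-exponentially-growing conditional moment against the exponentially decaying $\sigma'(u)^{2}$ in the tail, yields $\mathbbm{E}[c_{t-1}^{2}f_{t}'^{2}]=O(1/a_{f})$, hence $\mathbbm{E}[p_{t}]=O(a_{f})$ (the other terms of $p_{t},q_{t}$ carry fixed variances and bounded factors $\phi',i_{t}',\ldots$, so they are $O(1)$). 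Combined with the previous paragraph this gives $\rho({\bf J}_{t})=O(\sqrt{a_{k}})$ for all four gates. I expect that tail estimate --- quantifying how fast $\mathbbm{E}[c_{t-1}^{2}\mid\eta_{t-1}^{f}=u]$ can grow in $u$ under the MFT correlation structure --- to be the main obstacle; everything else is bookkeeping on the boundary curve (\ref{eq:lstm_S}).
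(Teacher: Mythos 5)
Your proposal tracks the paper's own argument closely: both read the spectral radius off the LSTM boundary curve of Corollary~\ref{cor:lstm-spec-curve} and then show that $\mathbbm{E}[q_t]$ and $\mathbbm{E}[p_t]$ grow only linearly in the relevant gate variance, using the Gaussian scaling $\mathbbm{E}[(k_t')^2]=\Theta(1/a_k)$ established in Appendices~\ref{app:lstm-only-forget}--\ref{app:lstm-only-output}. Your explicit bound $\rho({\bf J}_t)\le\max\bigl(2,\sqrt{\mathbbm{E}[q_t]+4\,\mathbbm{E}[p_t]}\bigr)$ is a welcome sharpening of the paper's informal statement that the spectral radius is ``controlled by'' these coefficients, and you keep $a_{k'\ne k}$ merely fixed (as the proposition states) rather than set to zero as the appendices do, which is a mild but genuine gain in generality. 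For the forget gate you correctly isolate the same obstruction the paper does --- the term $a_f^2 c_{t-1}^2 f_t'^2$ couples the cell state to the gate derivative, and one needs $\mathbbm{E}[c_{t-1}^2\mid\eta_{t-1}^f=u]=O(1)$ for $|u|=O(1)$ --- and the paper frankly records this as an unproven ``reasonable assumption.'' Your specific justification, that conditioning on $\eta_{t-1}^f=u$ with $|u|=O(1)$ ``pins the recent-past forget gates away from $1$,'' does not hold as stated: the conditional \emph{mean} of $\eta_s^f$ is brought to $O(1)$, but the conditional \emph{variance} remains $O(a_f^2)$ unless the temporal correlations $C_h(t,s)/C_h(0)$ approach one, so the recent-past gates are still essentially binary under the conditioning. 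You have in any case flagged the tail estimate as the main obstacle, which is exactly where the paper's own derivation also stops, so the sketch is at the same level of rigor as the published argument.
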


We compare this scaling behavior with numerical experiments in Fig. (\ref{fig:lstm-plots}), showing good agreement. We give additional arguments in Appendix (\ref{app:lstm-only-input}) and (\ref{app:lstm-only-output}) that in fact $\rho({\bf J}_{t}) = \Theta(\sqrt{a_{k}})$ for the input and output gates $k = i, o$. This indicates the spectral radius indeed strictly grows with these gate variances. However, we lack a tight lower bound for the growth of the spectral radius with the forget gate, most likely because the lower bound is an order one constant, as suggested by the $a_{f} = \infty$ analysis. Nevertheless, we are able to observe the $\sqrt{a_{f}}$ scaling in Fig. (\ref{fig:lstm-plots}) over a modest range of $a_{f}$.


\section{Discussion: future directions
and relation to training}

We have undertaken a detailed analysis of the 
dynamics of randomly initialized GRUs and LSTMs. 
These dynamical properties are critical to how 
the networks  process temporal input.
We showed that the gates which affect the
rate of integration -- the update gate in the GRU and
the forget gate in the LSTM -- can faciliate long timescales
by shaping the Jacobian spectrum to cluster near unity.
Moreover, we showed how the update gate in the GRU, in
certain regimes, can make the system {\it marginally stable}.

Marginal stability can propagate perturbations over a spectrum of long
timescales and facilitate the existence of
a continuous manifold of attractors. Recently \cite{Niru2019} found that
trained gated-RNNs naturally discover solutions for which
the dynamics tends toward a line attractor.
Our analysis shows that marginal stability is a generic feature in
the presence of binary switch-like gates that control the rate of integration.
We showed that the reset gate in the GRU and the output and input gates
in the LSTM control the spectral radius of the Jacobian and
thus shape the complexity of the dynamics. For the GRU, we
obtained a full phase-diagram of dynamics and showed how the reset gate
can lead to a proliferation of unstable fixed points coinciding
with the abrupt appearance of a chaotic attractor. 

How does our study of the dynamical properties of gated RNNs
inform us about training performance? While the dynamical
properties are crucial for performance, these properties will change
during training;  however, initialization is known to have a significant impact on trainability \cite{Sutskever2012}. Furthermore, in feedforward networks, it was shown that parameters do not change significantly from initialization when training large networks with appropriately small learning rate \cite{jacot2018,Lee2018}. This regime is relatively unexplored for RNNs. A detailed study
of how the dynamical properties change during training will be
undertaken elsewhere, but in Appendix (\ref{app:train_seq_MNIST}) we provide very preliminary
results on training GRUs on a sequential task that requires
processing long-time dependencies. Overall, we observe (see Fig \ref{fig:training_fig_1})
that both training
time and training accuracy improve with larger values of $a_z$ (more
slow modes). Interestingly, we observe that values of $a_h$ slightly
above the critical point $a_h=2.0$  (in the marginally stable region indicated in Fig. (\ref{fig:fp-phase})) give the best performance (for high $a_z$).
This bears some semblance to the benefits of ``edge of chaos" initialization in vanilla RNNs \cite{Bertschinger2004}, and feed forward networks \cite{glorot2010understanding,schoenholz2016deep}. The improved performance with increasing $a_{z}$ furthermore suggests that proximity to marginal stability is also a boon for training. This might explain the relative ease of training GRUs and LSTMs, since, as we showed, gating in both architectures quite naturally leads to marginal stability.
A more thorough characterization of how dynamics evolve during training and the
relation to training performance will be undertaken in future work. 
We have provided analytical tools to study the dynamics of RNNs that we
expect can be fruitfully applied in contexts other than at initialization.

\paragraph{Acknowledgements} We would like to thank Devon Wood-Thomas for comments on the draft. KK is supported by a C.V. Starr Fellowship and a CPBF Fellowship (through NSF PHY-1734030). DJS acknowledges support from NSF PHY-1734030, NIH 5R01EB026943 and a Simons Investigator grant in MMLS.

\bibliography{Neuro.bib}

\appendix


\section{Mean Field Theory for GRUs and LSTMs}\label{app:gru-dmft}

In this appendix, we describe the dynamical mean field equations for GRUs, and elaborate on aspects relevant for the analysis in this paper. A more thorough study of the dynamical equations which follow from the MFT will be undertaken in future work.

The mean field theory for the GRU uses the central limit theorem and the chaos hypothesis (\cite{Amari1972,Geman1982,Cessac1995}) to replace, for a given neuron, its $N$ interactions with an effective stochastic variable. Concretely, the central limit theorem is invoked to write
\begin{align}
\left(U_{z} {\bf h}_{t} + {\bf b}_{z}\right)_{i} \to \zeta_{t}\\
\left(U_{r} {\bf h}_{t} + {\bf b}_{r}\right)_{i} \sim \xi_{t} \\
\left(U_{h} ( {\bf h}_{t} \odot {\bf r}_{t}) + {\bf b}_{h}\right)_{i} \to y_{t}
\end{align}

where $(\zeta_{t}, \xi_{t}, y_{t})$ are independent Gaussian processes characterized by their moments 
\begin{align}
\mathbbm{E}[y_{t}] &= b_{h}, \quad {\rm cov}[ y_{t} y_{t'} ] = a_{h}^{2} C_{r}(t,t') C_{h}(t,t') + v_{h}, \label{eq:y-mft}\\
\mathbbm{E}[\zeta_{t}] &= b_{z}, \quad {\rm cov}[ \zeta_{t} \zeta_{t}] = a_{z}^{2} C_{h}(t, t') + v_{z}, \label{eq:zeta-mft}\\
 \mathbbm{E}[\xi_{t}] &= b_{r}, \quad {\rm cov}[ \xi_{t} \xi_{t'}] = a_{r}^{2} C_{h}(t, t') + v_{r}, \label{eq:r-mft}
\end{align}

with the kernels related to the correlation functions
\begin{align}
C_{h}(t,t') &= \mathbbm{E}[ h_{t} h_{t'}], \quad C_{r}(t, t') = \mathbbm{E}[ \sigma(\xi_{t}) \sigma(\xi_{t'})].
\end{align}

A crucial aspect of the dynamical mean-field theory (DMFT) is that the Gaussian kernels are non-Markovian (i.e. noise at different time steps are correlated), and must be determined self-consistently from the correlation function of the hidden state variable $h_{t}$. Thus, as the network size $N \to \infty$, each neuron becomes essentially independent of the other neurons. The $N$ dimensional dynamical system is then reduced to a one-dimensional stochastic difference equation for the hidden state variable, where the noise term 
embodies the effective interaction with the rest of the network
\begin{align}
h_{t} &= \sigma(\zeta_{t-1})h_{t-1} + (1 - \sigma(\zeta_{t-1})) \phi(y_{t-1}).
\end{align}
In what follows, we will also make use of the additional correlation functions, which may be viewed as transforms of a Gaussian process
\begin{align}
C_{z}(t, t') = \mathbbm{E}[ \sigma(\zeta_{t}) \sigma(\zeta_{t'})], \quad C_{\phi}(t, t') = \mathbbm{E}[ \phi(y_{t}) \phi(y_{t'})]	, \quad \kappa_{t} = \mathbbm{E}[ \sigma(\zeta_{t}) ].
\end{align}
For long times, the autonomous dynamics will approach a steady state described either by a fixed point or chaotic attractor. Limit cycles require fine tuned weight matrices, and are expected to have vanishing probability in the limit of large $N$. Furthermore, upon reaching this limiting distribution, the correlation function should only depend on the absolute difference in times, e.g. $C_{h}(t, t+ k) \to C_{h}(k)$ and
averages become time-independent e.g. $\kappa_t = \kappa$. The DMFT in the steady state reduces to the second order difference equation
\begin{align}
	&C_{h}(k) - \kappa C_{h}(k-1) - \kappa C_{h}(k+1)  + C_{z}(k) C_{h}(k) = \left(1 - 2 \kappa 	+ C_{z}(k)\right) C_{\phi}(k),\label{eq:dmft1}\\
	&C_{h}(0) + C_{z}(0) C_{h}(0) - 2 \kappa C_{h}(1) = (1 - 2 \kappa + C_{z}(0)) C_{\phi}(0).\label{eq:dmft2}
\end{align}
Fixed points are described by time-independent distributions. Assuming time-independent in (\ref{eq:dmft1}) and (\ref{eq:dmft2}) gives the implicit equation (\ref{eq:FP1}) stated in the main text.

Another useful relation follows from the fact that the autocorrelation function is bounded at all times $C_{h}(|t-t'|) \le C_{h}(0)$. Using this in (\ref{eq:dmft2}) leads to the bound for the equal-time correlation function for time-dependent solutions

\begin{align}
C_{h}(0)	\le  C_{\phi}(0).
\end{align}
If we let $C_{h}^{FP}$ denote the fixed point solution $C_{h}^{FP} = C_{\phi}(0)$, then this inequality implies $C_{h}(0) \le C_{h}^{FP}$. Therefore, the fixed point (time-independent) solution is an upper bound on the equal-time autocorrelation function in a time-dependent steady state.

\paragraph{LSTM} Applying a similar reasoning as above, we arrive at the following effective stochastic difference equation for the LSTM (\ref{eq:lstm_eom})
\begin{align}
c_{t} &= \sigma(\eta_{t-1}^{ f}) c_{t-1} + \sigma(\eta_{t-1}^{i}) \phi(y_{t-1}),\\
h_{t} & = \sigma(\eta_{t-1}^{o}) \phi(c_{t}),
\end{align}
where $y_{t}$ is a Gaussian process with mean $b_{h}$ and covariance
\begin{align}
    {\rm cov}\left[ y_{t} y_{t'}\right] = a_{h}^{2} C_{h}(t,t') + v_{h},
\end{align}
and $\eta_{t}^{k}$ for $k \in \{ f, i, o\}$ are Gaussian processes with mean $b_{k}$ and covariance

\begin{align}
{\rm cov}\left[ \eta_{t}^{k} \eta_{t'}^{k'} \right] = 	\delta_{kk'} \left(a_{k}^{2} C_{h}(t, t') + v_{k}\right), \quad C_{h}(t, t') = \mathbbm{E}\left[ h_{t} h_{t'}\right].
\end{align}
$c_{t}$ and $h_{t}$ governed by such a stochastic difference equation will not be gaussian, and this makes the fixed point analysis difficult. However, we see at the very least that the $\eta_{t}^{k}$ are independent. Henceforth, for simplicity we denote $k_{t} = \sigma(\eta_{t-1}^{k})$ and $k_{t}' =\sigma'(\eta_{t-1}^{k})$. Similarly for the GRU  $z_{t} = \sigma(\zeta_{t-1})$, $z'_{t} =  \sigma'(\zeta_{t-1})$,
    $r_{t} = \sigma(\xi_{t-1})$, and $r_{t}' = \sigma'(\xi_{t-1})$


\section{GRUs: proof of Theorem \ref{thm:GRU-spec-curve}}\label{app:gru-rmt}

Here we outline the proof of our main result concerning the spectral curve. Our approach uses the ideas about hermitization method and free probability theory to truncate the cumulants.
We are interested in determining the spectral properties of the state-to-state Jacobian (\ref{eq:jac}). In the main text, we suggested that it is advantageous to look at an enlarged Jacobian ${\bf M}_{t}$ defined in eq.(\ref{eq:jac-M}). This extension is properly thought of as an implementation of the linearization method (see e.g. \cite{Belinschi2018} and references therein) to deal with products of random matrices. The linearization in this problem happens to be naturally suggested by the structure of the GRU network dynamics. 

Consider the generalized resolvent 
\begin{align}
{\bf G}(\lambda) = \left( {\bf I}_{\lambda} - {\bf M}_{t}\right)^{-1} = \left( \begin{array}{ccc}
 {\bf G}_{11} & {\bf G}_{12} & {\bf G}_{13}\\
{\bf G}_{21} & {\bf G}_{22} & {\bf G}_{23}\\
{\bf G}_{31}&{\bf  G}_{32} & {\bf G}_{33}	
 \end{array}\right).
\end{align}

It is simple to check that ${\bf G}_{11} = \left( \lambda  - {\bf J}_{t}\right)^{-1}$ is the resolvent of Jacobian. It is convenient to decompose the extended Jacobian (\ref{eq:jac-M}) in the form (dropping the time index for simplicity) ${\bf M} = \hat{A} + \hat{B} \hat{U} \hat{C}$ where 
\begin{align*}
\hat{A} &= \left( \begin{array}{ccc}
 {\bf \hat z}  & 0 & \hat{\bf h} - \hat{\phi}({\bf y}) \\
 0 & 0 & 0\\
 0 & 0 & 0
 \end{array}\right)	, \quad \hat{B} = \left( \begin{array}{ccc}
  (\mathbbm{1} - {\bf \hat z})\hat{\phi}'({\bf y})  & 0 & 0\\
 0 &  {\bf r}' & 0\\
0 & 0 &  {\bf z}'
 \end{array}\right), \quad \hat{C} &= \left( \begin{array}{ccc}
{\bf r} &  {\bf h} & 0\\
 \mathbbm{1} & 0 & 0\\
\mathbbm{1} & 0 & 0
 \end{array}\right),
\end{align*}
and $\hat{U} = {\rm bdiag}(U_{h},U_{r},U_{z})$. Following the well-known method of hermitian reduction, we introduce an auxiliary complex variable $\eta$ and consider the resolvent $\mathcal{G}$, whose inverse is
\begin{align}
\mathcal{G}^{-1} = \left( \begin{array}{cc}
 \eta \mathbbm{1}_{3N} & {\bf I}_{\lambda} - {\bf M}	\\
 {\bf I}_{\bar{\lambda}} - {\bf M}^{T} & \eta \mathbbm{1}_{3N}
 \end{array}\right)	 = \left( \begin{array}{cc}
 \eta \mathbbm{1}_{3N} & {\bf I}_{\lambda} - \hat{A}	\\
 {\bf I}_{\bar{\lambda}} - \hat{A}^{T} & \eta \mathbbm{1}_{3N}
 \end{array}\right) - \left( \begin{array}{cc}
0 &  \hat{B} \hat{U} \hat{C}	\\
 \hat{C}^{T} \hat{U}^{T} \hat{B}^{T} & 0
 \end{array}\right). \label{eq:herm-resolvent}
\end{align}

Denote the four $3N\times 3N$ blocks of $\mathcal{G}$ by $ \mathcal{G}_{ab}$ for $a, b \in \{1,2\}$. We identify $\mathcal{G}_{21} = {\bf G}(\lambda)$ as the generalized resolvent of interest. The expression (\ref{eq:herm-resolvent}) is of the form $\mathcal{G}^{-1} = \mathcal{G}_{0}^{-1} - \mathcal{H}$, where $\mathcal{H}$ contains the Gaussian random variables $\hat{U}$. Since the elements of the matrix $\mathcal{H}$ are gaussian, we utilize the so-called self-consistent Born approximation to compute the self-energy (see e.g. \cite{mehlig2000statistical}). In the language of Free Probability theory, this is equivalent to calculating the $R$ transform by keeping only the second cumulant. This approximation is expected to be exact in the $N \to \infty$ limit. In this limit, the self-energy $\boldsymbol{\Sigma}$ will be block diagonal, with the $N\times N$ blocks given by

\begin{align}
\boldsymbol{\Sigma}_{11}[\mathcal{G}] &= \hat{B} \mathcal{Q}\left[ \hat{C} \mathcal{G}_{22} \hat{C}^{T}\right] \hat{B}^{T}, \quad \boldsymbol{\Sigma}_{22}[\mathcal{G}] = \hat{C}^{T} \mathcal{Q}\left[ \hat{B}^{T} \mathcal{G}_{11} \hat{B}\right] \hat{C},
\end{align}

where the superoperator, defined by the action

\begin{align}
	\mathcal{Q}[ R] = \left( \begin{array}{ccc}
 \frac{a_{h}^{2}}{N} {\rm tr} 	R_{11} & 0 & 0\\
 0 & \frac{a_{r}^{2}}{N} {\rm tr} R_{22} & 0 \\
 0 & 0 & \frac{a_{z}^{2}}{N} {\rm tr} R_{33}
 \end{array}\right),
\end{align}
arises after ensemble averaging over the random connectivity matrices in the block diagonal $\hat{U}$. We have therefore replaced the explicit random variable $\mathcal{H}$ with an effective ``self-energy" and rewritten the expression for the resolvent $\mathcal{G}$ in the form $\mathcal{G}^{-1} = \mathcal{G}_{0}^{-1} - \boldsymbol{\Sigma}[\mathcal{G}]$, known as the Dyson equation. Explicitly, to compute the resolvent we must solve the non-linear system of equations implied by the expression
\begin{align}
\mathcal{G}^{-1} = 	\left( \begin{array}{cc}
- \boldsymbol{\Sigma}_{11}[ \mathcal{G}] & {\bf I}_{\lambda} - \hat{A}	\\
 {\bf I}_{\bar{\lambda}} - \hat{A}^{T} & -\boldsymbol{\Sigma}_{22}[ \mathcal{G}]
 \end{array}\right).
\end{align}
Since the self-energy involves traces of transformed blocks of the resolvent, it is useful to work directly with these transformed matrices. So we define
\begin{align}
\mathcal{F}_{11} = \hat{B}^{T} \mathcal{G}_{11} \hat{B}, \quad \mathcal{F}_{22} = \hat{C} \mathcal{G}_{22} \hat{C}^{T}	.
\end{align}
We will end up needing only the diagonal blocks of these matrices, since this is all that enters the self-energy. For $\mathcal{F}_{11}$, we denote the diagonal block matrices $F_{ii}$, for $i \in \{1, 2, 3\}$, and similarly for $\mathcal{F}_{22}$ with indices $i \in \{4, 5, 6\}$. Finally, we denote the normalized trace $f_{ii} = \frac{1}{N} {\rm tr} F_{ii}$. Furthermore, for simplicity, define the functions $Q = \frac{f_{22} + f_{33}}{f_{11}}$, $X = f_{11} f_{44}$ and $Y= f_{11} f_{55}$. The Dyson equation then leads to a system of equations for various Green's functions which we must solve self-consistently.

To present our results, we remove the bold-face and carat (e.g. ${\bf \hat h} \to h$), and denote 

\begin{align}
\frac{1}{N} {\rm tr} \left( {\bf Q}({\bf h}, {\bf z}, {\bf y}, {\bf r})\right)	 = \mathbbm{E}\left[ Q( h, z, y, r)\right],
\end{align}
where $Q$ is a matrix-valued function of the state variables, e.g. ${\bf Q} = {\bf \hat h}$. In the large $N$ limit where the mean field theory is expected to hold, we may take this to mean that the sum over all neurons is literally replaced by an expectation value over a single neuron with fluctuating fields.

An important object, which is a non-Hermitian generalization of the Stieltjes transform, is the trace of the resolvent $G(\lambda) = \mathbbm{E} \left[  \frac{1}{N}{\rm tr} (\lambda - {\bf J}_{t})^{-1}\right]$. From the Dyson equation, we find
\begin{align}
G(\lambda) = \mathbbm{E}\left[\frac{(\bar{\lambda} - z) (1 - Y h^{2} r'^{2})}{D}	\right],
\end{align}	
where the denominator is given by
\begin{align}
D &=  |\lambda - z|^{2} (1 - Y h^{2} (a_{r}r')^{2} )-  (1 - z)^{2} (a_{h}\phi'(y))^{2} ( Q X + X r^{2}) + X Y Q (1 - z)^{2} (a_{h}\phi'(y))^{2} h^{2} (a_{r}r')^{2} \nonumber\\
&- Q Y (h - \phi(y))^{2} (a_{z}z')^{2} - Y ( r^{2} - Q Y h^{2} (a_{r}r')^{2})(h - \phi(y))^{2} (a_{z}z')^{2},	
\end{align}

and the auxiliary functions must be computed self-consistently to satisfy the system of equations

\begin{align}
1 & = \mathbbm{E}\left[ \frac{(1 - z)^{2} (a_{h}\phi'(y))^{2}r^{2}}{D}\right] +   Q\,  \mathbbm{E}\left[ \frac{(1 - z)^{2} (a_{h}\phi'(y))^{2} \left( 1 - Y h^{2} (a_{r}r')^{2}\right)}{D}\right] \label{eq:gf1}\\
X & = X \mathbbm{E}\left[  \frac{(1 - z)^{2} (a_{h}\phi'(y))^{2} r^{2}}{D}\right] - Q X Y \mathbbm{E}\left[  \frac{(1 - z)^{2} (a_{h}\phi'(y))^{2} h^{2} (a_{r}r')^{2}}{D}\right] + Y \mathbbm{E}\left[  \frac{ r^{2} (h- \phi(y))^{2} (a_{z}z')^{2}}{D}\right] \nonumber \\
&+ Y \mathbbm{E}\left[  \frac{h^{2} (a_{r}r')^{2} |\lambda - z|^{2}}{D}\right] - QY^2 \mathbbm{E}\left[  \frac{h^{2} (a_{r}r')^{2} (h-\phi(y))^{2} (a_{z}z')^{2}}{D}\right] \label{eq:gf2}\\
Y & = X \mathbbm{E}\left[  \frac{(1 - z)^{2} (a_{h}\phi'(y))^{2} }{D}\right] - X Y  \mathbbm{E}\left[  \frac{(1-z)^{2} (a_{h}\phi'(y))^{2} h^{2} (a_{r}r')^{2}}{D}\right] \nonumber\\
& + Y \mathbbm{E}\left[ \frac{(h-\phi(y))^{2} (a_{z}z')^{2}}{D}\right] - Y^{2} \mathbbm{E}\left[  \frac{ h^{2} (a_{r}r')^{2} (h-\phi(y))^{2} (a_{z}z')^{2}}{D}\right] \label{eq:gf3}\\
Q = &Q \Big\{ \mathbbm{E}\left[ \frac{(h-\phi(y))^{2} (a_{z}z')^{2}}{D}\right] - 2 Y \mathbbm{E}\left[ \frac{h^{2} (a_{r}r')^{2} (h-\phi(y))^{2} (a_{z}z')^{2}}{D}\right]\nonumber	\\
& - X \mathbbm{E}\left[ \frac{(1-z)^{2}(a_{h}\phi'(y))^{2} h^{2} (a_{r}r')^{2}}{D}\right] \Big\}  +  \left[\mathbbm{E}\left[ \frac{|\lambda - z|^{2} h^{2} (a_{r}r')^{2}}{D}\right]+ \mathbbm{E}\left[ \frac{r^{2} (h- \phi(y))^{2} (a_{z}z')^{2}}{D}\right]\right] \label{eq:gf4}
\end{align}

Near the boundary of the eigenvalue support, $X$ and $Y$ tend to zero, becoming exactly zero on the boundary. However, $Q$ remains finite. We can use (\ref{eq:gf1}) to eliminate $Q$ from (\ref{eq:gf4}), and after setting $X = Y = 0$, we obtain the equation for the spectral boundary curve

\begin{align}
1 = &\mathbbm{E} \left[ h^{2} (a_{r} r')^{2}\right] \mathbbm{E}\left[ \frac{(a_{h}\phi'( y))^{2} \left( 1 - z\right)^{2}	}{| \lambda - z|^{2}} \right]+ \mathbbm{E}\left[ \frac{(a_{h}\phi'(y))^{2}  r^{2} \left( 1 - z\right)^{2}	}{| \lambda - z|^{2}} \right]\\
&  +\mathbbm{E} \left[ \frac{(h - \phi(y))^{2} (a_{z} z')^{2}	}{| \lambda -z|^{2}} \right] .
\end{align}

After invoking the MFT to factorize certain terms, this simplifies to read 
\begin{align}
& 1 =    \mathbbm{E}[(r^{2} + a_{r}^{2}h^{2} r'^{2})] \mathbbm{E}\left[ a_{h}^{2}(\phi'(y))^{2}\right]  \mathbbm{E}\left[ \frac{(1 - z)^{2}}{|\lambda - z|^{2}}\right] +\mathbbm{E}\left[ \frac{a_{z}^{2}(h-\phi(y))^{2} z'^{2}}{|\lambda - z|^{2}}\right]  .
\end{align}

The spectral boundary will consist of all $\lambda\in\mathbbm{C}$ which satisfy this equation.

In the main text, we introduced a boundary curve function 

\begin{align}
\mathcal{S}(\lambda) = \rho^{2}\mathbbm{E}\left[ \frac{\left( 1 - z\right)^{2}	}{| \lambda - z|^{2}} \right] + a_{z}^{2}\mathbbm{E}\left[ \frac{(h - \phi( y))^{2} (z')^{2}	}{| \lambda - z|^{2}} \right] -1,
\end{align}

which vanishes on the boundary. The difficulty here lies in making sense of the second term. First of all, if the steady state is a fixed point, one must have $h = \phi(y)$, and the second term simply vanishes. However, in general the attractor is not a simple fixed point (since we have shown that the fixed points are unstable), and we cannot expect this term to vanish. The main difficulty with this correlation function is that in general $h$ and $z$ are {\it not} independent, even in the mean field theory, and they may have non-trivial correlations. 

Let us define 
\begin{align}
\mathcal{S}_{\rho, \nu}(\lambda) = 	\rho^{2} \mathbbm{E}\left[ \frac{\left( 1 - z\right)^{2}	}{| \lambda - z|^{2}} \right]+ \nu^{2} \, \mathbbm{E}\left[ \frac{z'^{2}	}{| \lambda - z|^{2}} \right], \quad \nu^{2} = \mathbbm{E}\left[ (h - \phi(y))^{2}\right]\label{eq:S-curve-approx} ,
\end{align}

along with the set

\begin{align}
\Sigma(\rho, \nu) := \{ \lambda \in \mathbbm{C} \, | \, \mathcal{S}_{\rho, \nu}(\lambda) \ge  0\}	.
\end{align}

By Cauchy-Schwarz, we have that $\mathcal{S}(\lambda) \le \mathcal{S}_{\rho, \nu}(\lambda)$, and by the positivity of the second term on the RHS of (\ref{eq:spec-curve}), it follows that $\mathcal{S}_{\rho, 0}(\lambda) \le \mathcal{S}(\lambda)$. Therefore, we conclude that 

\begin{align}
\Sigma(\rho, 0) \subseteq \Sigma \subseteq \Sigma(\rho, \nu),	
\end{align}

where $\Sigma$ is the spectral support defined in Thm.(\ref{thm:GRU-spec-curve}). Thus we obtain bounding curves for the spectral density described parametrically by $\mathcal{S}(\rho, \nu) = 0$ and $\mathcal{S}(\rho, 0) = 0$, both of which are expressible in terms of simple correlation functions.

Obtaining the trace of the resolvent is tractable in limited cases.
The Dyson equation for the Green's function proves to be intractable when both reset and update gates have nonzero variance. However, when the reset gate variance $a_{r} = 0$, the equations simplify considerably and we can determine the trace of the resolvent:

\begin{proposition}[Trace of the Resolvent]\label{prop:gru-resolvent}
For $a_{r} =0$, the trace of the resolvent for the GRU instantaneous Jacobian is
\begin{align}
G(\lambda) = \begin{dcases}
 	\mathbbm{E}\left[ \frac{\bar{\lambda} - z_{t}}{|\lambda - z_{t}|^{2} + F(\lambda) s_{t}} \right], \quad &\lambda \in \Sigma \label{eq:zgateG}\\
 	\mathbbm{E}\left[ \frac{1}{\lambda - z_{t}}\right], \quad & \lambda \in \Sigma^{c}
 \end{dcases}	
\end{align}

where $s_{t} = r^{2} a_{h}^{2} (1 - z_{t})^{2}  \phi'(y_t)^{2} + a_{z}^{2} z_{t}'^{2} ( h_{t-1} - \phi(y_{t}))^{2}$. For $\lambda \in \Sigma$, $F(\lambda)\ge 0$ is given implicitly by the equation
\begin{align}
1 = \mathbbm{E}\left[ \frac{s_{t}}{|\lambda - z_{t}|^{2} + F(\lambda) s_{t}}\right]\label{eq:zgateF}	, 
\end{align}
while for $\lambda \in \Sigma^{c}$, $F = 0$. 

\end{proposition}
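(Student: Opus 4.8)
The plan is to obtain Proposition~\ref{prop:gru-resolvent} by specializing the general self-consistent (Dyson) system \mbox{(\ref{eq:gf1})--(\ref{eq:gf4})} together with the accompanying formula for $G(\lambda)$ to the case $a_{r}=0$, and showing that it collapses to the stated two-equation form. When $a_{r}=0$ the reset-gate weight matrix is identically zero, $U_{r}=0$, so the reset gate is frozen at the deterministic value $r_{t}\equiv r=\sigma(b_{r})$; equivalently, the $\delta{\bf r}_{t}$ row of ${\bf M}_{t}$ in (\ref{eq:jac-M}) vanishes, so that variable decouples and one could alternatively rerun the hermitization plus self-consistent Born argument on the reduced $2N\times 2N$ linearization in $(\delta{\bf h}_{t+1},\delta{\bf z}_{t})$, which involves only $U_{h}$ and $U_{z}$ linearly. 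Either way, every term carrying a factor of $a_{r}^{2}$ (in particular all pieces involving $(a_{r}r_{t}')^{2}$) drops out of $D$, of the resolvent-trace formula, and of \mbox{(\ref{eq:gf1})--(\ref{eq:gf4})}.

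With $a_{r}=0$ the denominator collapses to $D=|\lambda-z_{t}|^{2}-(Q+r^{2})\bigl(X(1-z_{t})^{2}a_{h}^{2}\phi'(y_{t})^{2}+Y(h_{t-1}-\phi(y_{t}))^{2}a_{z}^{2}z_{t}'^{2}\bigr)$, i.e.\ the surviving pieces share the common factor $Q+r^{2}$. Next I would combine the $a_{r}=0$ versions of (\ref{eq:gf2}) and (\ref{eq:gf3}): dropping the $a_{r}$-terms in (\ref{eq:gf2}) leaves $X=r^{2}\bigl(X\,\mathbbm{E}[(1-z_{t})^{2}a_{h}^{2}\phi'(y_{t})^{2}/D]+Y\,\mathbbm{E}[(h_{t-1}-\phi(y_{t}))^{2}a_{z}^{2}z_{t}'^{2}/D]\bigr)$, and the bracketed quantity is precisely the right-hand side of the $a_{r}=0$ form of (\ref{eq:gf3}), namely $Y$; hence $X=r^{2}Y$ on the support (the complementary branch $Y=0$ is the exterior). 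Substituting $X=r^{2}Y$ into $D$ gives $D=|\lambda-z_{t}|^{2}-(Q+r^{2})Y\,s_{t}$ with $s_{t}=r^{2}a_{h}^{2}(1-z_{t})^{2}\phi'(y_{t})^{2}+a_{z}^{2}z_{t}'^{2}(h_{t-1}-\phi(y_{t}))^{2}$, so one sets $F(\lambda):=-(Q+r^{2})Y$ and verifies, from $X=f_{11}f_{44}$, $Y=f_{11}f_{55}$ and the standard sign of the hermitization blocks as the auxiliary variable $\eta\to0^{+}$, that $F(\lambda)\ge 0$ for $\lambda\in\Sigma$; then $D=|\lambda-z_{t}|^{2}+F(\lambda)s_{t}$.

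The two claimed formulas then follow directly. Dividing the $a_{r}=0$ form of (\ref{eq:gf3}) by $Y$ (valid in the interior, where $Y\ne 0$) yields $1=r^{2}\mathbbm{E}[(1-z_{t})^{2}a_{h}^{2}\phi'(y_{t})^{2}/D]+\mathbbm{E}[(h_{t-1}-\phi(y_{t}))^{2}a_{z}^{2}z_{t}'^{2}/D]=\mathbbm{E}[s_{t}/D]$, which, with $D=|\lambda-z_{t}|^{2}+F s_{t}$, is exactly (\ref{eq:zgateF}). Setting $a_{r}=0$ in the general expression for the resolvent trace reduces the reset-gate correction in its numerator to unity, leaving $G(\lambda)=\mathbbm{E}[(\bar{\lambda}-z_{t})/D]$, the interior branch of (\ref{eq:zgateG}). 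Outside the support $X=Y=0$ (as already noted in the appendix), hence $F=0$ and $D=|\lambda-z_{t}|^{2}$; since $z_{t}\in(0,1)$ is real, $G(\lambda)=\mathbbm{E}[(\bar{\lambda}-z_{t})/|\lambda-z_{t}|^{2}]=\mathbbm{E}[1/(\lambda-z_{t})]$, the holomorphic branch, and the two branches match continuously on $\partial\Sigma$ (where $F\to0$); the locus $F=0$ reproduces the spectral curve of Theorem~\ref{thm:GRU-spec-curve} at $a_{r}=0$ once one uses the mean-field identity $C_{r}(t,t)=\sigma(b_{r})^{2}$ and the independence of $z_{t}$ and $y_{t}$.

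The main obstacle I anticipate is bookkeeping rather than anything conceptual: carrying out the algebraic collapse of $D$ and of \mbox{(\ref{eq:gf1})--(\ref{eq:gf4})} without slips, and in particular pinning down the sign so that $F(\lambda)=-(Q+r^{2})Y$ is genuinely nonnegative on $\Sigma$, which requires tracking the sign conventions inherited from $\eta\to0^{+}$ and the definitions of the $f_{ii}$. One should also check that the $Q$-equation (\ref{eq:gf1}), which at $a_{r}=0$ fixes $Q+r^{2}=1/\mathbbm{E}[(1-z_{t})^{2}a_{h}^{2}\phi'(y_{t})^{2}/D]$, is consistent with (\ref{eq:gf4}) under $a_{r}=0$ so that no spurious extra constraint on $F$ appears; this is guaranteed in principle, since all four equations descend from the single Dyson equation, but it is worth confirming explicitly as a consistency check.
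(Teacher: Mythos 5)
Your proof follows precisely the paper's route: specialize the Dyson system \mbox{(\ref{eq:gf1})--(\ref{eq:gf4})} and the accompanying $G(\lambda)$ formula to $a_r=0$, read off $X=r^2 Y$ by comparing the $a_r=0$ forms of (\ref{eq:gf2}) and (\ref{eq:gf3}), and package $F(\lambda)=-(Q+r^2)Y$ so that $D=|\lambda-z_t|^2+F(\lambda)s_t$, from which both branches and (\ref{eq:zgateF}) follow. The paper's own argument is a terse one-line version of your bookkeeping, and your extra checks (sign of $F$ inherited from $\eta\to 0^+$, exterior branch matching on $\partial\Sigma$, consistency of the $Q$-equation) are correct but not strictly needed.
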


The proof follows easily after specializing Eqs.(\ref{eq:gf1} - \ref{eq:gf4}) for $a_{r} = 0$ in which the reset gate is fixed $r_{t} = r = \sigma(b_{r})$. It is apparent by inspection of Eqs. (\ref{eq:gf2}) and (\ref{eq:gf3}) that $X = r^{2} Y$. Consequently, by introducing the function $F(\lambda) = - (Q Y + Y r^{2})$, we arrive at the statement of the proposition.


\section{GRU: accumulation of eigenvalues at $\lambda=1 $}\label{app:only-update}

We provide the details of analytical demonstration of accumulation shown visually in Fig.(\ref{fig:gru-spectrum} f) as a function of update gate variance $a_{z}$ (with reset gate inactive: $a_{r} = 0$), and provide a derivation of the $a_{z}\to \infty$ limiting eigenvalue density Eq. \ref{eq:den-update-only} presented in the main text. Our starting point is the trace of the resolvent given by Prop. (\ref{prop:gru-resolvent}).

For $F(\lambda) \ge 0$, we must solve (\ref{eq:zgateF}) first, then insert this solution back into (\ref{eq:zgateG}) to determine the trace of the resolvent. Now we make two assumptions: first, we use  a piecewise linear approximation for $\phi(y)$ (the ``hard-tanh")

\begin{align}
\phi(x) &= \begin{cases}
 1 , \quad 	x > 1\\
 x, \quad - 1 < x < 1\\
 - 1 , \quad x < - 1
 \end{cases} , \quad \phi'(x)  = \begin{cases}
 0 , \quad 	x > 1\\
1 , \quad - 1 < x < 1\\
 0 , \quad x < - 1
 \end{cases}, \label{eq:piecewise}
\end{align}

and secondly we assume $P(\zeta, y) = P(\zeta) P(y)$ which is justified
in the mean-field limit. Evaluate the Gaussian integral over $y$ using (\ref{eq:y-mft}) gives

\begin{align}
G(\lambda) &= \eta \mathbbm{E}\left[ \frac{\bar{\lambda} - z}{ |\lambda - z|^{2} + F (1 - z)^{2} r^{2} a_{h}^{2}} \right]_{\zeta} + (1 - \eta) \mathbbm{E}\left[ \frac{1}{\lambda - z}\right]\label{eq:G_approx}\\
1 & = 	\eta \mathbbm{E} \left[ \frac{(1 - z)^{2} r^{2} a_{h}^{2}}{|\lambda - z|^{2} + F (1 - z)^{2} r^{2} a_{h}^{2}} \right]_{\zeta}\label{eq:F_approx}
\end{align}

where
\begin{align}
    \eta =  \int_{-1}^{1} \frac{1}{\sqrt{2 \pi a_{h}^{2} r^{2} C_{h}}}\exp\left( - \frac{y^{2}}{2 a_{h}^{2} r^{2} C_{h}} \right)  
\end{align}
can be interpretted as the fraction of unsaturated activations $\eta \approx \mathbbm{E}\left[ (\phi')^{2}\right]$ (which is exact for the hard-tanh). We are left with a Gaussian integral over $\zeta$, which has the distribution (\ref{eq:zeta-mft})

\begin{align}
    P(\zeta) = \frac{1}{\sqrt{2 a_{z}^{2} C_{h}}} \exp\left( - \frac{(\zeta - b_{z})^{2}}{2 a_{z}^{2} C_{h}}\right) .
\end{align}

We consider first the limit $a_{z} = \infty$ to derive Eq.(\ref{eq:den-update-only}), then consider the large $a_{z}$ limit to obtain a scaling function describing the cumulative distribution function near $\lambda = 1$. 

\paragraph{Resolvent for Binary Update Gate} For $a_{z}, b_{z} \to \infty$ with $b_{z}/a_{z} = \beta$ kept fixed, we approximate $z = \sigma(\zeta)$ as a binary random variable with distribution $P(z) = \alpha \delta(z) + (1 - \alpha) \delta(z - 1)$, where 
\begin{align}
    \alpha = 1 - \mathbbm{E}[\sigma(\zeta)] = \int Dx \left( 1 + \exp \left( - a_{z}( \sqrt{C_{h}} x + \beta)\right)\right)^{-1}  \approx \frac{1}{2} {\rm erfc} \left( \frac{\beta}{\sqrt{2 C_{h}}}\right), \label{eq:alpha2}
\end{align}
which is the expression quoted in (\ref{eq:alpha}). The expectation over $\zeta$ in (\ref{eq:zgateG}) and (\ref{eq:zgateF}) is then trivial, and solving for $F$ we find the trace of the resolvent 

\begin{align}
    G(\lambda) = \begin{dcases}
 \frac{ (1 - \alpha) }{\lambda - 1}+ \frac{\alpha(1 - \eta)}{\lambda} + \frac{\bar{\lambda}}{r^{2} a_{h}^{2} } , &\quad |\lambda| \le \sqrt{\eta \alpha} r a_{h}\\
    \frac{(1 - \alpha)}{\lambda - 1} + \frac{\alpha}{\lambda} , &\quad |\lambda|> \sqrt{\eta \alpha} r a_{h}.
    \end{dcases}
\end{align}

Using $\mu(\lambda) = (1/\pi) \partial_{\bar{\lambda}} G(\lambda)$ then gives Eq.(\ref{eq:den-update-only}). It is especially interesting to note the delta functional concentration of eigenvalues at $\lambda = 1$. Now, we seek to understand how the accumulation grows with increasing $a_{z}$, in particular with the goal of obtaining a scaling function to understand the eigenvalue cumulative distribution function (CDF) centered at $\lambda = 1$ displayed in Fig.(\ref{fig:gru-spectrum}).

\paragraph{Scaling Function for CDF}
In order to obtain a scaling function to describe the accumulation of eigenvalues at $\lambda = 1$, we make a series of approximations. The good agreement we find with numerical experiments appears to justify these assumptions. First, we assume that we are at or near a fixed point, and thus can neglect terms involving $h - \phi(y)$ in the expression for the resolvent (this is also justified when $a_{z}$ is large and the update gate is mostly saturated). To get an explicit expression, we use the hard-tanh activation function, so our starting point is again Eqs.(\ref{eq:G_approx}) and (\ref{eq:F_approx}).

We would like to obtain the scaling behavior of the cumulative distribution function $CDF(r) = P(|\lambda - 1|<r)$ close to the $\lambda = 1$. With a little foresight, we parameterize the coordinate $\lambda$ as
\begin{align}
\lambda = 1 + \epsilon, \quad \epsilon = e^{ - \tilde{a}_{z}  c + i \theta}	, \quad \tilde{a}_{z} = a_{z} \sqrt{C_{h}}.
\end{align}
The CDF follows from the resolvent by the contour integral

\begin{align}
CDF(|\epsilon|) = \frac{1}{2\pi i} \oint_{\mathcal{C}} d \epsilon \, G(1 + \epsilon) = \frac{1}{2\pi} \int d\theta \, \epsilon G(1 + \epsilon).
\end{align}

We must therefore find an approximation of $G(1 + \epsilon)$ for small $\epsilon$. The first observation is that $F$ is not singular at $\lambda = 1$ (though we observe that it is not necessarily smooth), since
\begin{align}
1 =  \eta \mathbbm{E}\left[ \frac{(1 - z)^{2} r^{2} a_{h}^{2}}{(1 - z)^{2} + F(1) (1 - z)^{2} r^{2} a_{h}^{2}}\right] =  \frac{\eta r^{2} a_{h}^{2}}{1 + F(1) r^{2} a_{h}^{2}},
\end{align}
so we assume for simplicity $F(1+\epsilon) \approx F(1)$. Using this, we evaluate $G(\lambda)$. We may rewrite the resulting Gaussian integral using the fact that when $b_{z} = 0$, $\sigma(\zeta) = 1 - \sigma(-\zeta)$, and defining $G(\lambda) = G^{a}(\lambda) + G^{b}(\lambda)$, we get

\begin{align}
G^{a}(\lambda) & = \eta \int D  x \frac{ e^{- \tilde{a}_{z} c - i \theta}  + 2e^{- \tilde{a}_{z} (c+x) - i \theta}  + e^{- \tilde{a}_{z} (2 x + c) - i \theta}   + 1 + e^{ - \tilde{a}_{z} x}}{| e^{ - \tilde{a}_{z} c+ i \theta} +  e^{ - \tilde{a}_{z} (c+x)+ i \theta} +1|^{2} + F r^{2} a_{h}^{2}} ,\\
G^{b}(\lambda)& = (1 - \eta ) \int D x \frac{1 + e^{ - \tilde{a}_{z} x}}{\epsilon + 1  + e^{ - \tilde{a}_{z} (x+c) + i \theta}}.
\end{align}
We approximate these integrals for large $a_{z}$ in four regions: I) $x \in (-\infty, -c)$, II) $x \in (-c, -c/2)$, III) $x \in (-c/2,0)$ and IV) $x \in (0,\infty)$. We begin with $G^{a}$. Approximating the integrals by keeping only the dominant terms for large $a_{z}$, we get 

\begin{align}
G_{I}^{a}(\lambda) &\approx  \eta \int_{-\infty}^{-c} Dx \frac{	e^{ - \tilde{a}_{z} c - i \theta} e^{ - 2 \tilde{a}_{z}  x}}{e^{ - 2 \tilde{a}_{z} c - 2 a_{z}  x}}  = \frac{1}{\epsilon} \frac{\eta}{2} {\rm erfc}\left( \frac{c }{\sqrt{2}}\right),\\
G_{II}^{a}(\lambda) &\approx   \eta \frac{\bar{\epsilon}}{|\epsilon + 1|^{2} + F r^{2} a_{h}^{2}} \int_{- c }^{-c/2} Dx e^{ - 2 \tilde{a}_{z}x},\\
G_{III}^{a}(\lambda) &\approx \eta \frac{1}{|\epsilon + 1|^{2} + F r^{2} a_{h}^{2}} 	\left( \int_{-c/2}^{0} Dx e^{ - \tilde{a}_{z}  x} \right),\\
G_{IV}^{a}(\lambda)  &\approx  \frac{\eta}{2} \frac{1}{1 + F r^{2} a_{h}^{2}}	.
\end{align}
Repeating a similar analysis for $G^{b}(\lambda)$, we get the approximate expression
\begin{align}
    G_{I}^{b}(\lambda) &\approx  \frac{1}{\epsilon} \frac{(1-\eta)}{2}{\rm erfc}\left( \frac{c}{\sqrt{2}}\right), \quad G_{II}^{b} + G_{III}^{b} + G_{IV}^{b}  \approx  (1 - \eta) \frac{1}{\epsilon + 1} \int_{-c}^{\infty} Dx e^{ - \tilde{a}_{z} x}
\end{align}

Each of these contributions to the trace of the resolvent will produce a different contribution to the CDF: $G_{II}$,$G_{III}$ and $G_{IV}$ will to leading order give a contribution that vanishes as $a_{z} \to \infty$. In contrast, $G_{I}$ gives a contribution which does not vanish in this limit - in fact, it grows as $a_{z} \to \infty$! Therefore, focusing on this contribution we arrive at the CDF

\begin{align}
P(|\lambda -1|< e^{ - \tilde{a}_{z} c}) = \frac{1	}{2} {\rm erfc} \left( \frac{c}{\sqrt{2 }}\right).
\end{align}

Writing $r = e^{ - a_{z} \sqrt{C_{h}} c}$, and $c = - \log (r)/a_{z} \sqrt{C_{h}}$, we get the cumulative distribution function

\begin{align}
P(|\lambda - 1| < r) \approx \frac{1}{2} {\rm erfc} \left( -\frac{	\log (r)}{a_{z} \sqrt{2 C_{h}}}\right).
\end{align}

This motivates the scaling function we plot in Figs.(\ref{fig:gru-spectrum} f) and (\ref{fig:lstm-plots} f), 

\begin{align}
{\rm CDF}(r) = c_{1} {\rm erfc} \left( c_{2}/a_{z}\right),\label{eq:cdf_scaling}
\end{align}
treating $c_{1}$ and $c_{2}$ as fitting parameters. This function appears to capture well the scaling behavior of the CDF with $a_{z}$ in a small ball around $\lambda = 1$, even at small $a_{z}$ where the derivation does not strictly make sense. This derivation shows that the dominant contribution to the CDF comes, quite naturally, from the domain of $\zeta$ over which the update gate $\sigma(\zeta) \approx  1$.


\subsection{Proof of Theorem (\ref{thm:pinching}) on Spectral Pinching}\label{app:pinching}

Here we present a proof of the spectral pinching announced in Thm.(\ref{thm:pinching}). We start with the equation for the spectral boundary curve for fixed points, i.e. Eq.(\ref{eq:spec-curve}) with the second term set to zero, or equivalently $\mathcal{S}_{\rho,0}$ defined in (\ref{eq:S-curve-approx}). We propose an ansatz for the curve $\lambda(s)$ close to the leading edge

\begin{align}
\lambda  = 1+\lambda_{0} e^{ - c \tilde{a}_{z}}, \quad \tilde{a}_{z} = \sqrt{C_{h}} a_{z}	,\label{eq:ansatz}
\end{align}
where $\lambda_{0}\in \mathbbm{C}$ is a complex constant, and $c\in \mathbbm{R}^{+}$ is an order one real constant. We consider the large $a_{z}, b_{z}$ limit while keeping $C_{h}$ fixed, and the ratio $b_{z}/a_{z} =  \beta $ also fixed. For convenience, let $\tilde{\beta} = \beta/\sqrt{C_{h}}$

We proceed now to constrain $c$ via the support equation $\mathcal{S}_{\rho,0}(\lambda) = 1$. For large $a_{z}$ we may approximate this boundary curve equation

\begin{align}
1 &=  \rho^{2} \int Dx \frac{(1 - \sigma(\tilde{a}_{z} ( x + \tilde{\beta})	))^{2}}{|\lambda - \sigma(\tilde{a}_{z} (x + \tilde{\beta} ))|^{2}} = \rho^{2} \int Dx \frac{1}{|1 + \lambda_{0}e^{ - c \tilde{a}_{z}} (1 + e^{ \tilde{a}_{z} (x + \tilde{\beta})})|^{2}}  \\
& \approx \rho^{2} \left\{ \int_{-\infty}^{c-\tilde{\beta}} Dx \frac{1}{|1 + \lambda_{0} e^{ - c \tilde{a}_{z}}|^{2}} + \frac{1}{|\lambda_{0}|^{2}}\int_{c-\tilde{\beta}}^{\infty} Dx \, e^{ 2 \tilde{a}_{z} (c - \tilde{\beta} - x)}  \right\}\\
& \approx  \rho^{2} \frac{1}{2 | 1 + \lambda_{0} e^{ - c \tilde{a}_{z}}|^{2}} {\rm erfc}\left( - \frac{(c-\tilde{\beta})}{\sqrt{2}}\right) + \frac{\rho^{2}}{2|\lambda_{0}|^{2}} e^{ 2 \tilde{a}_{z}(\tilde{a}_{z} - \tilde{\beta} + c)} {\rm erfc} \left( \frac{2 \tilde{a}_{z} - \tilde{\beta} + c}{\sqrt{2}}\right)
\end{align}

Now sending $a_{z} \to \infty$, we recover the implicit equation for $c$ quoted in (\ref{eq:c}).


\section{Spectral Radius via Gelfand's Formula and Proof of Proposition (\ref{prop:reset-spectral-radius})}\label{app:gru-reset-spectral-radius}

To understand the role of the reset gate, we set $a_{z} = 0$ and fix $b_{z}$. We want to show that the spectral radius will grow with increasing $a_{r}$. This could be calculated using our formula for the spectral curve, but here we provide an alternate approach using Gelfand's formula and take an average over the random weights. The Gelfand formula states that the spectral radius is given by the limit

\begin{align}
	\rho({\bf J}_{t}) = \lim_{n \to \infty} || {\bf J}_{t}^{n}||^{1/n}
\end{align}

for any matrix norm $|| \cdot ||$. We make a self-averaging assumption by taking

\begin{align}
\left\langle \lim_{n \to \infty} || {\bf J}_{t}^{n}||^{1/n}	\right\rangle_{U} \approx \lim_{n \to \infty} \left(\langle || {\bf J}_{t}^{n}|| \rangle_{U} \right)^{1/n},
\end{align}

where $\langle ... \rangle_{U}$ indicates averaging over random connectivity matrices. In words, this assumption states that the spectral radius is stable between random realizations of the weights. Since
the Gelfand formula is valid for any matrix norm, we resort to
the sum of the singular values of $J_t^n$ : $\mu_n$ 

\begin{align}
\mu_{n} &= \left\langle \frac{1}{N} {\rm tr} ({\bf J}_{n}^{T} {\bf J}_{n})\right\rangle.
\end{align}

With this, we may then use the results of Appendix \ref{app:lyapunov-2}  to find that in the limit of large $N$, the spectral radius of the Jacobian is given by
\begin{align}
\langle \rho({\bf J}_{t}) \rangle = \lim_{n \to \infty}\left( \mu_{n}\right)^{\frac{1}{2n}}= \sigma(b_{z}) + (1 - \sigma(b_{z})) \rho_{t} \label{eq:gelfand-radius}	.
\end{align}
This expression also follows immediately from the equation for the spectral support when $a_{z} = 0$
\begin{align}
|\lambda - \sigma(b_{z})|^{2} = \rho_{t}^{2} (1 - \sigma(b_{z}))^{2}.
\end{align}

Through its effect on $\rho_{t}$, the reset gate will directly influence the spectral radius. To see this, recall again the expression for $\rho_{t}$
\begin{align}
\rho_{t}^{2} = a_{h}^{2} C_{\phi'} \left( C_{r} + a_{r}^{2}C_{r'} C_{h}\right),	
\end{align}
where 
\begin{align}
C_{\phi'} &= \int \left(\phi'\left( \sqrt{C_{y}} x\right)\right)^{2} Dx, \quad C_{y}  = C_{r} C_{h}\\
C_{r} & = \int Dx \left(\sigma(a_{r} \sqrt{C_{h}} x)\right)^{2}, \quad C_{r'}  = \int Dx \left(\sigma'(a_{r} \sqrt{C_{h}} x )\right)^{2}.
\end{align}
We have taken $b_{r} = 0$ for simplicity. We would like to examine the growth of $\rho_{t}$ as a function of $a_{r}$ at fixed $C_{h}$. The mean-field theory shows that since $C_{h}$ only depends on $a_{r}$ through the reset gate, it is not possible that the steady state distribution $C_{h}$ will grow without bound. Thus, $C_{h}$ must be bounded above and below by constant which do not grow with $a_{r}$. Next, $C_{r}$ is a bounded function of $a_{r}$, and for fixed $C_{h}$ $\frac{1}{4} \le C_{r} \le \frac{1}{2}$. Therefore, we may bound the spectral shaping parameter from below
\begin{align}
 a_{h}^{2} C_{\phi'} \left( \frac{1}{4} + a_{r}^{2} C_{r'} C_{h}\right) \le \rho_{t}^{2}	\le  a_{h}^{2} C_{\phi'} \left( \frac{1}{2} + a_{r}^{2} C_{r'} C_{h}\right).
\end{align}
Furthermore, since $C_{y} = C_{r} C_{h}$, we have that $C_{y}$ is similarly bounded for fixed $C_{h}$. An upper-bound on $C_{y}$ implies a non-zero lower bound on $C_{\phi'}$ which we call $c$: $c\le C_{\phi'}\le 1$. Finally, we come to $C_{r'}$, which tends to zero with increasing $a_{r}$. Thus, it remains to show that this tendency does not overwhelm the $a_{r}^{2}$ prefactor. To see this, we may develop an asymptotic expansion of the integral 
\begin{align}
C_{r'} & = \int Dx \left(\sigma'(a_{r} \sqrt{C_{h}} x)\right)^{2} = \frac{1}{6 a_{r} \sqrt{2\pi C_{h}}} + O(a_{r}^{-2}).
\end{align}

Combining this with the bounds we have argued for, we get

\begin{align}
a_{h}^{2} c \left( \frac{1}{4} + \frac{a_{r}}{6 \sqrt{2\pi}} \sqrt{ C_{h}} \right) \le \rho_{t} \le 	a_{h}^{2}  \left( \frac{1}{4} + \frac{a_{r}}{6 \sqrt{2\pi}} \sqrt{ C_{h}} \right) ,
\end{align}

and thus $\rho_{t} = \Theta(\sqrt{a_{r}})$.


\section{GRU Fixed Point Phase diagram}\label{app:gru-fixed-point-mft}

Here we describe the fixed point distributions implied by the implicit equations (\ref{eq:FP1}). We assume $v_{h} = v_{r} = b_{h} = 0$, but keep nonzero $b_{r}$. For fixed $a_{h}$ and $a_{r}$, and $a_{r}^{2}C_{y}<<1$, we expand the correlation functions

\begin{align}
C_{\phi} &=  a_{h}^{2} C_{y}	 - 2 a_{h}^{4} C_{y}^{2} + \frac{17}{3} a_{h}^{6} C_{y}^{3} + O(C_{y}^{4}) , \quad C_{r}  = c_{1} + c_{2} a_{r}^{2} C_{h} + c_{3} a_{r}^{4} C_{h}^{2} + O(C_{h}^{3}) ,\\
c_{1} & = \sigma(b_{r})^{2}, \quad c_{2} =   - \frac{( e^{-b_{r}} - 2e^{- 2b_{r}} )}{(1 + e^{ - b_{r}})^{4}} , \quad c_{3}  = - \frac{e^{2 b_{r}} ( e^{3 b_{r}} - 18 e^{ 2 b_{r}} + 33 e^{ b_{r}} - 8)}{4 (1 + e^{b_{r}})^{6}} .
\end{align}
Then using $C_{y} = C_{h} C_{r}$, the implicit equation (\ref{eq:FP1}) becomes 

\begin{align}
C_{h} =  a_{h}^{2} c_{1} C_{h} + \left( a_{h}^{2} a_{r}^{2} c_{2} - 2 a_{h}^{4} c_{1}^{2}\right) C_{h}^{2} + \left( c_{3} a_{h}^{2} a_{r}^{4} - 4 c_{1}c_{2} a_{h}^{4} a_{r}^{2} + \frac{17}{3} a_{h}^{6} c_{1}^{3}\right) C_{h}^{3}  + O(C_{h}^{4}) .
\end{align}
Setting $b_{r} = 0$ gives
\begin{align}
 1 = 	\frac{a_{h}^{2}}{4} +  \frac{a_{h}^{2}}{16}  \left(  a_{r}^{2}  - 2 a_{h}^{2} \right)  C_{h} + \frac{a_{h}^{2} }{192} ( 17a_{h}^{4}  - 12 a_{h}^{2} a_{r}^{2} - 6 a_{r}^{4}) C_{h}^{2} + O(C_{h}^{3}) .
\end{align}
A straightforward analysis of this expression leads to the perturbative solutions quoted in the main text, with the function 
\begin{align}
f(a_{r}) = 4 ( 31552 - 11424 a_{r}^{2} + 744 a_{r}^{4} + 72 a_{r}^{6}  + 9 a_{r}^{8})/( 3 a_{r}^{4} + 24 a_{r}^{2} - 136)^{2}, \label{eq:fpoly}
\end{align}

which is positive for $a_{r}> \sqrt{8}$. 

We comment briefly on the effects of finite bias $b_{r}$. The critical line past which only a single nonzero fixed point exists moves to $a_{h} = 1 + e^{ - b_{r}}$. For a finite bias, there is still only a single zero fixed point for $a_{h} < \sqrt{2}$.


\section{GRU : Singular values of the long-term Jacobian} \label{app:lyapunov-2}

In this section, we provide results on the moments of the
singular values of the long-term Jacobian for fixed points in two
limiting cases.

\paragraph{Binary Update Gate}
We first consider the Jacobian for fixed points with update gates which are switch-like ($a_z \rightarrow \infty$). In this case, ${\bf z}_{t}$ is vectors of binary variables with distribution $P(z) = \alpha \delta(z) + (1 - \alpha) \delta(z - 1)$, where $\alpha$ depends on $b_{z}$ (see Eq.(\ref{eq:alpha})) and ${\bf z}_{t}' = 0$, so that the Jacobian is
\begin{align}
{\bf J} = {\bf \hat{z}} + (\mathbbm{1} - {\bf \hat z}) \hat{\phi}'({\bf y}) U_{h}( \hat{r} + \hat{r}' \hat{h} U_{r}).
\end{align}

Since ${\bf \hat{z}}( \mathbbm{1} - {\bf \hat z}) = 0$, and ${\bf z}^{k} = {\bf z}$, we have for the late-time Jacobian

\begin{align}
{\bf J}_{n} = 	{\bf J}^{n} = \sum_{q = 0}^{n} \left((\mathbbm{1} - {\bf \hat z}) \hat{\phi}'({\bf y})U_{h}( \hat{r} + \hat{r}' \hat{h} U_{r})\right)^{q} {\bf \hat{z}} .
\end{align}

This allows us to obtain the mean square of the singular values to leading order in $N$

\begin{align}
\mu_{n} &= \left\langle \frac{1}{N} {\rm tr} ({\bf J}_{n}^{T} {\bf J}_{n})\right\rangle  = (1- \alpha) \sum_{k = 0}^{n}\left( \alpha \rho^{2}\right)^{k} + O(N^{-1}).
\end{align}

Here, we have also taken the expectation over $z$, using for instance,

\begin{align}
\frac{1}{N} {\rm tr} \left( (\mathbbm{1} - {\bf \hat z})^{2}  (\hat{\phi}'({\bf y}))^{2} \right) &\approx \mathbbm{E}[ (1 - z)^{2} (\phi'(\eta))^{2}] =\alpha   \mathbbm{E}[ (\phi'(\eta))^{2}] .
\end{align}

For $\alpha \rho^{2} < 1$, the moment converges to

\begin{align}
\mu_{n} \to  \frac{(1 - \alpha)}{1 - \alpha \rho^{2}}.
\end{align}

\paragraph{Constant Update Gate}
In the limit of zero $a_{z}$, the behavior is quite different. Then ${\bf \hat{z}}$ becomes a constant diagonal matrix with elements $\sigma(b_{z})$. The late time Jacobian is

\begin{align}
{\bf J}_{n} = \sum_{q= 0}^{n} \left( { n \atop q}\right) {\bf \hat{z}}^{n-q} \left( (1 - {\bf \hat{z}}) \hat{\phi}'({\bf y}) U_{h}\left( {\bf \hat r} + {\bf \hat r}' {\bf \hat h} U_{r}\right)\right)^{q}.
\end{align}

The mean squared singular values can be computed again in the limit of large $N$ to yield

\begin{align}
\mu_{n} = \sum_{q = 0}^{n} 	\left( { n \atop q}\right)^{2}  \sigma(b_{z})^{2(n-q)}  (1 - \sigma(b_{z}))^{2q} \rho^{2 q}, \quad \rho^{2} = a_{h}^{2} \sigma(b_{r}) \mathbbm{E}[ (\phi(y)')^{2}] \left( \mathbbm{E}[ r^{2}] + a_{r}^{2}\mathbbm{E}[ (r')^{2} h^{2}]\right).
\end{align}

Formally, this can be expressed in terms of the Gauss hypergeometric function

\begin{align}
\mu_{n} = 	\sigma(b_{z})^{2n} {}_{1}F_{2} \left( - n, - n; -1; \frac{(1 - \sigma(b_{z}))^{2} \rho^{2}}{\sigma(b_{z})^{2}} \right).
\end{align}

Another representation which is useful for asymptotic analysis is

\begin{align}
\mu_{n} = \int_{0}^{2\pi} \frac{d \theta}{2\pi} \left| \sigma(b_{z}) + (1 - \sigma(b_{z})) \rho e^{ i \theta}\right|^{2n}	.
\end{align}

This can be evaluated by a saddle-point argument for large $n$ to give the spectral radius in (\ref{eq:gelfand-radius}).


\section{LSTM: Proof of Theorem (\ref{thm:LSTM-spec-curve})} \label{app:lstm-rmt}

Linearization of the LSTM Jacobian Eq.(\ref{eq:lstm-jacobian}) is achieved by  (dropping the carat $\,\,\hat{}\,\,$ with the understanding that all boldfaced variables are diagonal matrices)

\begin{align}
{\bf M}_{t}= \left( \begin{array}{cccccc}
 	{\bf f}_{t}  & 0  & {\bf i}_{t} \phi'({\bf y}_{t}) & 0  & \phi({\bf y}_{t}) & {\bf c}_{t-1}\\
 	{\bf m}_{t} {\bf f}_{t} & 0 & {\bf m}_{t} {\bf i}_{t} \phi'({\bf y}_{t}) & \phi({\bf c}_t) & {\bf m}_{t} \phi({\bf y}_{t}) & {\bf m}_{t} {\bf c}_{t-1} \\
 	0 & U_{h} & 0 & 0 & 0 & 0 \\
 	0 & {\bf o}_{t}' U_{o} & 0 & 0 & 0 & 0 \\
 	0 & {\bf i}_{t}' U_{i} & 0 & 0 & 0 & 0 \\
 	0 & {\bf f}_{t}' U_{f} & 0 & 0 & 0 & 0 
 \end{array}\right).
\end{align}
We express this as ${\bf M}_{t} = \hat{A} + \hat{B} \hat{U} \hat{C}$, where

\begin{align*}
\hat{A} = 	\left( \begin{array}{cccccc}
 	{\bf f}_{t}  & 0  & {\bf i}_{t} \phi'({\bf y}_{t}) & 0  & \phi({\bf y}_{t}) & {\bf c}_{t-1}\\
 	{\bf m}_{t} {\bf f}_{t} & 0 & {\bf m}_{t} {\bf i}_{t} \phi'({\bf y}_{t}) & \phi({\bf c}_t) & {\bf m}_{t} \phi({\bf y}_{t}) & {\bf m}_{t} {\bf c}_{t-1} \\
 	0 & 0 & 0 & 0 & 0 & 0 \\
 	0 & 0 & 0 & 0 & 0 & 0 \\
 	0 & 0 & 0 & 0 & 0 & 0 \\
 	0 & 0 & 0 & 0 & 0 & 0 
 \end{array}\right) , \quad \hat{C} =  \left( \begin{array}{cccccc}
 	0  & 0  & 0 & 0  & 0 & 0\\
 	0 & 0 & 0 & 0 & 0 & 0 \\
 	0 & \mathbbm{1} & 0 & 0 & 0 & 0 \\
 	0 & \mathbbm{1} & 0 & 0 & 0 & 0 \\
 	0 & \mathbbm{1} & 0 & 0 & 0 & 0 \\
 	0 & \mathbbm{1} & 0 & 0 & 0 & 0 
 \end{array}\right),\\
\end{align*}
$\hat{U} = {\rm b diag} \left( 0 , 0 , U_{h}, U_{o}, U_{i}, U_{f}\right)$, and $\hat{B} = {\rm b diag} \left( 0, 0, \mathbbm{1}, {\bf o}_{t}', {\bf i}_{t}', {\bf f}_{t}'\right)$. We denote by ${\rm bdiag}(A_{1},..., A_{n})$ a block diagonal matrix whose blocks $A_{i}$ are $N\times N$ matrices. 

The generalized eigenvalue problem is ${\bf M}_{t} {\bf v} = I_{\lambda, 2} {\bf v}$, where  $I_{\lambda, 2} = {\rm bdiag}(\lambda, \lambda, 1, 1, 1, 1)$ and $\lambda$ are eigenvalues of the Jacobian ${\bf J}_{t}$. As before, we define the generalized resolvent

\begin{align}
{\bf G} = \left( I_{\lambda,2} - {\bf M}_{t}\right)^{-1},	
\end{align}

which is a $6 N \times 6N$ matrix. We describe this matrix by $N\times N$ subblocks ${\bf G}_{ab}$, for $a,b = 1,.., 6$. The resolvent of the Jacobian is then found in the upper left corner of the generalized resolvent

\begin{align}
G = \left( \lambda  - {\bf J}_{t}\right)^{-1} = \left( \begin{array}{cc}
 {\bf G}_{11} & {\bf G}_{12}\\
 {\bf G}_{21} & {\bf G}_{22}	
 \end{array}\right)	
\end{align}

Following exactly the hermitization procedure outlined in Sec. (\ref{app:gru-rmt}), we define $F(\lambda, \bar{\lambda}) = f_{99} ( f_{33} + f_{44} + f_{55} + f_{66})$, and obtain the corresponding set of equations from the Dyson equation. To express these in compact form, let

\begin{align}
{\bf Q}_{t}(\lambda) & = |\lambda - {\bf f}_{t}|^{2} {\bf q}_{t} + |\lambda|^{2} {\bf p}_{t}, \quad {\rm where}\\
{\bf q}_{t} & = a_{o}^{2}({\bf o}'_{t})^{2} \phi({\bf c}_{t})^{2},\quad  {\bf p}_{t} = {\bf o}_{t}^{2} (\phi'({\bf c}_{t}))^{2} \left( a_{f}^{2}{\bf c}_{t-1}^{2} ({\bf f}_{t}')^{2} + a_{i}^{2}({\bf i}_{t}')^{2} \phi({\bf y}_{t})^{2} + a_{h}^{2}{\bf i}_{t}^{2} (\phi'({\bf y}_{t}))^{2}\right)
\end{align}

Then we find the set of equations for the trace of the resolvent $G(\lambda)$ and the auxiliary function $F(\lambda)$

\begin{align}
G(\lambda) &= \frac{1}{N} {\rm tr} \left[ \frac{	|\lambda|^{2} (\bar{\lambda} - {\bf \hat f}_{t}) + \bar{\lambda} |\lambda - {\bf \hat f}_{t}|^{2} + F(\lambda, \bar{\lambda}) \partial_{\lambda} {\bf \hat Q}_{t}(\lambda, \bar{\lambda}) }{ |\lambda|^{2} |\lambda - {\bf \hat f}_{t}|^{2} + F(\lambda, \bar{\lambda}) {\bf \hat Q}_{t}(\lambda, \bar{\lambda})} \right]
F(\lambda) &= \frac{1}{N} {\rm tr} \left[ \frac{F(\lambda){\bf \hat Q}_{t}(\lambda) }{|\lambda|^{2}|\lambda - {\bf \hat f}_{t}|^{2} + F(\lambda)  {\bf \hat Q}_{t}(\lambda	)}\right]\label{eq:lstm-dyson2}
\end{align}

Outside the support of the eigenvalue density, the resolvent is holomorphic. We see that a consistent solution to these equations has $F(\lambda) = 0$, in which case $G(\lambda)$ simplies to the result quoted in (\ref{eq:lstm-G}), which is holomorphic. We conclude that $F = 0$ defines the exterior of the support. Inside the spectral support, we find the implicit equation can be solved for $F> 0$, which leads to (\ref{eq:lstm-F}). Continuity of $G(\lambda)$ implies that at the boundary of the spectral support, the holomorphic solution must match the non-analytic solution. The curve for the boundary support then follows by considering (\ref{eq:lstm-dyson2}) $\lambda$ approaching the boundary from the interior. This limit allows us to divide through by $F$ first, and then set $F = 0$ to finally find the boundary curve in Corollary (\ref{cor:lstm-spec-curve}). So far we have not assumed the mean field theory, and keep an explicit sum over neurons. In the mean field theory, the neurons (hidden and cell states) describe independent stochastic processes, and allows us to replace the trace with an expectation value over the effective stochastic variables for each site.

We use the mean field theory now to describe the density of eigenvalues which is obtained from the resolvent. Using the definition (\ref{eq:density_resolvent}), we find

\begin{align*}
\mu (\lambda) =\frac{1}{\pi} \mathbbm{E}\left[ \frac{  F(\lambda) (|\lambda - f_{t}|^{4} q_{t} + |\lambda|^{4} p_{t}) - \partial_{\bar{\lambda}} F(\lambda) \left( \bar{\lambda} |\lambda - f_{t}|^{4} q_{t} + (\bar{\lambda} - f_{t}) |\lambda|^{4} p_{t}\right) + F(\lambda)^{2}  f_{t}^{2} p_{t} q_{t}}{\Big(|\lambda|^{2} |\lambda - f_{t}|^{2} + F(\lambda) Q_{t}\Big)^{2}}\right].
\end{align*}

To find $\partial_{\bar{\lambda}}F$, we differentiate the implicit equation for $F$,
\begin{align*}
\partial_{\bar{\lambda}} F  = -  \mathbbm{E} \left[ \frac{\lambda |\lambda - f_{t}|^{4} q_{t} + (\lambda - f_{t}) |\lambda|^{4} p_{t}}{ \left( |\lambda|^{2} |\lambda - f_{t}|^{2} + F Q_{t}\right)^{2}}\right] \left( \mathbbm{E} \left[ \frac{Q_{t}^{2}}{\left( |\lambda|^{2} |\lambda - f_{t}|^{2} + FQ_{t}\right)^{2}} \right]\right)^{-1}	
\end{align*}

\subsection{Proof of Corollaries (\ref{cor:lstm-zero-fp}) and (\ref{cor:lstm-zero-fp-stability}): Biases Only}

The equations all simplify considerably when $a_{k} = 0$ for $k \in \{ f, i, o\}$. In this case, $k_{t} = k = \sigma(b_{k})$. We get $p_{t}  =a_{h}^{2} o^{2} i^{2} (\phi'(c_{t}))^{2} (\phi'(y_{t}))^{2}$, the spectral support

\begin{align}
  \Sigma:=& \{ \lambda \in\mathbbm{C}: |\lambda - f|^{2} = \mathbbm{E} \left[ \, p_{t}\right] \}, 
\end{align}

and the resolvent inside the support

\begin{align}
G(\lambda)&  = \frac{1}{\lambda} + \mathbbm{E}\left[ \frac{(\bar{\lambda} - f) }{ |\lambda - f|^{2} + F(\lambda) p_{t}} \right]\\
1 &= \mathbbm{E} \left[ \frac{  p_{t}}{|\lambda - f|^{2} + F(\lambda)  p_{t}}\right].
\end{align}

These become particular simple at the zero FP when $c = h = 0$. In this case, we have $p = \sigma(b_{o})^{2} \sigma(b_{i})^{2} a_{h}^{2}$, and straightforward substitution into the equations above gives Corollaries (\ref{cor:lstm-zero-fp}) and (\ref{cor:lstm-zero-fp-stability}).

\subsection{Only Forget}\label{app:lstm-only-forget}

{ 

Here we examine the effects of the forget gate by setting $a_{o} = a_{i} = 0$. This gives $q_{t} = 0$, and $p_{t}=  o_{t}^{2} (\phi'(c_{t}))^{2} \left( a_{f}^{2}c_{t-1}^{2} (f_{t}')^{2} + a_{h}^{2}i^{2} (\phi'(y_{t}))^{2}\right)$, and the resolvent inside the spectral support
\begin{align}
G(\lambda) &= \frac{1}{\lambda} + \mathbbm{E}\left[ \frac{(\bar{\lambda} - f_{t}) }{ |\lambda - f_{t}|^{2} + F(\lambda) p_{t}  } \right]\\
1 &= \mathbbm{E}\left[ \frac{  p_{t}}{|\lambda - f_{t}|^{2} + F  p_{t}}\right].
\end{align}

The spectral boundary curve is

\begin{align}
1 &= \mathbbm{E}\left[ \frac{ o_{t}^{2} (\phi'(c_{t}))^{2} \left( a_{f}^{2}c_{t-1}^{2} (f_{t}')^{2} + a_{h}^{2} i^{2} (\phi'(y_{t}))^{2}\right)	}{|\lambda - f_{t}|^{2}}\right]	\label{eq:lstm-spec-curve-forget}
\end{align}

In the limit when $a_{f} = \infty$, we must take care with taking expectation values since in the mean-field limit, the distributions of $c$ and $f$ are not separable. We proceed with reasonable arguments. To arrive at explicit formulas, we assume a hard-tanh activation function $\phi$. When $a_{f} = \infty$, $f$ becomes a binary variable. We may then consider two cases 

\begin{align}
c_{t} \sim \begin{cases}
 c_{t-1} + i_{t} \phi(y_{t}),\quad &f_{t} = 1\\
 i_{t} \phi(y_{t})	, \quad &f_{t} = 0
 \end{cases}
\end{align}

In the first instance, $c_{t}$ exhibits fast growth that overwhelms the finite noise term $\phi(y_{t})$, and in this case we approximate the distributions of $c_{t}$ and $y_{t}$ as independent. In the latter case, $c_{t}$ is controlled exclusively by the fluctuations of $\phi(y_{t})$, and far from being independent, they are slaved. Defining for ease of notation $\tilde{F} = a_{h}^{2}\sigma(b_{0})^{2} \sigma(b_{i})^{2} F$, we get the implicit equation for $F$:

\begin{align}
1 =  \frac{1}{2} \mathbbm{E}\left[ \frac{a_{h}^{2}o^{2} i^{2} \phi'(c)^{2} \phi'(y)^{2}}{|\lambda|^{2} + \tilde{F} \phi'(c_{t})^{2} \phi'(y_{t})^{2}}\right]_{P(c, y|f = 0)} +\frac{1}{2} \mathbbm{E} \left[ \frac{a_{h}^{2}o^{2} i^{2} \phi'(c_{t})^{2} \phi'(y_{t})^{2}}{|\lambda - 1|^{2} + \tilde{F} \phi'(c_{t})^{2} \phi'(y_{t})^{2}}\right]_{P(c,y|f = 1)}	
\end{align}

When $f = 1$, we assume $\phi'(c) = 0$, since the activation saturates for large $c$. Therefore, we are left with the first case, when $c_{t} \sim \sigma(b_{i}) \phi(y)$ and use the Gaussianity of $y_{t}$ to evaluate

\begin{align}
1 &=  \frac{1}{2} \mathbbm{E}\left[ \frac{a_{h}^{2}o^{2} i^{2} [\phi'(\sigma(b_{i}) \phi(y))]^{2} \phi'(y)^{2}}{|\lambda|^{2} + \tilde{F} [\phi'(\sigma(b_{i}) \phi(y))]^{2} \phi'(y)^{2}}\right],\\
& =\frac{o^{2} i^{2} a_{h}^{2} \eta}{2} \frac{1}{|\lambda|^{2} + i^{2} o^{2} a_{h}^{2} F}, \quad \eta = \int_{-1/a_{h}}^{1/a_{h}} \frac{e^{ - y^{2}/2C_{h}}}{\sqrt{2\pi C_{h}}} .
\end{align}

Using this we may evaluate the resolvent

\begin{align}
G(\lambda) &= \frac{1}{\lambda} + \frac{1}{2} \mathbbm{E}\left[ \frac{\bar{\lambda}}{|\lambda|^{2} + \tilde{F} \phi'(c_{t})^{2} \phi'(y_{t})^{2}}\right]_{P(c, y|f = 0)} +\frac{1}{2} \mathbbm{E} \left[ \frac{\bar{\lambda} - 1}{|\lambda - 1|^{2} + \tilde{F} \phi'(c_{t})^{2} \phi'(y_{t})^{2}}\right]_{P(c,y|f = 1)}	\\
& \approx  \frac{1}{\lambda}  + \frac{(1 - \eta)}{2} \frac{1}{\lambda} + \frac{\bar{\lambda}}{o^{2} i^{2} a_{h}^{2}} + \frac{1}{2} \frac{1}{\lambda - 1}  .
\end{align}
Then using Eq.(\ref{eq:density_resolvent}) results in Eq. (\ref{eq:lstm-density-binary-forget}).

\paragraph{Effects on Spectral Radius}
The previous section assumes switch-like forget gate. Here we relax this assumption and consider the growth of the spectral radius with $a_{f}$. We focus on the term involving gradients of $f_{t}$ in the equation for the spectral boundary curve (\ref{eq:lstm-spec-curve-forget}). Since $f_{t}'$ will be sharply peak around $\eta_{t}^{f} = 0$ (for zero bias), we treat it like a delta functional and find this contribution behaves approximately like

\begin{align}
    \frac{a_{f}}{6 \sqrt{2\pi C_{h}}} \frac{1}{|\lambda - 1/2|^{2}}  \mathbbm{E} \left[(\phi'(c_{t})^{2}) c_{t-1}^{2} \right]_{P(c, y| f = 1/2)}
\end{align}

When $f = 1/2$, the update equation for the cell state is $c_{t} = (1/2) c_{t-1} + i \phi(y_{t})$, and due to the fast decay of $c_{t}$ we argue that $\phi'(c_{t}) \approx 1$. Combining this with the results in the previous section gives for the spectral boundary curve
\begin{align}
    1 = \frac{a_{f} \mathbbm{E}\left[ c_{t-1}^{2}\right]_{P(c|f=1/2)}}{6 \sqrt{2\pi C_{h}} |\lambda - 1/2|^{2}} + \frac{o^{2} i^{2} a_{h}^{2} \eta}{2 |\lambda|^{2}}
\end{align}

Assuming of course that $C_{h} > 0$, which is true for networks with nonzero activity. We also need to make some reasonable assumption about the behavior of the correlation function which appears in this expression. For large $a_{f}$, the cell state will generally grow without bound for $f \approx 1$. However, conditioned on $f = 1/2$, we may argue that the autocorrelation remains bounded with increasing $a_{f}$. Together, these assumptions show that for increasing $a_{f}$, the first term takes over and the spectral radius grows like $\sqrt{a_{f}}$. 

This poses an apparent contradiction: the spectral radius would appear to grow with $a_{f}$, yet in the limit $a_{f} = \infty$ it is independent of $a_{f}$ according to (\ref{eq:lstm-density-binary-forget}). This likely occurs in the following way: the conditional probability used to evaluate the autocorrelation function must approach measure zero as $a_{f} \to \infty$. Therefore, we conclude that the spectral radius does not grow monotonically with $a_{f}$. In Fig.(\ref{fig:lstm-plots}), we manage to capture this initial growth, which we conjecture is followed by a suppression for sufficiently large $a_{f}$.

\subsection{Only Input}\label{app:lstm-only-input}

Now we isolate the input gate's effect on the Jacobian by setting $a_{f} = a_{o} = 0$, which makes $q_{t} = 0$, and $p_{t} = o^{2} \phi'(c_{t})^{2}\left( a_{i}^{2} i_{t}'^{2} \phi(y_{t})^{2} + a_{h}^{2} i_{t}^{2} \phi'(y_{t})^{2}\right)$. using Cor. (\ref{cor:lstm-spec-curve}) to write spectral boundary curve, we may immediately extract the spectral radius 
\begin{align}
\rho({\bf J}_{t}) = \sigma(b_{f}) + \sqrt{\mathbbm{E}\left[ p_{t}\right]}, \quad 	p_{t}  = o^{2} (\phi'(c_{t}))^{2} \left(  a_{i}^{2} (i_{t}')^{2} \phi(y_{t})^{2} + a_{h}^{2}i_{t}^{2} (\phi'(y_{t}))^{2}\right).
\end{align}

To obtain the asymptotic scaling of the spectral radius with increasing $a_{i}$, we may use the Cauchy-Schwartz inequality to  establish the bound 

\begin{align}
\mathbbm{E}\left[ p_{t}\right] \le a_{i}^{2}o^{2} \mathbbm{E}\left[ (i_{t}')^{2}\right] \mathbbm{E}\left[ (\phi'(c_{t}))^{2} \phi(y_{t})^{2}\right] + a_{h}^{2} o^{2} \mathbbm{E}\left[ i_{t}^{2}\right] \mathbbm{E}\left[ (\phi'(c_{t}))^{2} \phi'(y_{t})^{2}\right].
\end{align}

First of all, $\phi'(c_{t})$, $\phi(y_{t})$ and $\phi'(y_{t})$ are all bounded functions, which implies that their correlations cannot grow with $a_{i}$. Furthermore, $\mathbbm{E}[i_{t}^{2}]\le 1/2$, so we are left with $\mathbbm{E}[ (i_{t}')^{2}]$. Using the same arguments as in Sec. \ref{app:gru-reset-spectral-radius}, we find

\begin{align}
\mathbbm{E}\left[ (i_{t}')^{2}\right] = 	\frac{1}{6 a_{i} \sqrt{2\pi C_{h}}} + O(a_{i}^{-2}).
\end{align}

Combining all of this, we arrive at our main result

\begin{align}
\rho({\bf J}_{t})  = O(\sqrt{a_{i}})	.
\end{align}

A further assumption that $\mathbbm{E}\left[ (i_{t}')^{2} \phi'(c_{t})^{2}\right] = \Theta \left( \mathbbm{E}\left[ (i_{t}')^{2} \right] \mathbbm{E}\left[\phi'(c_{t})^{2}\right]\right)$ would allow us to also obtain a lower bound that would imply $\rho({\bf J}_{t}) = \Theta(\sqrt{a_{i}})$. Our numerical experiments indicate that such a lower bound should hold.

\subsection{Only Output}\label{app:lstm-only-output}

Here we study the effects of the output gate on the spectral radius. Fixing the other gates by setting $a_{f} = a_{i} = 0$, using Cor. (\ref{cor:lstm-spec-curve}) the spectral boundary curve becomes
\begin{align}
1 &= \frac{\mathbbm{E}[ q_{t}]}{|\lambda|^{2}}  +\frac{\mathbbm{E}[ p_{t}]}{|\lambda - \sigma(b_{f})|^{2}} , \quad q_{t} = a_{o}^{2}(o_{t}')^{2} \phi(c_{t})^{2}, \quad p_{t}  = a_{h}^{2} o_{t}^{2}  i^{2} (\phi'(c_{t}))^{2}(\phi'(y_{t}))^{2}
\end{align}
Since $o_{t}^{2}$ is bounded, $p_{t}$ will remain bounded as $a_{o}$ is increased. We can thus make the obvious bounds 
\begin{align}
0 \le \mathbbm{E}\left[ p_{t}\right] \le \frac{1}{2} \sigma(b_{i})^{2} a_{h}^{2}	,
\end{align}
which is just to show that this factor will not grow with $a_{o}$. Next, note that in the mean field limit for the LSTM network, the cell state $c_{t}$ is {\it independent} of $o_{t}$, and therefore we may write
\begin{align}
\mathbbm{E}\left[ q_{t}\right] = a_{o}^{2} \mathbbm{E}[ (o_{t}')^{2}] \mathbbm{E}\left[ (\phi(c_{t})^{2}\right]	
\end{align}
Now following the same arguments as in Sec. \ref{app:gru-reset-spectral-radius}, we evaluate the correlation function
\begin{align}
\mathbbm{E}[ (o_{t}')^{2}] = \frac{1}{6 a_{o}\sqrt{2\pi C_{h}}} + O(a_{o}^{-2}).
\end{align}
Therefore, we find
\begin{align}
\mathbbm{E}[ q_{t}] \sim a_{o} \frac{\mathbbm{E}\left[ \phi(c_{t})^{2}\right]	}{6 \sqrt{2\pi C_{h}}} 
\end{align}
Which means that for large $a_{o}$, the spectral radius grows like $\rho({\bf J}_{t})	= \Theta (\sqrt{a_{o}})$.

}


\section{GRU : Training on sequential MNIST} \label{app:train_seq_MNIST}
Here we provide details of training a GRU on the sequential MNIST task \cite{jing2019gated, kerg2019non}.
Each 28x28 MNIST image is fed into a 256 dimensional GRU one pixel per timestep. The output of the GRU is used to decide the digit class at the end of the 784 timesteps. 
We also use zero biases for the GRU to make the training closer to the theory we have worked out -- although incorporating biases in the theory is a trivial extension.
The batch size chosen was 100 (for 60000 images in the training set), and the maximum number of epochs was taken to be 15.
For ease of discussion, we show results for a low value of $a_r=0.1$.


\begin{figure}[h]
\begin{centering}
\includegraphics[scale=0.6]{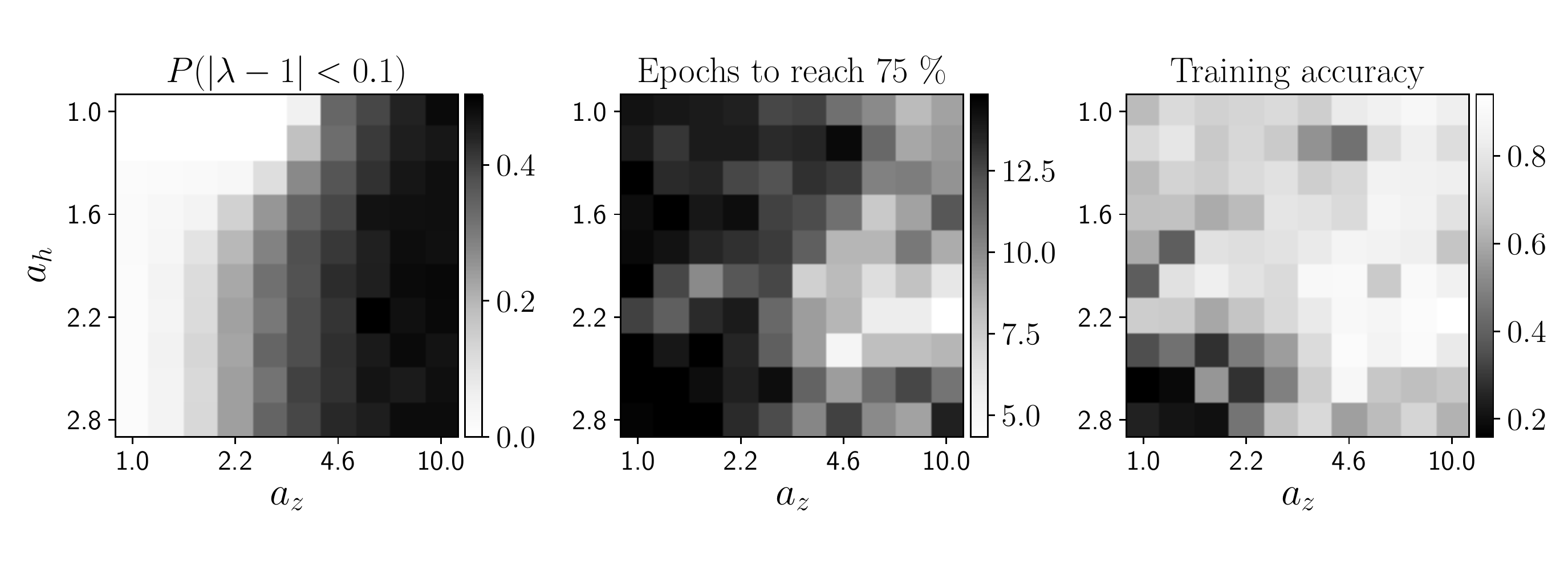}
\par\end{centering}
\caption{Training on sequential MNIST: ({\bf left}) 
The density of eigenvalues of the Jacobian near 1.0
as a function of $a_h$ and $a_z$.
({\bf middle}) 
The number of epochs required to reach a training 
accuracy of 75 \%
as a function of $a_h$ and $a_z$.
({\bf right}) 
The training accuracy after 15 epochs 
as a function of $a_h$ and $a_z$.} \label{fig:training_fig_1}
\end{figure}

Fig. \ref{fig:training_fig_1} shows the i) density of Jacobian eigenvalues
near 1.0 (left); ii) the average number of epochs required to reach
a training accuracy of 75 \% (middle) and iii) the average training accuracy (right) as a function of $a_h$ and $a_z$.
As we can see in Fig \ref{fig:training_fig_1} (left)
increasing $a_z$ leads to an increase in the density
of eigenvalues near 1.0. The training accuracy and the 
number of epochs needed to reach 75\% accuracy both improve with increasing $a_z$, however, the best values
-- especially for the training time -- seem to be near 
$a_h=2.0$ which is the critical value at which 
the zero fixed-point becomes unstable; benefits of 
training at the ``edge-of-chaos" has been noted in previous work \cite{Bertschinger2004,glorot2010understanding}. Training appears to be harder once we get further into the chaotic regime. This is likely related to the rate of growth of gradients in this regime, and will be investigated elsewhere.

\end{document}